\documentclass[twoside]{article}

\usepackage[preprint]{aistats2027}
\usepackage[round]{natbib}

\usepackage{booktabs}
\usepackage{hyperref}

\usepackage{amsfonts}       % blackboard math symbols
\usepackage{xcolor}         % colors
\usepackage{mathtools}
\usepackage{amsthm}
\usepackage{dsfont} % AJout Abdoulaye pour \mathds
\usepackage{cleveref}

\usepackage{minitoc}

\usepackage{tkz-tab}
\usepackage{subcaption}

\usepackage[scr=boondox]{mathalpha}
\newcommand{\Myt}{\ensuremath{s}}

\theoremstyle{plain}
\newtheorem{theorem}{Theorem}[section]
\newtheorem{proposition}[theorem]{Proposition}
\newtheorem{lemma}[theorem]{Lemma}

\theoremstyle{definition}
\newtheorem{definition}[theorem]{Definition}
\newtheorem{assumption}[theorem]{Assumption}
\theoremstyle{remark}

\usepackage{snaptodo}
\snaptodoset{block rise=2em}
\snaptodoset{margin block/.style={font=\tiny}}
\newcommand{\as}[1]{}
\newcommand{\es}[1]{}
\newcommand{\emmm}[1]{}

\begin{document}

% If your paper is accepted and the title of your paper is very long,
% the style will print as headings an error message. Use the following
% command to supply a shorter title of your paper so that it can be
% used as headings.
%
\runningtitle{Weighting-Based Strategies for Classification}

% If your paper is accepted and the number of authors is large, the
% style will print as headings an error message. Use the following
% command to supply a shorter version of the author names so that
% they can be used as headings (for example, use only the surnames)
%
%\runningauthor{Surname 1, Surname 2, Surname 3, ...., Surname n}

\twocolumn[

\aistatstitle{Weighting-Based Strategies for Classification: \\ Analysis and Local Peak Dynamics with CART Trees}

%\aistatsauthor{ Author 1 \And Author 2 \And  Author 3 }
%\aistatsaddress{ Institution 1 \And  Institution 2 \And Institution 3 } 

\aistatsauthor{ Abdoulaye Sakho \And Emmanuel Malherbe \And Erwan Scornet }

\aistatsaddress{
Artefact Research Center\\
Sorbonne Université, Université \\ 
Paris Cité, CNRS, LPSM, France
\And
Artefact Research Center\\
France
\And
Sorbonne Université, Université \\ 
Paris Cité, CNRS, LPSM, France
}
]

\begin{abstract}
Tabular learning tasks often suffer from class imbalance, which is often addressed via weighting-based strategies. In this work, we compare the naive approach (unweighted) with the weighting-based rebalancing strategies. 
Somewhat surprisingly, we demonstrate that, asymptotically, as the predictor complexity increases with sample size, weighting strategies yield predictive performance identical to unweighted approaches across all evaluation metrics, up to a shift in the final threshold. 
To go beyond, we theoretically study the estimation of a linear regression function through weighted and unweighted CART trees with limited depths. In this setting, we prove that the points in the PR space for both strategies are distributed along the same curve, thus illustrating the similar non-asymptotic performance of the two strategies. 
Then, we numerically identify more complex synthetic settings where performance gaps, evaluated via fixed classifier complexity, favor weighting strategies.  
Thus, to guide practitioners, we introduce an indicator that aims at signaling whether a data set may benefit from weighting.
\end{abstract}
%emerge between weighting and unweighted strategies.

\section{INTRODUCTION}
%% Related work on decision trees
Decision trees are the elementary brick of more powerful classifiers such as random forests \citep[][]{breiman2001random}, first-order boosted trees \citep[][]{friedman2001greedy}, XGboost\citep[][]{chen2016xgboost}, LightGBM \citep[][]{ke2017lightgbm} and Catboost \citep[][]{prokhorenkova2018catboost}.
When dealing with tabular data, tree-based classifiers are relevant classifiers for solving learning tasks, as suggested by \cite{grinsztajn2022tree, shwartz2022tabular} in their empirical comparison between tree-based and deep learning-based procedures. 
Although the recent emergence of tabular foundation models \citep{hollmann2022tabpfn,grinsztajn2025tabpfn,qu2025tabicl} has led to impressive predictive performance on benchmarks such as TabArena \citep{erickson2026tabarena}, tree-based methods remain widely used, due notably to their minimal computational and hardware requirements.

In machine learning, a data set is considered imbalanced when target classes are unevenly distributed \citep{sun2009classification,spelmen2018review}. 
Class imbalance \citep{sun2009classification,spelmen2018review} has been extensively studied in the literature due to its prevalence across diverse domain applications. For instance, class imbalance frequently arise in banking credit scoring \citep{sakho2025harnessing}, customer churn prediction \citep{nguyen2021comparison}, and medical diagnosis \citep{RF-medical-example}.

Standard learning algorithms typically assume balanced class priors, thus, their performances degrade significantly on imbalanced data as optimization favors the majority class. To mitigate this behavior, several rebalancing strategies have been proposed in the literature.
Among techniques that synthesize new minority samples, the Synthetic Minority Over-sampling Technique (SMOTE) \citep{chawla2002smote} is widely used, generating synthetic minority instances via linear interpolation between neighboring observations. Numerous variants of SMOTE have been proposed in the literature. Techniques such as Borderline-SMOTE and ADASYN focus specifically on samples near class decision boundaries \citep{han2005borderline,he2008adasyn}. Other extensions, including SVM-SMOTE and DBSMOTE, incorporate SVM classification \citep{nguyen2011borderline} or DBSCAN clustering \citep{bunkhumpornpat2012dbsmote} prior to the interpolation phase. Finally, hybrid methods like SMOTEBoost \citep{chawla2003smoteboost} integrate SMOTE directly into the model training process by resampling data within each boosting iteration.

On the other hand, a widespread approach to mitigate this bias is the use of weighting-based rebalancing strategies.
We define weighting-based rebalancing strategies as procedures that do not introduce synthetic data, but instead modify the relative importance of existing observations by assigning sample-level weights. This category include Random Oversampling (ROS), Random Undersampling (RUS), and class weighting. Specifically, ROS effectively assigns integer weights to duplicated samples, whereas RUS assigns binary weights $w_i \in \{0, 1\}$, where $0$ corresponds to discarded instances and $1$ to retained ones. %Finally, although class weighting can be implemented across various loss functions depending on the chosen estimator\es{je ne comprends pas le lien avec les pertes ici}, its underlying mechanism remains identical: applying deterministic weights to each observation based on its class label.
%These methods assign importance weights either to individual samples prior to training or directly to class losses within the objective function. By penalizing minority-class errors more heavily, these schemes enforce the model to learn relevant patterns from underrepresented classes.
%

Under class imbalance, standard evaluation metrics such as accuracy become uninformative, making the Area Under the Precision-Recall Curve (PR AUC) the recommended benchmark \citep[see][]{saito2015precision}. Because the choice of metric dictates whether a rebalancing strategy provides genuine utility, this work aims to identify the precise settings in which weighting-based rebalancing strategies are effective.

Our main contributions are detailed below:
\begin{itemize}
    %\item %We position our analysis within the infinite-sample setting, deriving the optimal tree splits for both unweighted and weighted decision trees independently of the underlying regression function.
    %We derive optimal tree splits for both unweighted and weighted decision trees independently of the underlying regression function.
    \item %We conduct a detailed case study using a linear regression function.  %Within this framework, we derive the exact theoretical predictive performance metrics for both class-weighted and unweighted strategies when paired with a CART decision tree. We demonstrate that as the tree depth $H$ increases, the number of operating points on the Precision-Recall (PR) curves grows for both strategies. While these empirical points do not coincide (as they depend on the specific split selections), they lie on the exact same underlying PR curve. Consequently, the PR AUC for both strategies converges to the same asymptotic quantity as the tree depth increases.
    With a linear univariate regression function, we prove that the points in the PR space for both strategies are distributed along the same curve, thus illustrating the similar non-asymptotic performance of the two strategies. 
    %\es{il me manque une phrase pour savoir comment interpréter ce résultat/pourquoi est-ce intéressant?}
    \item However, under fixed model complexity, we highlight more complex synthetic settings where the presence of an isolated peak induces discrepancies in  performance between both strategies. %We also demonstrate that these differences come from the fact that the peak is overestimated by the weighted model under a fixed constraint on classifier complexity, while under-estimated by unweighted.
    \item To diagnose performance gaps, we propose an indicator that aims at determining whether a given data set will benefit from weighting, with promising results on real-world data sets. %if the regression function is not well estimated.
\end{itemize}

\section{RELATED WORK}

%%Related work on Imbalance
Several studies have focused on oversampling strategies that generate synthetic samples to augment the minority class. For example, \cite{elreedy_theoretical_2023} derived the theoretical density of samples generated by the Synthetic Minority Over-sampling Technique (SMOTE). Subsequently, \cite{sakho2024we} demonstrated that SMOTE yields a distribution that closely replicates original minority instances and tends to vanish near the boundaries of the minority class support. Furthermore, \cite{ahmad2026concentration} established excess risk bounds when SMOTE is used in conjunction with a downstream classifier. 

%% Related work on weighting strategies (théoretical studies).
Complementing these generative approaches, weighting-based strategies have also been extensively investigated from a theoretical standpoint. 
\cite{scott2012calibrated} theoretically studies surrogate losses within asymmetric of imbalance data. The authors demonstrate that employing calibrated losses under these conditions yields higher excess risk bounds.
\cite{koyejo2014consistent} theoretically analyze several performance metrics for binary classification and derive optimal classifiers associated to these metrics. The optimal classifiers turn out to be metric-dependent.
\cite{xu2020class} derive the convergence rates of the expected excess weighted risk when the weighting for the plug-in classifier and the risk is different. They prove that in such a setting there is an irreducible term for the convergence of risk to the weighted Bayes risk. Then they define two robust risk that have the specificity to minimize the risk with respect to a set of weightings.
\cite{aghbalou2023sharp}, following the results from \cite{xu2020class}, propose sharper error bounds in a special setting: when the absolute number of samples within the minority class is low.
\cite{esposito2021ghost} introduce two different automated procedures to adjust the decision threshold of classifiers post-training to improve performance on minority classes, rather than modifying the original data set.
\cite{mayala2024infinite} study infinite random forests (IRF) and undersampling IRF theoretically and establish their asymptotic normality. 

% Long-tailed
%% Long tailed learning
A field closely related to our work is long-tailed learning \citep{zhang2023deep}, which addresses multi-class learning under severe imbalance by aiming to improve predictive performance on infrequent tail classes. This line of research primarily targets computer vision benchmarks such as ImageNet-LT \citep{liu2019large} and iNaturalist \citep{van2018inaturalist}. 
%Although several rebalancing techniques developed in long-tailed learning, including Focal Loss \citep{lin2017focal}, LDAM \citep{cao2019learning}, and L2RW \citep{ren2018learning}, rely on instance weighting, long-tailed settings fall outside the scope of this study. While long-tailed learning focuses on multi-class image classification, our work specifically concentrates on tabular data sets with few target classes.
%
%
%Advanced loss-based and adaptive reweighting techniques have also been developed to address class imbalance and long-tailed data distributions. 
For instance, Focal Loss \citep[][]{lin2017focal} dynamically downweights well-classified easy examples to force the model to focus on difficult minority instances. To improve generalization, the Label-Distribution-Aware Margin (LDAM) loss \citep[][]{cao2019learning} enforces larger classification margins specifically for minority classes. Alternatively, from a meta-learning perspective, Learning to Reweight (L2RW) \citep[][]{ren2018learning} utilizes an unbiased validation set to adaptively optimize instance-wise weights during training rather than relying on static heuristics.
While long-tailed learning techniques often rely on instance weighting and could thus gain some intuition from our study, long-tailed settings fall outside the scope of this study. Indeed, our work specifically focuses on tabular data with few classes.

\section{WEIGHTING STRATEGIES}
%\es{je passerais tous les titres en mincuscule : ça tiendra sur une ligne}
%\es{je déplacerais le paragraphe qui suit plus bas, tu l'as déjà dit en intro}
%\es{tu parles de weighting-based strategies mais il n'y a que CW dans cette partie ?}

%In this paper, we shed some light and compare predictive performances induced by applying no rebalancing strategy (None) and applying weighting-based resampling such as class-weighting (CW). 

In this section, we investigate weighting-based strategies through a focus on class weighting, its most common variant.

\subsection{Framework and Strategies}
We consider a binary classification setting defined by the pair of random variables $(X,Y)$, where the target $Y \in \{0,1\}$ is predicted from covariates $X \in \mathcal{X} \subset \mathbb{R}^{d}$. Let $f_X$ denote the probability density function of $X$. In the following, we use the superscript $^\star$ to denote quantities associated with the weighted distribution. \es{ok?}

%\es{pour moi, les deux équations qui suivent vont dans la définition de X*, Y*; sinon on ne voit pas bien comment tu repondères}
\begin{definition}[\cite{mayala2024infinite}]
    %In particular, 
    We denote by $(X^\star, Y^\star)$ the pair of random variables under the weighted joint distribution, defined as
    %, and $p^\star = \mathds{P}(Y^\star = 1)$ represents the weighted probability of the positive class.
    %In our weighted context, we have the following properties (from \cite{mayala2024infinite}):
\begin{align}
    \mathds{P}(Y^{\star}=1) = p^{\star} &\ge \mathds{P}(Y=1) = p \\
    f_{X^{\star}  | Y^{\star}}(x|0) &= f_{X  | Y}(x|0), \\
    f_{X^{\star}  | Y^{\star}}(x|1) &= f_{X | Y}(x|1).
\end{align}
\end{definition}

The previous equations illustrate that weighting strategies do not modify the conditional distribution of $X|Y$ while altering the joint distribution, such that $(X^{\star},Y^{\star}) \not = (X,Y)$.
%binary classification setting, i.e., when a weight-based strategy is applied during classifier training.

%In our weighted context, we have 
%$\mathds{P}(Y^{\star}=1) = p^{\star} > \mathds{P}(Y=1) = p$.
In general, the objective of weighting is to train a model on $(X^{\star},Y^{\star})$ and obtain better predictive performances. We insist on the fact that the unweighted (None) and weighting methodologies are both evaluated under the original $ (X, Y) $ distribution, which is mandatory as evaluation must always be performed on the original data distribution. In the following, we use class-weighting (CW) as weighting strategy. %We define the regression function associated to the original (resp. weighting) distribution as $\mu(x) =\mathds{P}(Y=1|X=x)$ (resp. $\mu^{\star}(x) = \mathds{P}(Y^{\star}=1|X^{\star}=x)$).
We also denote the regression functions under the original and weighted distributions as $\mu(x) = \mathds{P}(Y=1 \mid X=x)$ and $\mu^\star(x) = \mathds{P}(Y^\star=1 \mid X^\star=x)$, respectively.

\subsection{A Related Result}

In their analysis in their analysis of tree-based methods under class imbalance, \citet{mayala2024infinite} also derive the following result to move from $\mu^{\star}$ to $\mu$.

\begin{lemma}[\cite{mayala2024infinite}]
\label{lemma:main-mustar}
We have
\begin{align}
\mu^{\star}(x)=\frac{\omega_{1}\mu(x)}{\omega_{0}(1-\mu(x))+\omega_{1}\mu(x)},
\end{align}
with $\omega_{0}=p(1-p^{\star})$ and $\omega_{1}=p^{*}(1-p)$.
\end{lemma}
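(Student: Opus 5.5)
The plan is to apply Bayes' rule to the weighted pair $(X^\star,Y^\star)$, then use the definition of the weighted distribution (class-conditional densities unchanged, only the class prior moves from $p$ to $p^\star$) to rewrite everything in terms of the original distribution.

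First, I express the weighted regression function through the weighted class-conditional densities:
\begin{align*}
\mu^\star(x)=\frac{p^\star f_{X^\star|Y^\star}(x|1)}{(1-p^\star) f_{X^\star|Y^\star}(x|0)+p^\star f_{X^\star|Y^\star}(x|1)}.
\end{align*}
By the definition of $(X^\star,Y^\star)$, I then replace $f_{X^\star|Y^\star}(x|y)$ with $f_{X|Y}(x|y)$ for $y\in\{0,1\}$.

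Next, I invert Bayes' rule on the original distribution. Writing $\mu(x)f_X(x)=p\,f_{X|Y}(x|1)$ and $(1-\mu(x))f_X(x)=(1-p)f_{X|Y}(x|0)$ gives
\begin{align*}
f_{X|Y}(x|1)=\frac{\mu(x)f_X(x)}{p}, \qquad f_{X|Y}(x|0)=\frac{(1-\mu(x))f_X(x)}{1-p}.
\end{align*}
Substituting these into the expression for $\mu^\star(x)$, the factor $f_X(x)$ cancels between numerator and denominator. This leaves
\begin{align*}
\mu^\star(x)=\frac{\frac{p^\star}{p}\mu(x)}{\frac{1-p^\star}{1-p}(1-\mu(x))+\frac{p^\star}{p}\mu(x)}.
\end{align*}
Multiplying numerator and denominator by $p(1-p)$ yields exactly $\omega_1=p^\star(1-p)$ and $\omega_0=p(1-p^\star)$, which is the claimed formula.

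The computation itself is routine. The only delicate point is justifying the division steps. I will work on the set $\{x : f_X(x)>0\}$, which is the only place where $\mu$ is defined, and assume $0<p<1$ and $0<p^\star<1$. The denominator of the final formula is positive there, because it is a convex-type combination of $1-\mu(x)$ and $\mu(x)$ with strictly positive weights. The density of $X^\star$ is $(1-p^\star)f_{X|Y}(\cdot|0)+p^\star f_{X|Y}(\cdot|1)$, and it is positive exactly where $f_X$ is positive, so the two regression functions are defined on the same set.
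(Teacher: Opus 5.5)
Your proof is correct and follows essentially the same route as the paper: Bayes' rule for $(X^\star,Y^\star)$, invariance of the class-conditional densities, and the inversions $f_{X|Y}(x|1)=\mu(x)f_X(x)/p$ and $f_{X|Y}(x|0)=(1-\mu(x))f_X(x)/(1-p)$. The only difference is organizational: the paper first proves $f_{X^\star}(x)=f_X(x)[\omega_0(1-\mu(x))+\omega_1\mu(x)]/(p(1-p))$ as a separate lemma and then divides, and your explicit treatment of the support and of the positivity of the denominator is more careful than the paper's.
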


An alternative proof of \Cref{lemma:main-mustar} is provided in \Cref{proof_lemma:main-mustar}. Specifically, we define the mapping $g: [0,1] \to [0,1]$ such that $\mu^{\star} = g(\mu)$ and $\mu = g^{-1}(\mu^{\star})$, where for $x \in [0,1]$: $g(x) = \omega_{1}x / (\omega_{0}(1-x)+\omega_{1}x) \quad \text{and} \quad g^{-1}(x) = \omega_0 x/ (\omega_0 x +\omega_1 (1-x)).$
\Cref{lemma:main-mustar} demonstrates that $\mu^{\star}$ is a strictly increasing function of $\mu$, meaning that $\mu^{\star}$ preserves the rank ordering of the conditional probabilities. Consequently, the unweighted and class-weighted strategies yield identical probability orderings. Therefore, the total Area Under the PR Curve (PR AUC) and ROC Curve (ROC AUC) for both strategies are identical. Furthermore, threshold-dependent metrics such as precision are identical across approaches, up to a shift in the probability decision threshold.
This is a highly notable observation, as practitioners may expect weighting to structurally alter metrics by shifting the training distribution to $(X^{\star},Y^{\star})$. 
%
%However, weighting-based strategies can modify the underlying classifier training and thus produce a model with different parameters when the regression function is not originally well estimated.

All in all, \Cref{lemma:main-mustar} implies that any universally consistent classifier achieves identical asymptotic performance whether trained on $(X,Y)$ or $(X^{\star},Y^{\star})$, across PR AUC and other metrics, up to a shift on the decision threshold. Thus, we aim to investigate how these metrics behave in an infinite-sample regime under non-asymptotic model complexity.
%\es{ici je commenterais peut-être que l'implication de ce résultat est que pour toute procédure universellement consistante, qu'on l'entraîne sur $(X,Y)$ ou sur $(X*, Y*)$ on obtiendra les mêmes performances asymptotiquement pour le PR-AUC et pour toute les métriques à condition d'adadpter le seuil. La question est donc que se passe t-il d'un point de vue non asymptotique}
%We emphasize that a machine learning classifier estimates either $\mu$ or $\mu^*$ depending on the chosen rebalancing procedure. Practitioners working with models should therefore be aware that class weighting does not improve the $\textit{true}$ PR AUC or ROC AUC capabilities of the model over a baseline unweighted approach.

\section{SPLITTING PROCEDURE}
%\es{attention, tu introduis le cadre $d$-dimensionnel mais tu parles beaucoup de dimension 1, il faut que ce positionnement soit clair dans l'intro}
%In the following, we investigate whether decision trees estimating $\mu$ and $\mu^{\star}$ under unweighted (None) and weighted strategies lead to identical probability orderings, and by extension, identical metrics. 
In the following, we examine whether theoretical performance discrepancies emerge when estimating the regression function via a CART decision tree with and without weighting. %This analysis is conducted in the infinite-sample as follows.
\es{l'enchainement de l'hypothèse et de l'explication n'est pas optimale. Il vaudrait peut-être mieux remplacer l'hypothèse par une définition du type L'arbre CART théorique est l'arbre dont les splits optimisent le critère théorique puis dire que c'est la limite de l'arbre empirique} 
\begin{definition}
    Let $n$ denote the number of training samples. The theoretical classifier $\hat{m}_{k}$ is defined as the  limit of finite-sample estimator:
    \begin{equation}
    \hat{m}_{k} = \lim_{n \to \infty} \hat{m}_{k,n}.
    \end{equation}
\end{definition}

\begin{assumption}
    \label{assump-infinite-sample}
    We operate in the infinite-sample regime for both training and testing, employing either the unweighted classifier $\hat{m}_{k}$ or its weighted counterpart $\hat{m}^{\star}_{k}$. %\es{est-ce que tu as besoin du test infini ? Car tu ne regardes que des quantités sur le train au final ? Ou des précisions qui peuvent être définies comme les quantités théoriques} \as{Je dirais que oui.} 
    %Furthermore, the classifier under consideration is a default CART decision tree $\hat{m}_k$ of depth $k$  optimizing the gini criterion. 
\end{assumption}
%
% Thus, under \Cref{assump-infinite-sample}, the theoretical classifier $\hat{m}_{k}$ is defined as the  limit of finite-sample estimator:
% \begin{equation}
% \lim_{n \to \infty} \hat{m}_{k,n} = \hat{m}_{k}.
% \end{equation}
%In this section, we focus on decision trees with a maximum tree depth equal to one. 
%Let $\hat{m}_{k}$ be a CART decision tree, \citep[see][ for details]{breiman1984classification} of maximum depth equal to $1$. 
In this section, we place ourselves in the univariate setting ($d=1$). Thus, let $\Myt$ $\in \mathds{R}$ be the split position of our tree $\hat{m}_{k}$, thus producing two child nodes:
\begin{align*}
    &A_L(\Myt) = A_{L} =  \{ x \in \mathcal{X} \mid x < \Myt \} \\  &A_R(\Myt) = A_{R} = \{ x \in \mathcal{X} \mid x \geq \Myt \}.
\end{align*}
%\es{dimension 1 ici, sinon il faut écrire $x^{(j)} < t$}

%First, we would like to compute the splitting cut of our tree stump $D_t$ when the impurity measure is the \textit{gini index}. This is done on both default stump and the reweighted stump. 
%\es{$L_0$ n'est jamais défini ?} \as{j'ai rajouté une phrase dans the lemme}
\begin{lemma}
\label{lemma:ig-default}
     Under \Cref{assump-infinite-sample}, the impurity gain $L_0$ of the first split in the univariate for unweighted tree setting is
    \begin{align}
    &L_0(\Myt) =I_L I_R\left[ \frac{J_L}{I_L} -\frac{J_R}{I_R}  \right]^2,
    \end{align}
    where 
    \begin{align}
      &I_L = \int_{x \in A_L} f_X(x) \mathrm{d}x,  \; \; \; \; \; \; \; \; \;  I_R = \int_{x \in A_R} f_X(x) \mathrm{d}x,  \nonumber \\
      &J_L =  \int_{x \in A_L} f_X(x) \mu(x) \mathrm{d}x, \, \, \, J_R = \int_{x \in A_R} f_X(x) \mu(x) \mathrm{d}x. \nonumber
    \end{align}
\end{lemma}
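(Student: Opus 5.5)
Under \Cref{assump-infinite-sample}, the split criterion is evaluated at the population level. The plan is a direct computation of the population Gini impurity decrease, reduced algebraically to the stated form.

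\textbf{Step 1: population quantities.} For binary $Y$ the Gini impurity of a cell $A$ is $2\pi_A(1-\pi_A)$, with $\pi_A=\mathds{P}(Y=1\mid X\in A)$. The impurity gain of the split at $\Myt$ is taken, as in CART, as the parent impurity minus the mass-weighted child impurities:
\begin{align*}
L_0(\Myt)=\mathrm{Imp}(\mathcal{X})-\mathds{P}(X\in A_L)\,\mathrm{Imp}(A_L)-\mathds{P}(X\in A_R)\,\mathrm{Imp}(A_R).
\end{align*}
Conditioning on $X$ gives $\mathds{P}(X\in A_L)=I_L$, $\mathds{P}(X\in A_R)=I_R$ with $I_L+I_R=1$, and $\mathds{P}(Y=1,X\in A_L)=J_L$, so $\pi_L=J_L/I_L$, $\pi_R=J_R/I_R$. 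The root proportion is $\pi=J_L+J_R=I_L\pi_L+I_R\pi_R$.

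\textbf{Step 2: variance decomposition.} It is convenient to write the Gini index as the Bernoulli variance $\pi(1-\pi)$, dropping the factor $2$, which only rescales the criterion and does not change the split. The gain is then
\begin{align*}
\pi(1-\pi)-I_L\pi_L(1-\pi_L)-I_R\pi_R(1-\pi_R).
\end{align*}
The linear terms cancel because $\pi=I_L\pi_L+I_R\pi_R$. What remains is
\begin{align*}
I_L\pi_L^2+I_R\pi_R^2-(I_L\pi_L+I_R\pi_R)^2.
\end{align*}
Expanding the square and using $1-I_L=I_R$ yields $I_LI_R(\pi_L-\pi_R)^2$, which is exactly the stated expression. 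This is the between-group variance from the law of total variance.

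\textbf{Main obstacle.} The algebra is routine. The real point is justifying that the limit $\hat m_{k}$ of empirical CART trees optimizes this population criterion. The empirical impurity gain converges to $L_0(\Myt)$ for each fixed $\Myt$ by the law of large numbers, and this is what \Cref{assump-infinite-sample} is meant to license. Uniform convergence in $\Myt$ and convergence of the argmax are not needed for the formula itself, so I would only remark on them.

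A second point to fix is normalization: the paper's $L_0$ lacks the factor $2$. I would state the convention explicitly, either by defining Gini as $\pi(1-\pi)$ or by noting that constant factors are irrelevant for split selection.
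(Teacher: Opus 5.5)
Your proof is correct and takes essentially the same route as the paper, whose proof simply asserts the between-group form $\mathds{P}(X\in A_L)\,\mathds{P}(X\in A_R)\bigl(\mathds{E}[Y\mid X\in A_L]-\mathds{E}[Y\mid X\in A_R]\bigr)^2$ of the Gini gain and then substitutes $I_L,I_R,J_L,J_R$. Your variance-decomposition computation (using $I_L+I_R=1$ at the root and implicitly $I_L,I_R>0$) supplies the justification the paper takes for granted, and dropping the factor $2$ matches the $\pi(1-\pi)$ convention the paper uses in its own Gini computations.
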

%\es{pense aux virgules et aux points dans toutes les équations}
The proof of \Cref{lemma:ig-default} is provided in \Cref{ig-default}. Note that this derivation holds for arbitrary regression functions as soon as $I_L, I_R \neq 0$. Leveraging \Cref{lemma:ig-default}, we establish analogous results for weighted decision trees.
%\es{as soon as $I_L, I_R \neq 0$ et peut-être d'autres conditions}

\begin{theorem}
    \label{lemma:ig-ponderate}
    Under \Cref{assump-infinite-sample}, let $\varepsilon>0$ such that $p^{\star} = p + \varepsilon$. For $\varepsilon$ small enough, the split gain $L(\Myt)$ of a weighted tree  in the univariate setting can be approximated as,
    \begin{align}
    &L(s) \approx L_0(\Myt) + \varepsilon L_1(\Myt) + \mathcal{O}(\varepsilon^2),
\end{align}
where, \begin{align}
    L_1(\Myt)&= 2L_0(\Myt) \left(\frac{G_1(\Myt)}{G_0(\Myt)} - \frac{1}{1-p} \right) \nonumber \\
    & \quad + \frac{G_0^2(\Myt)}{p(1-p)}(I_RJ_L + I_LJ_R),
\end{align} 
with $G_0=\frac{J_L}{I_L} - \frac{J_R}{I_R}$ and $$G_1=-  \left[ \frac{J_L}{I_L}\frac{ J_L - pI_L}{I_L p (1-p)} + \frac{J_R}{I_R}\frac{ J_R - pI_R}{I_R p (1-p)} \right].$$
\end{theorem}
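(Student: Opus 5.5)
The plan is to reduce the weighted case to \Cref{lemma:ig-default}, applied to the pair $(X^\star,Y^\star)$ instead of $(X,Y)$, and then Taylor-expand in $\varepsilon$. Under \Cref{assump-infinite-sample} the weighted tree optimizes the same Gini criterion, only under the weighted distribution. Therefore $L(\Myt)=I^\star_L I^\star_R\,[J^\star_L/I^\star_L-J^\star_R/I^\star_R]^2$, where $I^\star$ and $J^\star$ are defined as in \Cref{lemma:ig-default} with $f_{X^\star}$ and $\mu^\star$ in place of $f_X$ and $\mu$.

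\textbf{Step 1: exact weighted integrals.} By the definition of $(X^\star,Y^\star)$, the class-conditional densities are unchanged, so
\[
f_{X^\star}(x)=\frac{1-p^\star}{1-p}f_X(x)(1-\mu(x))+\frac{p^\star}{p}f_X(x)\mu(x),
\qquad
f_{X^\star}(x)\mu^\star(x)=\frac{p^\star}{p}f_X(x)\mu(x).
\]
The second identity is consistent with \Cref{lemma:main-mustar}. Integrating over each child node gives, exactly,
\[
J^\star_{L}=\Bigl(1+\frac{\varepsilon}{p}\Bigr)J_{L},
\qquad
I^\star_{L}=I_{L}+\varepsilon\,\frac{J_{L}-pI_{L}}{p(1-p)},
\]
and the analogous formulas hold for $R$. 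So $I^\star$ and $J^\star$ are affine in $\varepsilon$, and all the nonlinearity in $\varepsilon$ comes from the ratios and the square.

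\textbf{Step 2: expansion.} Write $\delta_{L}=(J_{L}-pI_{L})/(p(1-p))$ and $\delta_R$ likewise. Since $I_L+I_R=1$ and $J_L+J_R=p$, we have $\delta_R=-\delta_L$. Expanding the ratio to first order gives
\[
\frac{J^\star_L}{I^\star_L}=\frac{J_L}{I_L}\Bigl(1+\frac{\varepsilon}{p}-\varepsilon\frac{\delta_L}{I_L}\Bigr)+\mathcal O(\varepsilon^2).
\]
Hence $G^\star:=J^\star_L/I^\star_L-J^\star_R/I^\star_R$ satisfies $G^\star=G_0(1+\varepsilon/p)+\varepsilon G_1+\mathcal O(\varepsilon^2)$, with $G_1$ exactly the bracket in the statement. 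The product satisfies $I^\star_LI^\star_R=I_LI_R+\varepsilon(I_R\delta_L+I_L\delta_R)+\mathcal O(\varepsilon^2)$. Multiplying out $I^\star_LI^\star_R\,(G^\star)^2$ and collecting the $\varepsilon$ coefficient produces three contributions:
\begin{itemize}
\item $2L_0G_1/G_0$, from the cross term in $(G^\star)^2$;
\item a multiple of $L_0$ coming from the global factor $(1+\varepsilon/p)^2$;
\item $G_0^2(I_R\delta_L+I_L\delta_R)$, from the node masses.
\end{itemize}

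\textbf{Step 3: matching the stated form.} The remaining work is algebraic. I would rewrite $I_R\delta_L+I_L\delta_R$ using $J_L+J_R=p$ and $I_L+I_R=1$. This should split it into $(I_RJ_L+I_LJ_R)/(p(1-p))$ minus a multiple of $I_LI_R$, the latter being absorbable into $L_0=I_LI_RG_0^2$. That absorbed piece, combined with the $2/p$ coming from the prefactor, should yield the $-2L_0/(1-p)$ term. A quick check: the mass term contributes $-2I_LI_R/(1-p)$ times $G_0^2$ after simplification, so the leftover $L_0$ coefficient should be $2/p-2/(p(1-p))=-2/(1-p)$.

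I expect this bookkeeping to be the main obstacle. If the constants do not align exactly, I would first check whether the Gini gain is being normalized by the weighted total mass. If it is, the overall $(1+\varepsilon/p)$ factors would be renormalized away, and I would redo the collection accordingly.
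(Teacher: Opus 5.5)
Your route is the paper's route: compute the weighted density, write the Gini gain through $I_{L,R},J_{L,R}$, substitute $p^\star=p+\varepsilon$, and expand to first order. Your exact formulas $J^\star_L=(1+\varepsilon/p)J_L$, $I^\star_L=I_L+\varepsilon\delta_L$ and $\delta_R=-\delta_L$ are correct. The gap is that this computation, done correctly, does not give the stated $L_1$. The two places where you assert agreement are exactly where it fails.

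\textbf{First mismatch ($G_1$).} $J^\star_R/I^\star_R$ enters $G^\star$ with a minus sign, so your own expansion yields
\[
G^\star=G_0\Bigl(1+\frac{\varepsilon}{p}\Bigr)-\varepsilon\Bigl[\frac{J_L}{I_L}\frac{\delta_L}{I_L}-\frac{J_R}{I_R}\frac{\delta_R}{I_R}\Bigr]+\mathcal O(\varepsilon^2).
\]
This is a \emph{difference}, whereas the statement's $G_1$ is the \emph{sum} of the same two terms. Since $\delta_R=-\delta_L\neq 0$ in general, the two differ.

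\textbf{Second mismatch (coefficient of $L_0$).} You correctly find $I_R\delta_L+I_L\delta_R=\frac{I_RJ_L+I_LJ_R}{p(1-p)}-\frac{2I_LI_R}{1-p}$. Together with the $2/p$ from $(1+\varepsilon/p)^2$, the net coefficient of $L_0$ is $\frac{2}{p}-\frac{2}{1-p}$, not $-\frac{2}{1-p}$. Your ``quick check'' silently replaces $-\frac{2}{1-p}$ by $-\frac{2}{p(1-p)}$. That is what the wrong mass expansion $I_L+\frac{\varepsilon}{p(1-p)}(J_L-I_L)$ would give. The paper's own proof makes exactly these two slips, which is how it lands on the stated constants. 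It writes $J_L-I_L$ in the mass factor while keeping the correct $J_L-pI_L$ in the denominator of the ratio. It also collects the $G_1$ bracket with a $+$.

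What your method actually proves is
\[
L_1=2L_0\Bigl(\frac{\tilde G_1}{G_0}+\frac{1}{p}-\frac{1}{1-p}\Bigr)+\frac{G_0^2}{p(1-p)}\bigl(I_RJ_L+I_LJ_R\bigr),\qquad \tilde G_1=-\Bigl[\frac{J_L}{I_L}\frac{\delta_L}{I_L}-\frac{J_R}{I_R}\frac{\delta_R}{I_R}\Bigr].
\]
The stated expansion is false in general. You can check this with the paper's own exact formula from \Cref{lemma:IG_weighted_optimal_treshold}, in the linear setting at $\Myt=1/2$:
\[
L(1/2)=\frac{C}{4(1-\kappa^2/4)},\qquad C=\frac{\alpha^2}{4}\Bigl(1+\frac{\varepsilon}{p}\Bigr)^2\Bigl(1-\frac{\varepsilon}{1-p}\Bigr)^2 .
\]
Hence $L_1/L_0=\frac{2}{p}-\frac{2}{1-p}\approx 10.98$ for $\alpha=0.2$, $\beta=0.05$ ($p=0.15$). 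The stated $L_1$ instead gives $L_1/L_0=\frac{\alpha}{2p(1-p)}\approx 0.78$.

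Your renormalization fallback does not help. $f_{X^\star}$ already integrates to one, and $J^\star_A/I^\star_A$ is already the normalized weighted class proportion. So the factor $p^\star/p=1+\varepsilon/p$ is genuine and cannot be normalized away. The correct conclusion of your plan is the corrected first-order term above, not the theorem as written.
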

%\es{l'écriture du théorème fonctionne en dimension 1, sinon il faut avoir la direction de coupure qui intervient également}
The proof of \Cref{lemma:ig-ponderate} can be found in \Cref{proof:ig-ponderate}. 
\Cref{lemma:ig-ponderate} establishes an approximation of the impurity gain for a weighted tree under the assumption that $\varepsilon$ is sufficiently small. This simplified expression facilitates the analytical control of the gain terms. Furthermore, this result is general: it holds for arbitrary regression functions (as soon as $I_L, I_R \neq 0$) and provides a foundation for approximating the split values of weighted decision trees. Finally, the validity of this approximation for small $\varepsilon$ can be seen as a constraint on a limited weighting strength. %\es{je comprends qu'on devrait se limiter à de faibles resampling rate pour que $\varepsilon$ soit petit et que l'approximation soit vraie} \as{Oui c'est ce que je  veux dire. Tu trouves que c'est pas terrioble comme résultata ?}
%In \Cref{section:numerical_experiments} several regression functions are studied under weighting. \es{je ne comprends pas le lien}
%\es{peut-être à mettre en perspective avec la section simulation. Sinon on a du mal à voir l'intérêt de cette expression}
%

Below, we conduct an analysis of the decision tree splits with a linear regression function.

%\begin{lemma}[Optimizing gini in weighting setting]
%    \label{lemma:ig-ponderate}
% Under \Cref{assump-1}, a weighted tree stump optimizing the gini expressed as follows :
% \begin{align}
%     L(t) = \mathds{P}(Y^{\star}=1)(1-\mathds{P}(Y^{\star}=1)) - S(t,\alpha,\beta,\omega_0,\omega_1),
% \end{align}
% where there is an algebric expression for $S(t,\alpha,\beta,\omega_0,\omega_1)$.
% \end{lemma}
% The proof of \Cref{lemma:ig-ponderate} and the expression of $S(t,\alpha,\beta,\omega_0,\omega_1)$ can be found in \Cref{proof:ig-ponderate}.

\section{LINEAR APPROACH}

%To describe the characteristics of unweighted and weighted trees with linear regression functions, we introduce the following assumption. 
In this section, to derive explicit expressions for split values, tree models, and performance metrics, we focus our analysis on a univariate linear setting defined as follows.
%Depending on the specific result, our derivations invoke either \Cref{assump-1} or \Cref{ass:linear_gradient}:
\begin{assumption}
\label{assump-1}
Let $d=1$,  $X \sim \mathcal{U}([0,1])$ and assume that $\mu(x) = \mathds{P}(Y=1|X=x)=\alpha x+ \beta $ with $\alpha,\beta \in \mathds{R}^{+}$ such that $\alpha +\beta \leq 1$.
\end{assumption}

\subsection{Impurity Gain}
We denote by $s_p$, the initial split of a tree at depth 1 ($k=1$). We have from \cite{ishwaran2015effect} that $L_0(\Myt)$ is maximized for $\Myt_{p}=0.5$. %\es{$s_p$ non défini}  
% \begin{lemma}{(From \cite{ishwaran2015effect})}
% \label{lemma:IG_UNweighted_optimal_treshold}
%     Under \Cref{assump-infinite-sample} and \Cref{assump-1}, for all $\alpha$, $\beta$ we have $L_0(\Myt)=   \frac{\alpha^2}{4}(\Myt-\Myt^2)  $ and $L_0(\Myt)$ is maximized for $\Myt_{p}=0.5$.
% \end{lemma}
%\es{le résultat ci-dessus est déjà connu (cf Ishwaran ou Biau (2012)}
We remark that this split position does not depend on the parameters $\alpha$ and $\beta$.

\begin{lemma}
\label{lemma:IG_weighted_optimal_treshold}
Under \Cref{assump-infinite-sample} and \Cref{assump-1}, we have for  the weighted impurity gain denoted  $L(\Myt)$:
\begin{align}
L(\Myt) = C \cdot \frac{\Myt(1-\Myt)}{\left[ 1 - \kappa(1-\Myt) \right] \left[ 1 + \kappa \Myt \right]},
\end{align}
where $\kappa = \alpha(p^{\star} - p)/(2p(1-p)) $ and $C = \alpha^2\left[ \frac{\omega_1}{p(1-p)} \left( \frac{1}{2} -\frac{p^* - p}{2(1-p)}\right) \right]^2$.
The optimal split $\Myt_{p}^{\star}$ that maximizes $L(\Myt)$ is given by the unique valid solution of equation:
$
2\kappa \Myt^2 + 2(1-\kappa) \Myt - (1-\kappa) = 0,
$
which is: 
%the exact closed-form expression:
% \begin{equation}\label{eq:t_star_closed}
% \Myt_{p}^{\star} = \frac{\kappa - 1 + \sqrt{1 - \kappa^2}}{2\kappa} = \frac{1}{2} + \frac{\sqrt{1 - \kappa^2} - 1}{2\kappa} = \frac{1}{2} - \frac{\kappa}{2\left(\sqrt{1 - \kappa^2} + 1\right)}.
% \end{equation}
\begin{equation}\label{eq:t_star_closed}
\Myt_{p}^{\star} = \frac{1}{2} - \frac{\kappa}{2\left(\sqrt{1 - \kappa^2} + 1\right)}.
\end{equation}
\end{lemma}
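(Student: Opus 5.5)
The plan is to apply \Cref{lemma:ig-default} directly to the weighted pair $(X^\star,Y^\star)$. By \Cref{lemma:ig-default}, the weighted gain is $L(\Myt)=I^\star_L I^\star_R\,[J^\star_L/I^\star_L - J^\star_R/I^\star_R]^2$, where $I^\star,J^\star$ are defined with $f_{X^\star}$ and $\mu^\star$. So the first step is to write $f_{X^\star}$ explicitly. Using the definition of $(X^\star,Y^\star)$ (the class-conditionals are unchanged and only the prior moves from $p$ to $p^\star$), we have
\[
f_{X^\star}(x)=f_X(x)\Bigl[\tfrac{1-p^\star}{1-p}(1-\mu(x))+\tfrac{p^\star}{p}\mu(x)\Bigr]=\frac{f_X(x)\,\bigl(\omega_0(1-\mu(x))+\omega_1\mu(x)\bigr)}{p(1-p)}.
\]
Combining with \Cref{lemma:main-mustar} gives $f_{X^\star}\mu^\star=\omega_1 f_X\mu/(p(1-p))$. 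The key simplification is that the nonlinear denominator of $g$ cancels. Hence, under \Cref{assump-1}, both integrands are affine in $x$:
\begin{itemize}
\item $J^\star$ has integrand $\frac{\omega_1}{p(1-p)}(\alpha x+\beta)$;
\item $I^\star$ has integrand $\frac{1}{p(1-p)}\bigl[\omega_0+(\omega_1-\omega_0)(\alpha x+\beta)\bigr]$, where $\omega_1-\omega_0=p^\star-p$.
\end{itemize}
All four integrals over $[0,\Myt)$ and $[\Myt,1]$ are therefore quadratic polynomials in $\Myt$.

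Next I would rewrite the gain as $L=(J^\star_L I^\star_R-J^\star_R I^\star_L)^2/(I^\star_L I^\star_R)$. The totals satisfy $I^\star_L+I^\star_R=1$ and $J^\star_L+J^\star_R=p^\star$, so the numerator equals $(J^\star_L-p^\star I^\star_L)^2$. With $p=\alpha/2+\beta$, I expect $J^\star_L-p^\star I^\star_L$ to vanish at $\Myt=0$ and $\Myt=1$. Being a quadratic in $\Myt$, it then factors as a constant times $\Myt(1-\Myt)$. Likewise $I^\star_L=\Myt\,(a+b\Myt/2)$ and $I^\star_R=(1-\Myt)(\cdots)$, with affine second factors. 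After normalising these factors, I expect them to become $[1+\kappa\Myt]$ and $[1-\kappa(1-\Myt)]$ up to constants absorbed into $C$, with $\kappa=\alpha(p^\star-p)/(2p(1-p))$. Matching the leftover constant with $C$ is bookkeeping. It uses $p^\star(1-p)=\omega_1$ and the value of $1-\tfrac{p^\star-p}{2(1-p)}$ to express the intercept of $I^\star$.

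For the maximisation, I would set the logarithmic derivative of $\Myt(1-\Myt)/([1-\kappa(1-\Myt)][1+\kappa\Myt])$ to zero. Clearing denominators should yield the stated quadratic $2\kappa \Myt^2+2(1-\kappa)\Myt-(1-\kappa)=0$. Its roots are $\Myt=\bigl(\kappa-1\pm\sqrt{1-\kappa^2}\bigr)/(2\kappa)$. I would show that $|\kappa|<1$, which should follow from positivity of $f_{X^\star}$ on $[0,1]$, i.e.\ positivity of the bracket factors. Under that condition only the ``$+$'' root lies in $(0,1)$. Uniqueness of the maximiser follows because $L$ vanishes at $0$ and $1$, is positive inside, and has a single interior critical point. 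Rationalising $\sqrt{1-\kappa^2}-1=-\kappa^2/(\sqrt{1-\kappa^2}+1)$ then gives \eqref{eq:t_star_closed}, which also recovers $\Myt_p=1/2$ as $\kappa\to0$.

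The main obstacle is the algebraic factorisation in the second step. Specifically, I need to verify that the numerator is exactly proportional to $\Myt^2(1-\Myt)^2$ and that the affine factors of $I^\star_L$ and $I^\star_R$ reduce to the $\kappa$ form with the claimed $C$. I would check this first by evaluating everything at $\Myt=0,1$ and comparing leading coefficients, rather than expanding blindly.
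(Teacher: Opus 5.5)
Your proposal is correct, and its skeleton matches the paper's. Both compute the weighted leaf masses in closed form, plug them into the Gini gain $P(X^\star\in A_L)P(X^\star\in A_R)(\text{gap})^2$, and differentiate to reach the same quadratic. The paper writes $f_{X^\star}=1+2\lambda(\mu-p)$ on $[0,1]$, which is your integrand $1+2\kappa(x-\tfrac12)$. So $I^\star_L=s[1-\kappa(1-s)]$ and $I^\star_R=(1-s)[1+\kappa s]$ exactly, with nothing to absorb into $C$.

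Two steps differ. For the squared gap, the paper notes that each weighted leaf mean is $g$ of the unweighted one, namely of $p-\alpha(1-s)/2$ and $p+\alpha s/2$. It then uses $g(a)-g(b)=\omega_0\omega_1(a-b)/(D_aD_b)$, where $D_a,D_b$ are $p(1-p)$ times the mass factors and so cancel against the masses. You instead use the totals to reduce the numerator to $(J^\star_L-p^\star I^\star_L)^2$ and factor it through its roots at $0$ and $1$. This gives $J^\star_L-p^\star I^\star_L=-\frac{\alpha p^\star(1-p^\star)}{2p(1-p)}\,s(1-s)$ and hence exactly the stated $C$. The relevant constant is $\tfrac12-\tfrac{p^\star-p}{2(1-p)}=\tfrac{1-p^\star}{2(1-p)}$, not the ``$1-\ldots$'' you wrote. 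Your version avoids the leaf-level use of \Cref{lemma:main-mustar}, while the paper's makes the $\omega_0\omega_1$ structure visible.

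For root selection, the paper only checks which root tends to $\tfrac12$ as $\kappa\to0$. That does not by itself exclude the other root at a given $\kappa$. Your argument is more complete: $1-\kappa=f_{X^\star}(0)>0$ and $1+\kappa=f_{X^\star}(1)>0$, so the ``$-$'' root lies outside $[0,1]$. Since $L$ vanishes at the endpoints and has a single interior critical point, the ``$+$'' root is the unique maximiser. The only exceptions are the degenerate cases $\alpha=0$ or $p^\star=1$, where $C=0$.
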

%\es{tu pourrais essayer de voir quand cette expression est la plus éloignée de $1/2$ ? Il faut commenter le fait que $\kappa$ ne peut pas être plus grand que 1 aussi, pour que l'expression soit valide}
The proof of \Cref{lemma:IG_weighted_optimal_treshold} can be found in \Cref{proof:lemma:IG_weighted_optimal_treshold}.
\Cref{lemma:IG_weighted_optimal_treshold} allows us to compare the effect of weighting on tree splitting.  
This result establishes how sample weighting alters tree split selection relative to the unweighted baseline $\Myt_p = 1/2$. Note that $\kappa \in [-1, 1]$ ensures that the expression for $\Myt_p^\star$ remains valid. Specifically, for our primary regime of interest ($p^{\star} > p$, implying $\kappa \in [0,1]$), the optimal split value $\Myt_p^\star$ drops below $1/2$. This downward shift enables the tree to form more homogeneous leaves by allocating greater spatial volume to regions where the regression function takes larger values. \es{c'est marrant car si on remplaçait cette fonction linéaire par un pic, on aurait envie au contraire que la position de coupure se déplace sur la droite}
In the special case where $c_1, c_2 \in \mathbb{R}^+$, $\alpha = \frac{c_1 p}{2}$, $\beta = c_2 p$, assuming $p \ll p^\star$ and $p \ll 1$ yields the approximation $\kappa \approx \frac{c_1 p^\star}{2}$. Consequently, $\kappa$ increases monotonically with the target prevalence $p^\star$. Because $\Myt_p^\star$ is a strictly decreasing function of $\kappa$, higher values of $p^\star$ induce a more pronounced downward shift in the optimal split value $\Myt_p^\star$ relative to $1/2$. 
%\es{il y a peut-être moyen de faire plus de commentaires ici, en fonction des valeurs des coefficients}

Hereafter, we denote $\Myt_p$ and $\Myt_{p}^{\star}$ as the split values for weighted and unweighted depth-1 trees.

\subsection{Maximum Depth Equal to \texorpdfstring{$k$}{k}}

Let $t \in [0, 1]$ denote the decision threshold used to predict $\hat{Y} = 1$ within each leaf cell of the CART tree. To avoid ambiguity, we consistently use the term threshold for classification rules on predicted probabilities, whereas split value refers to feature separations. Formally, for a given leaf $A$, the decision rule is
\begin{align}
    \mathds{E}[Y \mid X \in A] \ge t &\Longrightarrow \hat{Y} = 1.
\end{align}

\begin{figure*}[h]
    \centering
    \includegraphics[width=0.4\linewidth]{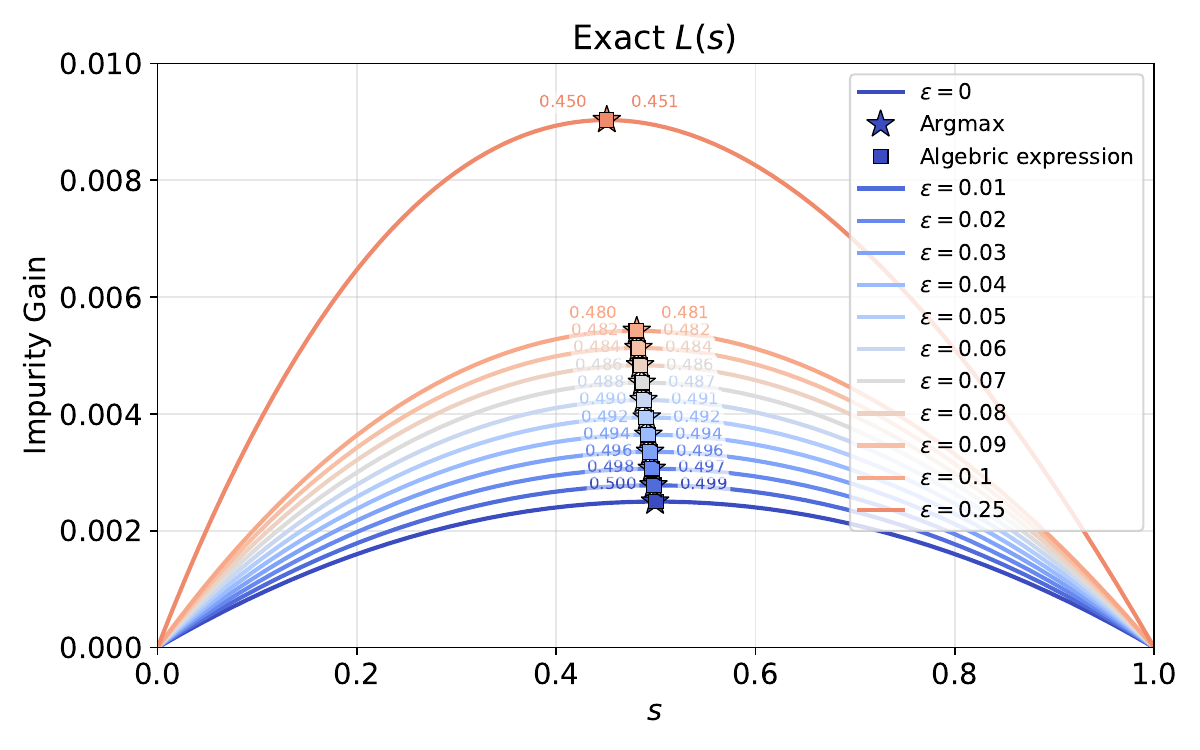}
    \includegraphics[width=0.4\linewidth]{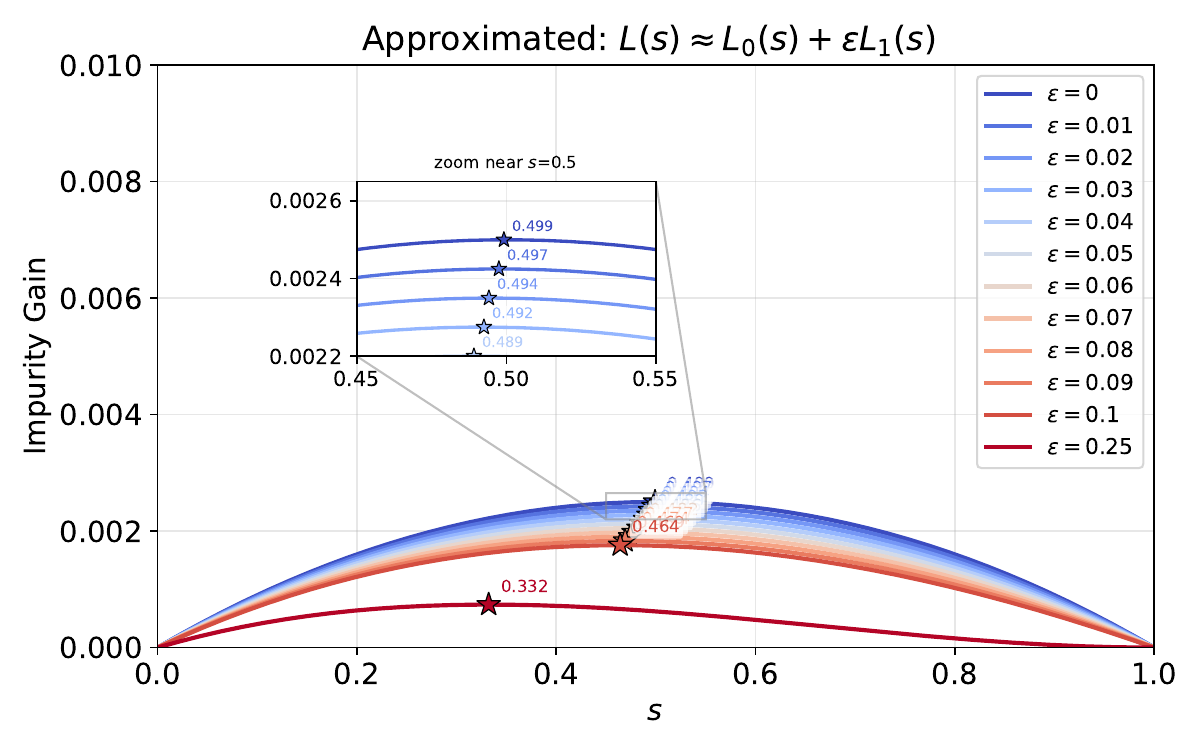}
    \caption{Impurity gain function of first split for linear regression function with $\alpha=0.2$ and $\beta=0.05$.}
    \label{fig:IG-linear}
\end{figure*}
In the subsequent analysis, we focus on Precision-Recall curves and their Area Under the Curve (AUC), as they provide robust, threshold-free metrics for class imbalance. %Furthermore, achieving identical ranking metrics implies that other performance metrics will also align following an appropriate adjustment of the probability threshold.

%We assume that the trees are grown to a maximum depth $k \geq 1$, containing $2^k - 1$ internal split nodes. Let $\{\Myt_p^{h}\}_{h=1}^{2^k - 1}$ and $\{\Myt_p^{\star h}\}_{h=1}^{2^k - 1}$ denote the collections of split values for the standard and weighted trees, indexed level-by-level from left to right. In this section, we reorder these values monotonically using permutation  $\sigma$ and $\sigma^{\star}$ respectively such that $\Myt_p^{(\sigma(1))} \le \Myt_p^{\sigma(2)} \le \dots \le \Myt_p^{\sigma(2^k - 1)} \in [0,1]$ for the standard tree, and $\Myt_p^{ \sigma^\star(1)} \le \dots \le t_p^{\sigma^\star(2^k - 1)}$ for the weighted tree.
Let $\{\Myt_p^{h}\}_{h=1}^{2^k - 1}$ and $\{\Myt_p^{\star h}\}_{h=1}^{2^k - 1}$ denote the internal split values for a depth-$k$ unweighted and weighted tree, respectively. We sort these splits monotonically via permutations $\sigma$ and $\sigma^\star$, yielding $\Myt_p^{\sigma(1)} \le \dots \le \Myt_p^{\sigma(2^k - 1)}$ and $\Myt_p^{\sigma^\star(1)} \le \dots \le \Myt_p^{\sigma^\star(2^k - 1)}$.

In \Cref{appendix:supp_theoretical_results}, we provide simplified algebric expressions of the trained decision trees, along with their corresponding performance metrics. %with respect to split value and the optimal metrics values.
%\es{en lisant cela, je ne sais pas trop ce que je m'attends à trouver dans cet appendix} \as{essai}

%\es{tu porurais juste mettre $Precision_{\hat{m}_{k}, s}$ non ?}
\begin{theorem}
\label{th:depthk-prec-rec-correspondance}
Under \Cref{assump-infinite-sample} and \Cref{assump-1}, both the unweighted decision tree $\hat{m}_{k}$ and weigthing tree $\hat{m}^{\star}_{k}$ satisfy the exact same Precision–Recall relationship across all output probabilities. Expressed for $\hat{m}_{k}$, this relationship is given by:
\begin{align}
&Precision_{\hat{m}_{k}, t}\left(Recall_{\hat{m}_{k}, t}\right)= \frac{\alpha}{2} + \beta   \\
& + \frac{\alpha}{2}\frac{-\beta+\sqrt{\beta^{2}+2\alpha\left(\frac{\alpha}{2}+\beta\right)\left(1-\text{Recall}_{\hat{m}_{k}, t}\right)}}{\alpha}, \nonumber
\end{align}
%where $g^{+}(\hat{m}_{k},s)$ yields the smallest split point $\Myt_p^{(\sigma(i))}$ such that $\mathds{E}[Y \mid \Myt_p^{\sigma(i-1)} \leq X \leq \Myt_p^{\sigma(i)}] \leq s \leq \mathds{E}[Y \mid \Myt_p^{\sigma(i)} \leq X \leq \Myt_p^{\sigma(i+1)}]$.
where $Precision_{\hat{m}_{k}, t}$ denotes the precision of decision tree $\hat{m}_{k}$ at probability threshold $s$. 
%Furthermore, this functional relationship holds analogously for the weighted tree $\hat{m}_{k}^{\star}$.
\end{theorem}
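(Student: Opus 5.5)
The idea is to show that, whatever the split values are, thresholding either tree at any probability level selects a set of the form $\{x \ge u\}$ with $u$ one of the split values. Precision and recall of such an upper set under the original distribution depend only on $u$, $\alpha$ and $\beta$, and not on how $u$ was found. Eliminating $u$ between the two then gives one curve shared by both trees.

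\textbf{Step 1 (monotone leaf outputs).} Since $d=1$, the leaves of a depth-$k$ tree are the consecutive intervals between the sorted split values $\Myt_p^{\sigma(1)} \le \dots \le \Myt_p^{\sigma(2^k-1)}$. For $\hat m_k$, the output on a leaf $A=[a,b)$ is
$$\mathds{E}[Y\mid X\in A]=\frac{\alpha(a+b)}{2}+\beta,$$
which is nondecreasing as the leaf moves to the right.

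For $\hat m_k^\star$, the output on a leaf is $\mathds{E}[Y^\star\mid X^\star\in A]$. By \Cref{lemma:main-mustar}, $\mu^\star=g(\mu)$ with $g$ strictly increasing, so $\mu^\star$ is nondecreasing in $x$. The conditional density of $X^\star$ factors through the class-conditional densities, which equal the original ones, so the leaf average of $\mu^\star$ is again monotone in the position of the leaf. Hence in both trees the leaf outputs are ordered like the leaves.

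Consequently, for any threshold $t$ the predicted-positive region $\{\hat m(x)\ge t\}$ is $[u,1]$, where $u$ is some split value, or $u\in\{0,1\}$ in the degenerate cases. If $\alpha=0$ the curve degenerates to the single point $P=\beta$, which the formula also gives. The location of the splits, and in particular the shift $\Myt_p^\star\neq 1/2$ from \Cref{lemma:IG_weighted_optimal_treshold}, only changes \emph{which} values of $u$ are attainable.

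\textbf{Step 2 (metrics of an upper set).} Both trees are evaluated on the original $(X,Y)$ under \Cref{assump-infinite-sample}. Write $c=\frac{\alpha}{2}+\beta=\mathds{P}(Y=1)$. Then
$$\mathrm{Recall}(u)=\frac{1}{c}\int_u^1(\alpha x+\beta)\,\mathrm{d}x=\frac{\frac{\alpha}{2}(1-u^2)+\beta(1-u)}{c},$$
$$\mathrm{Precision}(u)=\frac{1}{1-u}\int_u^1(\alpha x+\beta)\,\mathrm{d}x=\frac{\alpha(1+u)}{2}+\beta.$$

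\textbf{Step 3 (elimination).} Setting $R=\mathrm{Recall}(u)$ gives the quadratic
$$\frac{\alpha}{2}u^2+\beta u-c(1-R)=0.$$
Its nonnegative root, which is the relevant one since $u\in[0,1]$, is
$$u=\frac{-\beta+\sqrt{\beta^2+2\alpha c(1-R)}}{\alpha}.$$
Substituting into $\mathrm{Precision}=c+\frac{\alpha}{2}u$ yields exactly the displayed relation. Nothing here referred to the tree, so the same relation holds for $\hat m_k^\star$.

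\textbf{Main obstacle.} I expect the difficulty to lie in Step 1 for the weighted tree: making rigorous that the leaf averages of $\mu^\star$ under the weighted law are monotone in the leaf position, and handling ties at the boundary. Ties occur when two adjacent leaves have equal outputs or when $t$ coincides with a leaf value, and each case still yields an upper set. I would first try to write $\mathds{E}[Y^\star\mid X^\star\in A]$ explicitly as
$$\frac{\omega_1 J_A}{\omega_1 J_A+\omega_0 (I_A-J_A)},$$
which is a strictly increasing function of the original leaf mean $J_A/I_A$. Here $J_A=\int_A f_X\mu$ and $I_A=\int_A f_X$ are the per-leaf analogues of the quantities in \Cref{lemma:ig-default}, and $\omega_0,\omega_1$ are the constants of \Cref{lemma:main-mustar}. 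This monotone link would settle the ordering directly.
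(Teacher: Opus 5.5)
Your proposal is correct and follows the paper's route. The paper likewise uses \Cref{th:depthk-metrics} to read off $\mathrm{Recall}=\frac{\frac{\alpha}{2}(1-\Myt_i^2)+\beta(1-\Myt_i)}{\frac{\alpha}{2}+\beta}$ and $\mathrm{Precision}=\frac{\alpha}{2}(1+\Myt_i)+\beta$ for the predicted set $\{x\ge \Myt_i\}$, then takes the nonnegative root of the resulting quadratic and substitutes. Your Step 1 writes the weighted leaf output as $g$ applied to the unweighted leaf mean, which makes explicit why both trees threshold to upper sets; the paper takes this for granted when it writes the piecewise form of $\hat m_k$ with $\tilde X,\tilde Y$ covering both cases.
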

The proof of \Cref{th:depthk-prec-rec-correspondance} can be found in \Cref{proof_th:depthk-prec-rec-correspondance}.
\Cref{th:depthk-prec-rec-correspondance} provides an explicit expression of Precision as a function of Recall that is fundamentally independent of the chosen split sequence. Thus, in the Precision-Recall space, the operating points are strictly constrained to a curve that does not depend on the specific partition values. In other words, the underlying Precision-Recall curves of the unweighted and weighted strategy are identical. 
However, the discrete positioning of their empirical operating points differs, leading to distinct Riemann sum configurations when approximating the AUC. 

\section{PR CURVES: GENERALIZATION}

In this section, we extend our analysis of performance metrics from linear models to broader classes of regression functions.

%\subsection{Univariate Case With Increasing Regression Function}

%The unweighted tree $\hat{m}_{k}$ and weighted tree $\hat{m}_{k}^{\star}$ are evaluated on a test set drawn from the joint distribution $(X,Y)$. Consequently, we have:

\begin{theorem}
    \label{th:general-prec-rec-correspondance}
    Under \Cref{assump-infinite-sample}, both the unweighted decision tree $\hat{m}_{k}$ and weighting tree $\hat{m}^{\star}_{k}$  operating at an output probability threshold $s$, satisfy the PR relationship. Expressed for $\hat{m}_{k}$, this relationship is given by:
    \begin{align}
    Precision_{\hat{m}_{k}, t} =  \frac{Recall_{\hat{m}_{k}, t} \times \mathds{P}\left(Y=1\right)}{\mathds{P}\left(\hat{m}_{k}(X)=1\right)}.
\end{align}
%This functional relationship holds analogously for the weighted tree $\hat{m}_{k}^{\star}$.
\end{theorem}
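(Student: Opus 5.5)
This identity is essentially Bayes' rule applied to the event $\{\hat{m}_{k}(X)=1, Y=1\}$. The plan is to write precision and recall through that common event and eliminate it. Throughout, $\hat{m}_{k}(X)=1$ denotes the event that the leaf containing $X$ has estimated probability at least the threshold $t$.

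Under \Cref{assump-infinite-sample}, both training and evaluation are done on the original distribution $(X,Y)$. The tree $\hat{m}_{k}$ is therefore a fixed deterministic partition of $\mathcal{X}$ into leaves $A_1,\dots,A_{2^k}$, each carrying a leaf value. For a threshold $t$, I set $\mathcal{L}_t$ to be the set of leaves whose value is at least $t$, and $D_t=\bigcup_{A\in\mathcal{L}_t}A$, so that $\{\hat{m}_{k}(X)=1\}=\{X\in D_t\}$. By definition,
\[
Precision_{\hat{m}_{k},t}=\mathds{P}(Y=1\mid X\in D_t)=\frac{\mathds{P}(Y=1,X\in D_t)}{\mathds{P}(X\in D_t)},
\]
\[
Recall_{\hat{m}_{k},t}=\mathds{P}(X\in D_t\mid Y=1)=\frac{\mathds{P}(Y=1,X\in D_t)}{\mathds{P}(Y=1)}.
\]
Solving the second equation for the joint probability and substituting into the first gives the claimed relation immediately. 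One can also write the joint probability as $\int_{D_t}\mu(x)f_X(x)\,\mathrm{d}x$, which links the identity to the $J$ quantities of \Cref{lemma:ig-default}, but the argument does not need this.

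For the weighted tree $\hat{m}^{\star}_{k}$, the only changes are the partition and the leaf values, which are now computed under $(X^\star,Y^\star)$. The predicted set is still some measurable $D^\star_t\subset\mathcal{X}$. The key point is that evaluation is performed under the original law $(X,Y)$, as the paper insists. So the same two definitions apply with $D^\star_t$ in place of $D_t$ and the same $\mathds{P}(Y=1)=p$, and the identity carries over verbatim. It is not affected by $p^\star$ or by the monotone map $g$ of \Cref{lemma:main-mustar}.

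The only real obstacle is the degenerate cases. When $\mathds{P}(X\in D_t)=0$, i.e. no leaf exceeds the threshold, precision is undefined, so I would restrict to thresholds at which at least one leaf is predicted positive. I also need $p>0$, which is implicit in the paper's setting. With those conventions the proof is a two-line manipulation.
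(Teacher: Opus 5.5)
Your proposal is correct and follows the paper's proof: the paper also expresses both metrics through the common quantity $\mathds{P}(Y=1,\hat{Y}=1)=\mathds{E}[\mu(X)\mathds{1}_{\{\hat{m}_{k}(X)=1\}}]$, obtained by applying the tower property to TP, FP and FN. This gives $Precision=\mathds{E}[\mu(X)\mathds{1}_{\{\hat{m}_{k}(X)=1\}}]/\mathds{P}(\hat{m}_{k}(X)=1)$ and $Recall=\mathds{E}[\mu(X)\mathds{1}_{\{\hat{m}_{k}(X)=1\}}]/\mathds{P}(Y=1)$, and eliminating the common term yields the identity; the weighted tree is handled the same way because evaluation is under $(X,Y)$. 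Your explicit exclusion of the degenerate case $\mathds{P}(\hat{m}_{k}(X)=1)=0$ is a small addition that the paper leaves implicit.
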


The proof of \Cref{th:general-prec-rec-correspondance} can be found in \Cref{proof_th:general-prec-rec-correspondance}.  \Cref{th:general-prec-rec-correspondance} is available for all regression functions, univariate or not, and without assumptions on the distribution of $X$. The key insight of this theorem is that the predicted positive probability $\mathds{P}\left(\hat{m}_{k}(X)=1\right)$ is the sole term carrying explicit dependence on tree split locations. Consequently, whenever this term can be reparametrized purely through the underlying regression parameters, precision decouples from split values, yielding analytical PR curves for both weighted and unweighted decision trees.
Next, we subsequently establish assumptions that extend this decoupling property to a broader class of regression functions than considered in \Cref{th:depthk-prec-rec-correspondance}. %\es{ici, si on considère un weighted CART, il faut changer la probabilité du dénominateur et le numérateur $p$ en $p*$. Donc on n'a plus le même lien fonctionnel. L'intérêt de ce résultat n'est pas très clair.} \as{le numérateur ne change pas que l'on soit dans le cas pondéré ou non selon moi.}

\begin{assumption}
\label{assump:increasing-regfunc}
    $X \in [a,b]$ with $a,b \in \mathds{R}$ and $b>a$.
    $t \rightarrow \mu(t)f_X(t)$ has a primitive that we denote $F_{Y}$. Furthermore $F_{Y}^{-1}$ exists and $\mu$ is increasing.
\end{assumption}

\begin{theorem}
\label{th:increasing-prec-rec-correspondance}
    Under \Cref{assump-infinite-sample} and  \Cref{assump:increasing-regfunc}, both the unweighted decision tree $\hat{m}_{k}$ and weigthing tree $\hat{m}^{\star}_{k}$ satisfy the exact same Precision–Recall relationship across all output probabilities. Expressed for $\hat{m}_{k}$, this relationship is given by:
    \begin{align}
        &Precision_{\hat{m}_{k}, t} \\
        &= \frac{Recall_{\hat{m}_{k}, t} \mathds{P}(Y=1)}{ \mathds{P}(X\geq F_{Y}^{-1}\left[ F_{Y}(b)- Recall_{\hat{m}_{k}, t} \mathds{P}(Y=1) \right])}. \nonumber
\end{align}
%\es{on a des $Y$ et des $1$ bizarre ici}
%where $g^{+}(\hat{m}_{k},s)$ yields the smallest split point $\Myt_p^{(\sigma(i))}$ such that $\mathds{E}[Y \mid \Myt_p^{\sigma(i-1)} \leq X \leq \Myt_p^{\sigma(i)}] \leq s \leq \mathds{E}[Y \mid \Myt_p^{\sigma(i)} \leq X \leq \Myt_p^{\sigma(i+1)}]$ and $\{\Myt_p^{\sigma(i)}\}_{h=1}^{2^H-1}$ is the set of ordered split positions.
%where $Precision_{\hat{m}_{k}, t}$ denotes the precision of decision tree $\hat{m}_{k}$ at probability threshold $s$.
%Furthermore, this functional relationship holds analogously for the weighted tree $\hat{m}_{k}^{\star}$.
\end{theorem}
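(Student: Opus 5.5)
We start from \Cref{th:general-prec-rec-correspondance}, which gives, for any tree and any threshold $t$, $Precision_{\hat{m}_{k}, t} = Recall_{\hat{m}_{k}, t}\,\mathds{P}(Y=1)/\mathds{P}(\hat{m}_{k}(X)=1)$. It then suffices to express the denominator $\mathds{P}(\hat{m}_{k}(X)=1)$ through the recall alone. The split positions must not appear, and the tree must not matter beyond this.

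\textbf{Step 1: the positive region is a right tail.} In dimension one, a depth-$k$ CART tree partitions $[a,b]$ into consecutive intervals delimited by the sorted splits $\Myt_p^{\sigma(1)}\le\dots\le\Myt_p^{\sigma(2^k-1)}$. Each leaf $A$ predicts $\mathds{E}[Y\mid X\in A]$ for the unweighted tree. Since $\mu$ is increasing by \Cref{assump:increasing-regfunc}, these leaf averages are nondecreasing from left to right. Hence the set $\{x:\hat m_k(x)=1\}=\{x: \mathds{E}[Y\mid X\in A(x)]\ge t\}$ is of the form $[c,b]$, where $c=c(t)$ is one of the split points (or $a$, or the set is empty).

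The weighted tree predicts $\mathds{E}[Y^\star\mid X^\star\in A]$. Its leaves are intervals, and this leaf value is also nondecreasing in the leaf position. The reason is that $\mu^\star=g(\mu)$ with $g$ strictly increasing (\Cref{lemma:main-mustar}), so $\mu^\star$ is increasing. Moreover, the conditional law of $X^\star$ on a leaf is a reweighting of $f_X$ that preserves the ordering of leaf averages; more simply, the leaf mean of an increasing function over consecutive intervals is ordered. So for $\hat m_k^\star$ the positive region is again $[c^\star,b]$. The evaluation is under $(X,Y)$ in both cases, per \Cref{assump-infinite-sample}.

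\textbf{Step 2: invert recall.} For a positive region $[c,b]$ we have
\begin{align*}
Recall_{\hat{m}_{k}, t}\,\mathds{P}(Y=1)=\mathds{P}(Y=1, X\ge c)=\int_c^b \mu(x)f_X(x)\,\mathrm{d}x = F_Y(b)-F_Y(c).
\end{align*}
Since $F_Y^{-1}$ exists by assumption, this gives $c=F_Y^{-1}\left[F_Y(b)-Recall_{\hat{m}_{k}, t}\,\mathds{P}(Y=1)\right]$. Consequently $\mathds{P}(\hat m_k(X)=1)=\mathds{P}(X\ge c)$, and substituting into \Cref{th:general-prec-rec-correspondance} yields the claimed formula. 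The same computation holds verbatim with $c^\star$ for the weighted tree, so both trees' operating points lie on one curve parametrized by recall. The degenerate cases (empty positive set with recall $0$, and $c=a$ with recall $1$) are checked separately; in the empty case the precision convention must be fixed.

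\textbf{Main obstacle.} The delicate point is Step 1 for the weighted tree: one must make sure that monotonicity of leaf outputs holds for the weighted predictions even though the splits differ. I would argue directly from the leaf mean $\int_A \mu^\star f_{X^\star}/\int_A f_{X^\star}$. Here $f_{X^\star}\propto f_X\,(\omega_0(1-\mu)+\omega_1\mu)$, which I would verify from the Definition via Bayes. Then $\mathds{E}[Y^\star\mid X^\star\in A]=\omega_1 J_A/(\omega_0(I_A-J_A)+\omega_1 J_A)$, which is an increasing function of $J_A/I_A$. Hence it is ordered like the unweighted leaf means, which are ordered by monotonicity of $\mu$ and the fact that the leaves are consecutive intervals. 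A smaller technicality is ties between leaf values, which only merge adjacent leaves and keep the region an interval.
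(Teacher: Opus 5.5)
Your proposal is correct and follows essentially the same route as the paper. In both, the positive region is a right tail $\{x\ge c\}$ because the leaf values of an interval partition are ordered when $\mu$ is increasing. Recall then equals $(F_Y(b)-F_Y(c))/\mathds{P}(Y=1)$, which is inverted through $F_Y^{-1}$ and substituted into the precision, equivalently into \Cref{th:general-prec-rec-correspondance}.

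Two of your additions improve on the paper. Your identity $\mathds{E}[Y^\star\mid X^\star\in A]=g(J_A/I_A)$ makes explicit why the weighted tree's leaf values are ordered, a step the paper only asserts. Your check of the degenerate thresholds replaces the paper's blanket exclusion of pathological cases.
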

The proof of \Cref{th:increasing-prec-rec-correspondance} can be found in \Cref{proof_th:increasing-prec-rec-correspondance}.
%\as{Preuve similaire en 1d AVEC depth=1 sans l'hypothèse de croissance de $mu$ et en ne s'intéressant qu'aux cas non pathologiques.}
\Cref{th:increasing-prec-rec-correspondance} generalizes  \Cref{th:depthk-prec-rec-correspondance}. 
%
%\Cref{th:increasing-prec-rec-correspondance} serves as a general framework for \Cref{th:depthk-prec-rec-correspondance}. 
This result demonstrates that regardless of the weighting strategy, the PR curves are the same for both settings and depend solely on the true regression parameters. Derived as a special case of \Cref{th:general-prec-rec-correspondance}, this formulation imposes specific assumptions that allow $\mathds{P}\left(\hat{m}_{k}(X)=1\right)$ to be reparametrized purely through the underlying regression parameters, yielding identical analytical PR curves for both weighted and unweighted decision trees.
%Consequently, as the tree depth $k \to \infty$ (model complexity increases), the performance metrics for both strategies converge toward the same quantity.%\es{il faudrait le détailler, je ne suis pas sûr de voir ça sur l'expression}

\section{NUMERICAL ILLUSTRATIONS}
\label{section:numerical_experiments}
%In this section, we illustrate our findings using both simulated and real-world data sets.

%\subsection{Linear Setting}
Within the linear setting, we conduct numerical experiments to validate our theoretical derivations.

%\es{je trouve que la description des résultats manque un peu de précision. J'aurais bien mis des choses comme : The curve profiles of the exact criterion and its approximation (established for small $\varepsilon$) are similar for $\varepsilon$ ranging from 0 to 0.25. For these values, while the exact shape differ, the position of maxima are aligned. Si tu ne commentes pas les grandes valeurs de epsilon tu peux les enlever du graphique (je trouve que c'est sympa de les garder mais elle amène des questions)}
\Cref{fig:IG-linear} illustrates the impurity gain function, comparing the original formulation with the approximation derived in \Cref{lemma:ig-ponderate} under the assumption that $\varepsilon$ is sufficiently small (where $p^{\star} = p + \varepsilon$). 
%Although the impurity gain curves exhibit distinct profiles, the positions of their respective maxima remain closely aligned for small values of $\varepsilon$. This suggests that our approximation provides a robust surrogate for the original function in the limit of small $\varepsilon$.
The curve profiles of the exact criterion and its approximation (established for small $\varepsilon$) are similar for $\varepsilon$ ranging from 0 to 0.1. For these values, while the exact shape differs, the positions of maxima are aligned, thus vlidating our \Cref{lemma:ig-ponderate}.
Furthermore, \Cref{fig:IG-linear} demonstrates that our weighted tree split formulation from \Cref{lemma:IG_weighted_optimal_treshold} closely aligns with the empirically computed argmax, thereby validating our derivation.

\begin{figure}[h]
    \centering
    \includegraphics[width=0.7\linewidth]{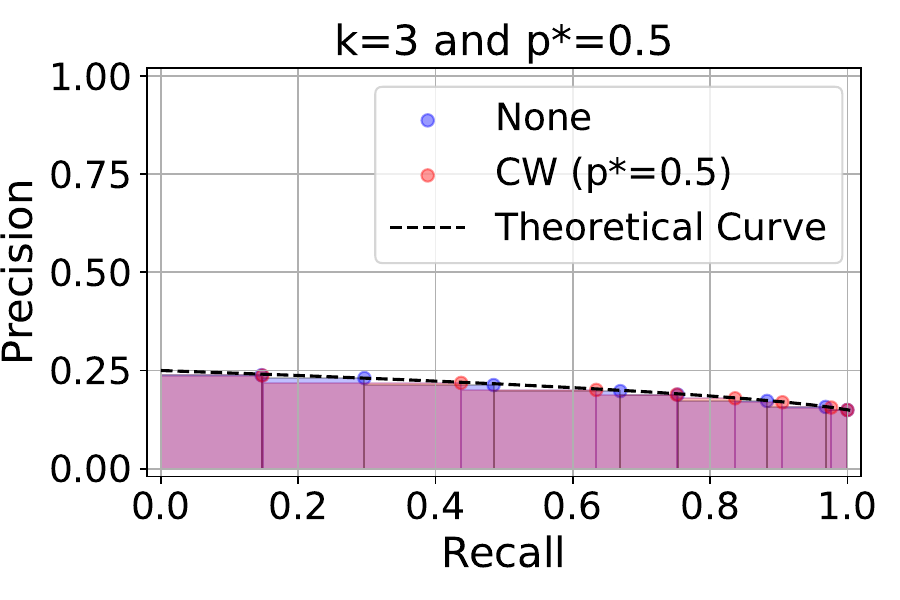}
    \caption{PR curves of  None and CW for a linear regression function with $\alpha=0.2$ and $\beta=0.05$.}
    \label{fig:prcurves-linear}
\end{figure}

\Cref{fig:prcurves-linear} illustrates the PR curves for both the standard decision tree, unweighted shown in blue ($p=0.15$) and  class-weighted in red, for $p^{\star}=0.5$ and maximum depth equal to $3$. Each classifier is trained on $500.000$ samples and evaluated on $500.000$. 
We also display the theoretical curve derived from \Cref{th:depthk-prec-rec-correspondance}. 
We observe that although the empirical points for the unweighted and weighted models are distinct, they all align precisely with our derived theoretical curve.
Furthermore, in \Cref{appendix:supp_num_exp}, we illustrate on \Cref{fig:prcurves-linear-all} settings with distinct value of $p^{\star} \in \{0.2, 0.5, 0.7\}$, and maximum depth within $\{2, 3, 4, 5\}$.

\section{EMPIRICAL COUNTERPART}
 
Although practitioners often observe empirical performance gains from weighting, our theoretical analysis demonstrates that under well-estimated regression functions or suitable structural assumptions, we can show that both methodologies yield identical metrics such as PR AUC.
%under infinite-sample conditions, fixed model complexity, and a linear regression function, weighted and unweighted strategies yield identical PR curves. 
%In this section, we investigate the underlying mechanisms that drive the predictive advantages of weighting in practice when the regression is not well-estimated.
This section investigates the source of the empirical advantages of weighting when the regression function is not well-estimated.

\subsection{\textit{ExpSetting}}
We consider a regression function, denoted as \textit{ExpSetting}, defined by the following expression:
$$\mu(x) = \exp(-35x) + \exp(-35(1-x)).$$
For our experimental analysis, we augment this base function with an isolated Gaussian peak $\nu$, given by: $$\nu(x) = a\exp\left( -\frac{1}{2} \left(\frac{x - b}{ c}\right)^2 \right),$$ where $a,b,c \in \mathds{R}$ represent respectively the peak height, localisation and width.
\Cref{fig:expsetting-regcurves} illustrates the \textit{ExpSetting} function with several peak parameters.
%\es{le pic n'apparaît pas dans l'expression ci-dessus. Il faut l'ajouter sinon on ne comprend pas d'où il vient}
%\es{Ce serait bien d'avoir un graphique pour visualiser la fonction, avec et sans pic} 
\begin{figure}[h]
    \centering
    \includegraphics[width=1.0\linewidth]{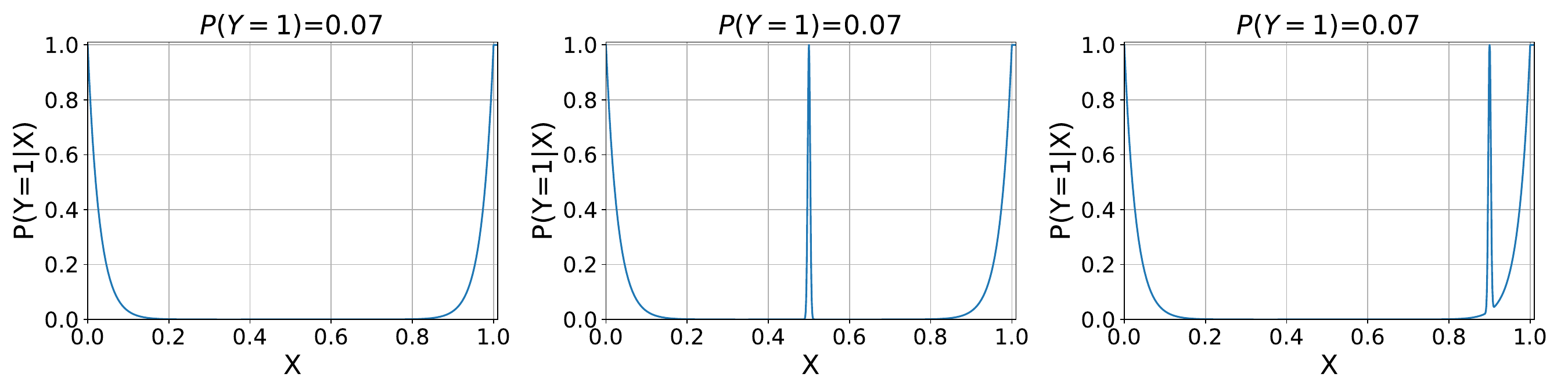}
    \caption{\textit{ExpSetting} with a peak across localisations.}
    \label{fig:expsetting-regcurves}
\end{figure}

As a first illustration,  \Cref{fig:boxplot_main_alone} presents the PR-AUC performance gaps for the \textit{ExpSetting} configuration, comparing scenarios with and without a localized peak at $b=0.5$ (evaluated at $p^{\star}=0.5$ and depth $k=4$). We observe that the weighting strategy yields better predictive performance in the presence of the peak. \Cref{fig:pr_reg_curves_no_peak} and \Cref{fig:pr_reg_curves_with_peak_0-5} in \Cref{appendix:supp_num_exp} extend these results across diverse maximum depths and $p^{\star}$ values, alongside the estimated regression functions and PR curves. These detailed appendix evaluations show that the performance gain arises because weighting enables the tree to successfully capture the localized peak, whereas unweighted models fail to detect it. 
%

%We observe that the weighting strategy yields an improved PR AUC at least in the presence of the localized peak at $x=0.5$.%\es{je trouve la formulation un peu ambigüe: je comprends que si le pic se trouve  à $x=0.4$ on ne voit pas d'amélioration.}
%\es{j'aurais peut-être remonté dans le corps du texte une sous figure de 15 pour montrer visuellement l'intérêt de la repondération}%\es{je ne suis pas fan de la sous figure de droite, je mets personnellement du temps à la comprendre. Et c'est une figure qui ne véhicule pas beaucoup d'information (information binaire pour chaque position du pic). Pour ma part, je garderais seulement celle de gauche}
\begin{figure}[h]
    \centering
    \begin{subfigure}[b]{0.23\textwidth}
        \centering
        \includegraphics[width=1.0\linewidth]{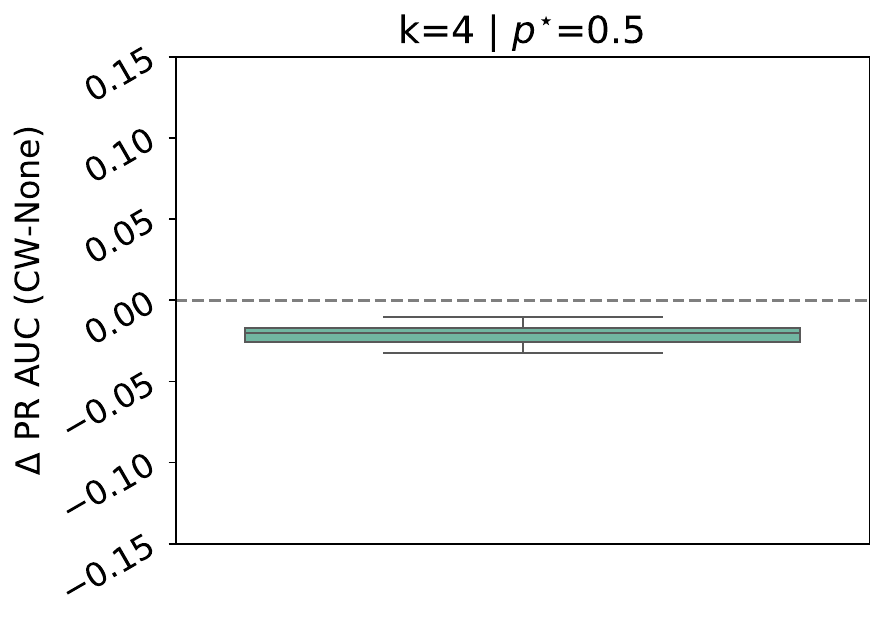}
        \caption{Without peak.}
        \label{fig:boxplot_alone_without_peak}
    \end{subfigure}\hfill
    \begin{subfigure}[b]{0.23\textwidth}
        \centering
        \includegraphics[width=1.0\linewidth]{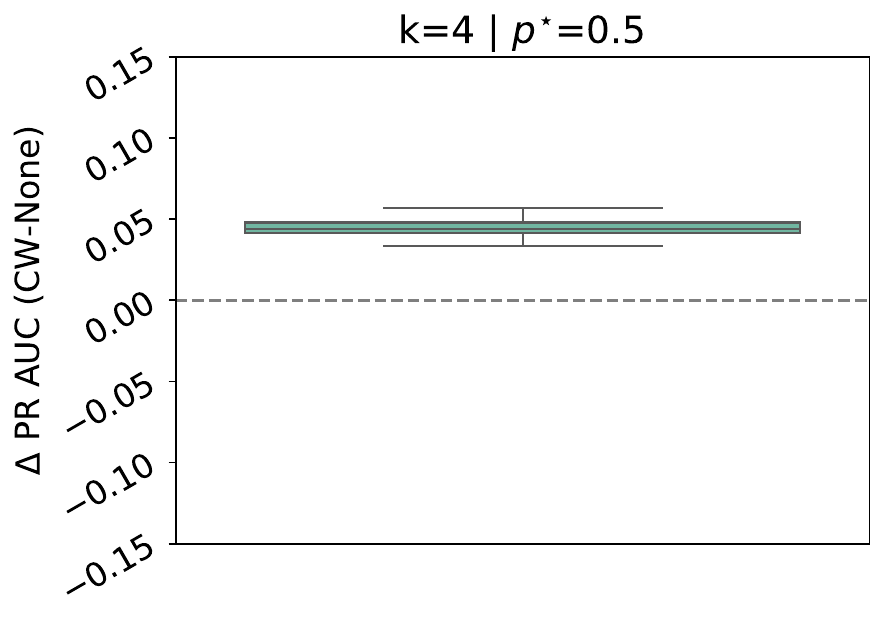}
        \caption{ With peak. }
        \label{fig:boxplot_alone_with_peak}
    \end{subfigure}
    \caption{PR-AUC performance gap between class-weighted (CW) and unweighted models under the \textit{ExpSetting} data distribution. %Detailed results are provided in \Cref{fig:pr_reg_curves_no_peak} and \Cref{fig:pr_reg_curves_no_peak} in \Cref{appendix:supp_num_exp}.
    }
    \label{fig:boxplot_main_alone}
\end{figure}

\begin{figure*}[h]
    \centering
    \includegraphics[width=0.75\linewidth]{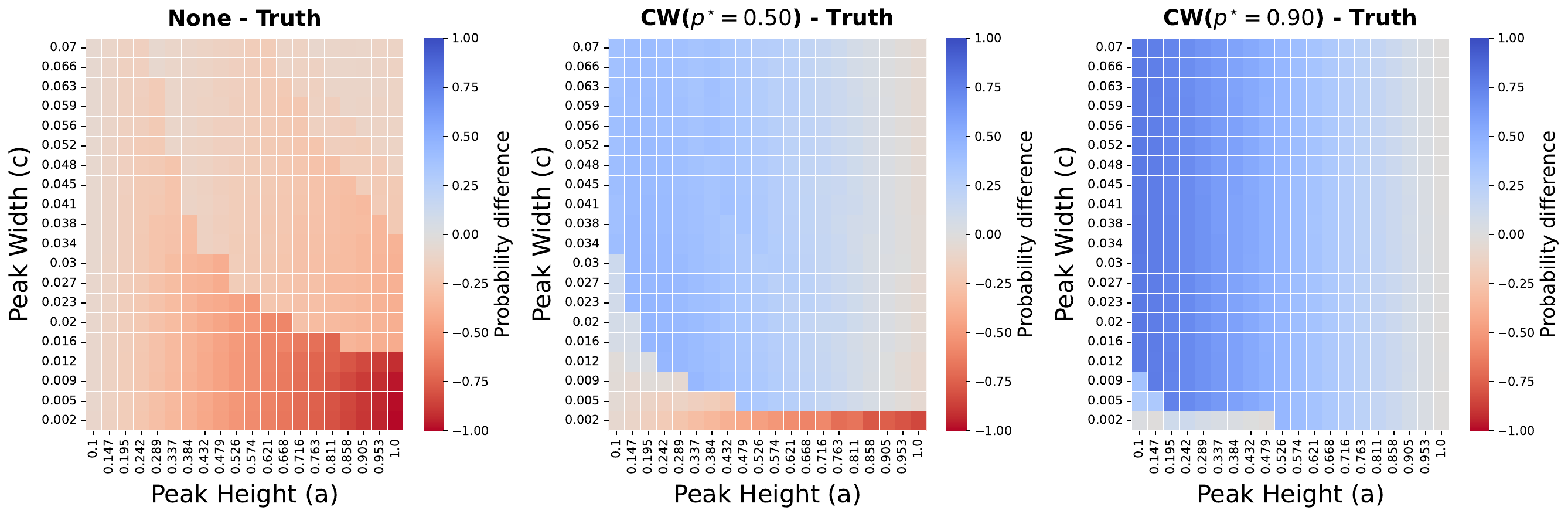}
    \caption{\textit{ExpSetting}: CART tree difference between predicted and ground-truth peak probability across peak widths and heights ($b=0.5$), for the unweighted (None) alongside weighting with $p^{\star} = 0.5$ and $p^{\star} =0.9$ ($k= 4$).}
    \label{fig:sim-heatmap-expsetting}
\end{figure*}

%\Cref{fig:sim-heatmap-expsetting} presents a scenario where both unweighted and weighted decision trees are evaluated on the \textit{ExpSetting} regression function\es{comme pour les autres expériences, je ne pense pas que tu aies besoin de le repréciser}. 
In what follows, we conduct a numerical analysis to investigate the mechanisms governing peak detection.
In \Cref{fig:sim-heatmap-expsetting}, the left panel displays the difference between the unweighted classifier's probability predictions and the true peak probability. The middle and the right panels illustrate the same difference for the class-weighted classifier for respectively $p^{\star}=0.5$ and $p^{\star}=0.9$. We fix the peak location at $b=0.5$ and vary its height $a$ and width $c$.
%\es{d'où l'intérêt d'introduire un modèle général en début de section pour ensuite dire seulement que tu fixes tels paramètres ou que tu fais varier tel autre dans chaque sous section. Cela permet de renforcer l'idée d'un cadre global} 
%
%
We observe that the unweighted model fails to detect the peak under most configurations; its heatmap is predominantly red, indicating poor detection except in the regime of large widths and high heights. More precisely, the baseline strategy introduces a negative bias, underestimating the true peak probability. 
In contrast, the weighted approach captures nearly all peaks when $p^{\star}=0.5$. When shifting to $p^{\star}=0.9$, the model's sensitivity improves further, allowing it to detect even subtle peaks with low height and small width. %However, while weighting introduces an overestimation of the peak amplitude, this behavior is particularly valuable in practical scenarios for flagging rare events. Moreover, any structural overestimation can be easily regulated after training by leveraging the inverse relationship that maps $\mu^*$ to $\mu$ (see \Cref{lemma:main-mustar}).Consequently, by altering the split selection criteria, class weighting facilitates easier peak detection compared to the unweighted baseline, thereby yielding superior predictive performance for a fixed tree depth.
%Although weighting overestimates peak amplitude, this bias may detect localized peak. Through the inverse mapping $\mu^{\star} \to \mu$ (see \Cref{lemma:main-mustar}), . %\es{cela me paraît dangereux de dire cela : 1) est-ce vrai ? Le mapping va corriger une partie du biais mais il n'y aucune garantie qu'il corrige tout le biais ? 2) Si c'était vrai, cela voudrait dire qu'en appliquant un seuil bien choisi à l'estimateur None on arriverait à obtenir de bons résultats aussi (ce n'est pas tout à fait équivalent mais il faudrait le discuter plus en détail pour qu'on comprenne la subtilité} \as{je suis pas sûr de vouloir en parler finalement}.
Consequently, by altering split selection, class weighting may enhance peak detection and boost performance for a fixed tree depth. 
%\es{on pourrait aussi dire qu'il y a biais de sous-estimation}
%\es{mais dans ce cas il y a un biais de surestimation. Tu peux dire que cela peut être intéerssant pour détecter des observations rares}
%
We observe a similar phenomenon when increasing the dimensionality of the input features to $d=5$ in the presence of an isotropic peak (see \Cref{fig:sim-heatmap-expsetting-dim5} in \Cref{appendix:supp_num_exp}). %In this higher-dimensional setting, although a larger peak width is required to facilitate detection, the majority of these localized peaks are still successfully captured exclusively by the weighting strategy.
% \begin{figure}[t]
%     \centering
%     \includegraphics[width=1.0\linewidth]{images/heatmap_peak_area_pstar_0.5_depth_4.pdf}
%     \includegraphics[width=1.0\linewidth]{images/heatmap_peak_area_pstar_0.9_depth_4.pdf}
%     \caption{\textit{ExpSetting} regression function: peak probability minus Truth probability with respect to peak width and height.}
%     \label{fig:sim-heatmap}
% \end{figure}

%\es{il faudrait mettre probability difference en légende car une probability négative ça chose (même si c'est expliqué dans la légende)}
%\es{peut-être plutôt "Peak height bias" plutôt que Probability au-dessus des graphiques}
%\es{on voit donc que None sous estime tout le temps}
%\as{Avec la correspondance entre mu et mustar, on peut corriger la surestimation de CW si besoin. Pertinent d'en parler ? Et ce serait sympa d'avoir les fonctions de regression avec les courbes PR en annexe.}
%\clearpage

\subsection{\textit{SinSetting}}

To assess the generality of our previous findings, we extend our analysis to a sinusoidal regression function, denoted \textit{SinSetting}, for which complete experimental details are provided in \Cref{appendix:sinsetting}. The \textit{SinSetting} regression function is defined as:
$$\mu(x) = 0.15 sin\left(2\pi x+1.2\right) +0.16.$$ As before, this base function is augmented with the same peak $\nu$ (illustrated in \Cref{fig:sinsetting-regcurves}) 
%
%In the following, we look at another specific regression function named \textit{SinSetting}  and defined as follows: $$\mu(x) = 0.15sin\left(2\pi x+1.2\right) +0.16.$$ Once again, we augment this base function with the same isolated Gaussian peak $\nu$. 
%
%\Cref{fig:sinsetting-regcurves} illustrates the \textit{SinSetting} function. %, demonstrating the presence and absence of a peak across various localisations.
Replicating the experimental setup used for \textit{ExpSetting} yields consistent findings: when model capacity cannot fully resolve the underlying regression function, sharp peaks can induce performance gaps in favor of weighting. % between weighted and unweighted strategies.
\begin{figure}[h]
    \centering
    \includegraphics[width=1.0\linewidth]{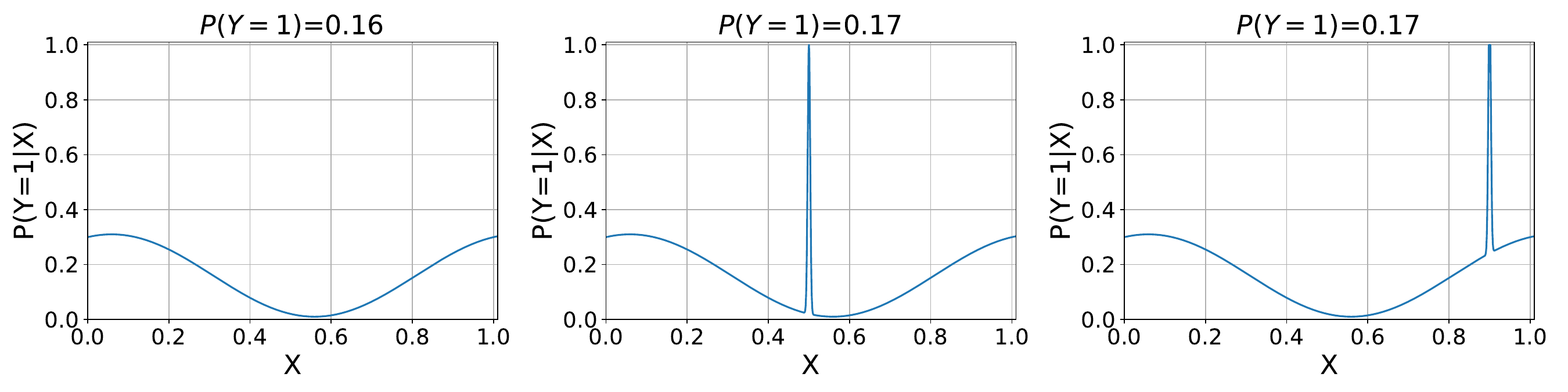}
    \caption{\textit{SinSetting} with a peak across localisations.}
    \label{fig:sinsetting-regcurves}
\end{figure}

\begin{figure*}[h]
    \centering
    \includegraphics[width=0.3\linewidth]{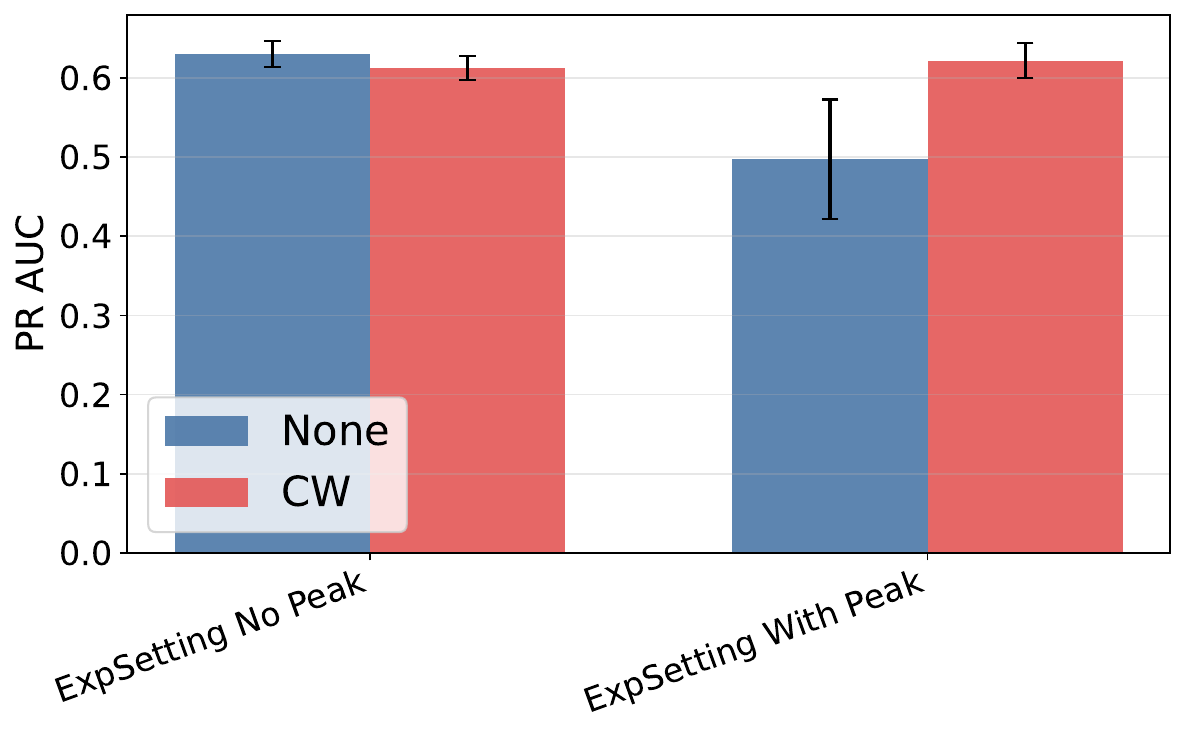}
    \includegraphics[width=0.3\linewidth]{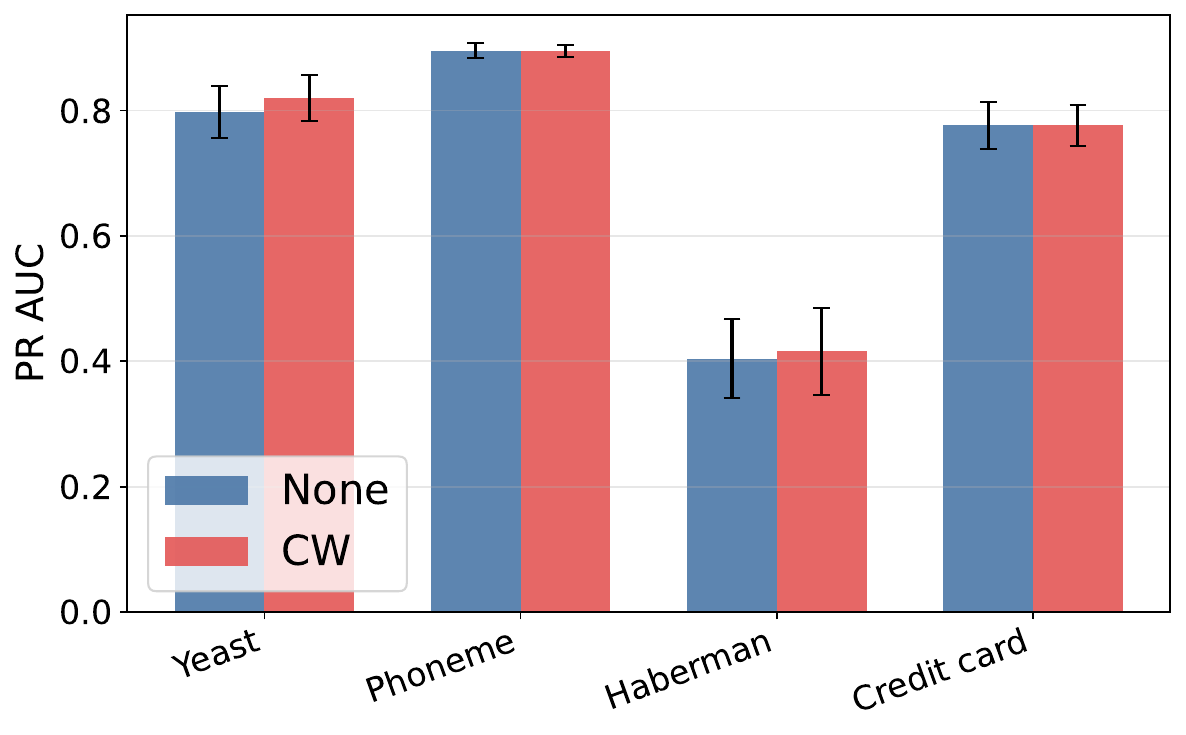}
    \includegraphics[width=0.3\linewidth]{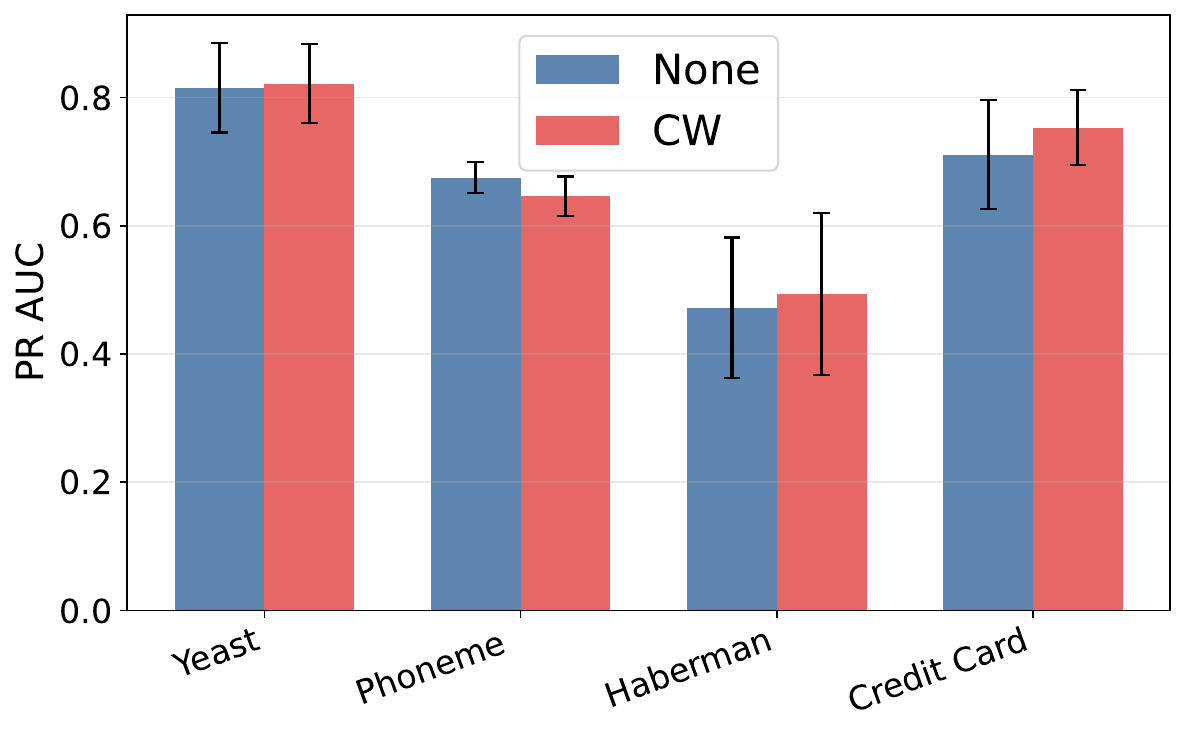}
    \caption{PR AUC comparison across model configurations with $p^\star = 0.5$: (left) decision tree with maximum depth $4$ evaluated on \textit{ExpSetting}, (center) unconstrained random forest evaluated on real-world data sets, and (right) $2$-depth-constrained random forest evaluated on real-world data sets.}
    \label{fig:res-models-expsetting}
\end{figure*}
%\es{tu évalues bien les performances sur le jeu de test ?} \as{Oui avec une $5-$CV}

%Indeed, \Cref{fig:sim-heatmap-sinsetting} in \Cref{appendix:sinsetting}  presents a heatmap analogous to \Cref{fig:sim-heatmap-expsetting}, evaluated on the \textit{SinSetting} regression function which features a peak at $b=0.6$. %%%Consistent with our earlier findings, the baseline None strategy fails to detect the peak when it is characterized by a large height and a narrow width. However, this peak is accurately estimated across a notably broader range of parameter configurations than in the \textit{ExpSetting}. %%%Thus, the results confirm that the weighting strategy successfully recovers the peak in geometric regimes where it otherwise goes undetected.
%Complete experiments on \textit{SinSetting} are provided in \Cref{appendix:sinsetting}.
%%%Consistent with the findings in the previous section, the \textit{SinSetting} configuration yields an improved PR AUC under the weighting strategy for specific peak locations.

%Consequently, the presence of sharp peaks within low-signal zones, coupled with a limited sample size or capacity to fully estimate the underlying regression function, can drive significant practical performance discrepancies between weighted and unweighted strategies. 
To assist practitioners in determining whether to incorporate weighting into their pipelines, we introduce a diagnostic indicator. %isolated, high-purity local peaks within a data set.
\begin{table}[h]
\center
\caption{Data sets.
%, where $N$ is the total number of samples, $n$ is the number of minority samples, and $d$ is the number of features.
}
\label{table:Chap2:data-sets}
    \begin{tabular}{lccr} 
         \toprule
          &  $\#$Samples & $\frac{\#Positive}{\#Samples}$ & $d$   \\ 
         \midrule
         Haberman & $306$ & $26\%$ & $3$\\

         %Pima & $768$ & $35\% $ & $8$   \\
         Yeast %(ME3vsRest) 
         & $1\,462$ & $11\%$ & $8$ \\
         %Abalone %(18vsRest)  &$4\,177$ & $1\% $ &$8$   \\
         %\shortstack{Wine %Quality \\ (vsRest)}  & $4\,974$ & $4\% $ & $11$ \\  
         Phoneme & $5\,404$ & $29\%$ & $5 $    \\
         Credit Card & $284\,315$ &$0.2\%$ & 29 \\
 \bottomrule
\end{tabular}
\end{table}
%\es{c'est voulu d'avoir un intervalle de confiance plutôt que de "colorier" la zone sous la courbe? La seconde solution n'est pas plus habituelle?} \as{Je ne m'y connais absolumebnt. J'ai fait la première car ça me semblait plus simple à implémenter. je vais me renseigne rpour la seconde option. Mais j'ai déjà vu la première option dans des articles.}

\subsection{Peak Indicator}
To diagnose the hidden structures introduced in the previous sections, we propose an indicator.
For each positive sample ($Y=1$), we have:
\begin{equation}
    \mathbf{\tilde{C}} = \frac{D^{-} - D^{+}}{D^{-} + D^{+}},
\end{equation}
where $D^{+}$ represents the mean distance from the positive instance to its $B$ nearest neighbors within the same class ($Y=1$), and $D^{-}$ represents its mean distance to its nearest neighbors within the opposite class ($Y=0$). %\es{j'aurais besoin que tu me dises ici si cette distance ressemble à celles utilisées dans les différentes version de NearMiss et si elle diffère, pourquoi est-ce qu'on pense qu'elle est meilleure} \as{Ajouter comparaisons NM1 et NM2.} 
While instance complexity metrics such as the Silhouette coefficient \citep{rousseeuw1987silhouettes} or the overlap measure \citep{lorena2019complex} rely on $D^{+}$ and $D^{-}$, $\mathbf{\tilde{C}}$ introduces a different ratio that restricts distance evaluations to a local neighborhood. %This provides a smooth, instance-level indicator of localized density relative to class boundaries.
Furthermore, our formulation differs from established distance-based metrics in the literature, such as those used in NearMiss undersampling strategies \citep{mani2003knn}. Specifically, NearMiss-1 measures the mean distance from majority class samples to their nearest minority class neighbors, whereas NearMiss-2 evaluates the distance to the farthest minority class neighbors. Neither metric accounts for the ratio of intra-class to inter-class distances like our indicator.

The distribution of $\mathbf{\tilde{C}}$ across all positive samples is subsequently visualized using a histogram.
A high concentration of samples near $1$ in the peak indicator histogram signifies that a large portion of the minority class instances are highly clustered together yet spatially isolated from the majority class, indicating the presence of localized peaks. Thus, such data sets are highly likely to require a weighting procedure to achieve better performance if the downstream model cannot perfectly estimate the regression function.

\begin{figure}[h]
    \centering
    \begin{subfigure}[b]{0.23\textwidth}
        \centering
        \includegraphics[width=1.0\linewidth]{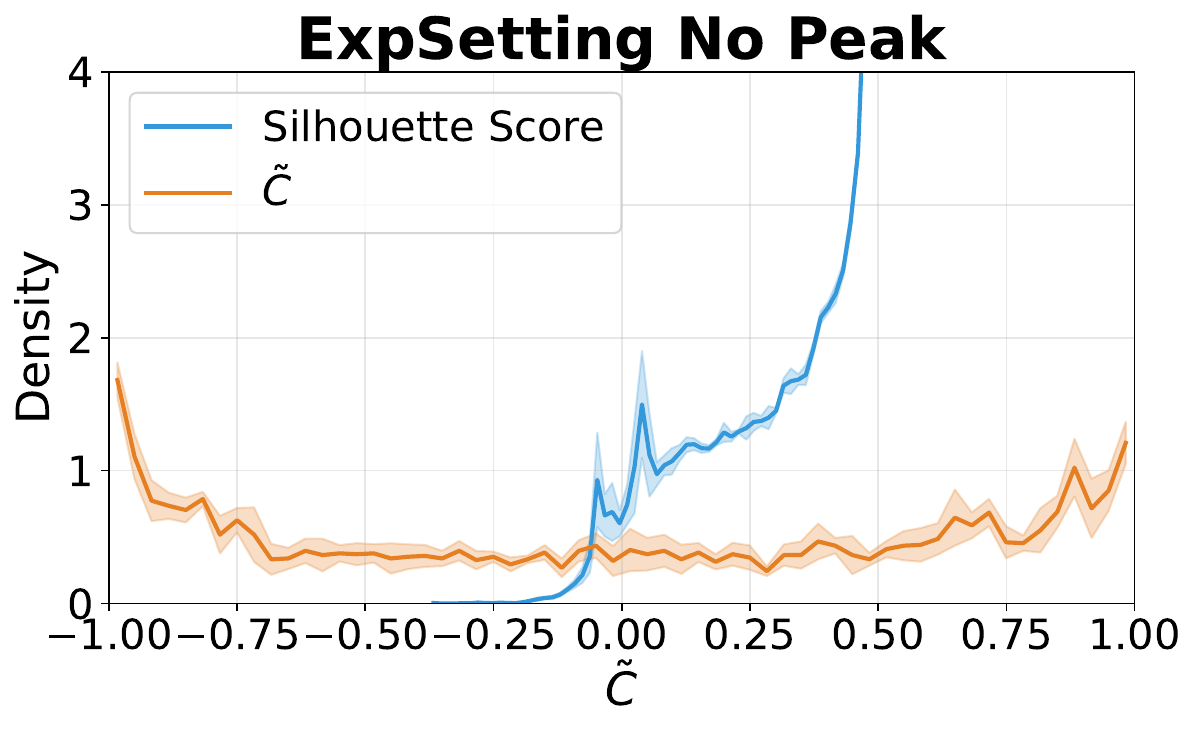}
        \caption{\textit{ExpSetting}}
        \label{fig:peak-k1-nopeak}
    \end{subfigure}\hfill
    \begin{subfigure}[b]{0.23\textwidth}
        \centering
        \includegraphics[width=1.0\linewidth]{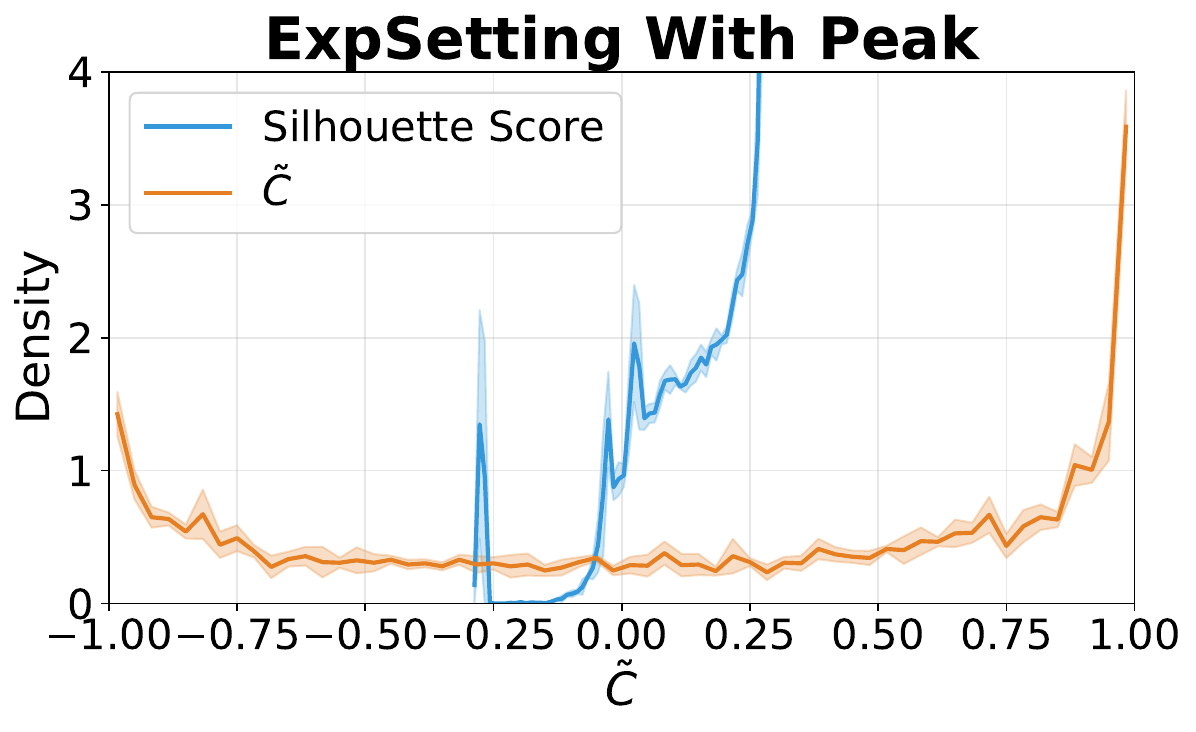}
        \caption{\textit{ExpSetting} with peak }
        \label{fig:peak-k1-withpeak}
    \end{subfigure}\hfill
    \begin{subfigure}[b]{0.23\textwidth}
        \centering
        \includegraphics[width=1.0\linewidth]{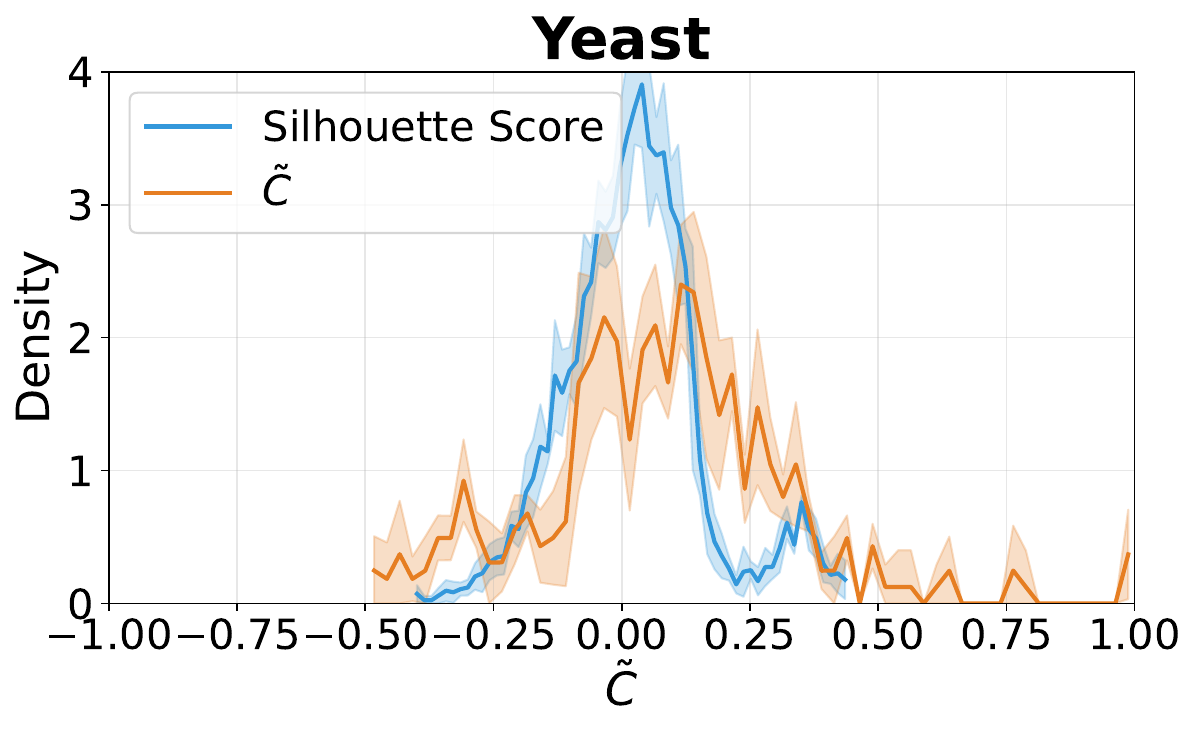}
        \caption{Yeast}
        \label{fig:peak-k10-nopeak}
    \end{subfigure}\hfill
    \begin{subfigure}[b]{0.23\textwidth}
        \centering
        \includegraphics[width=1.0\linewidth]{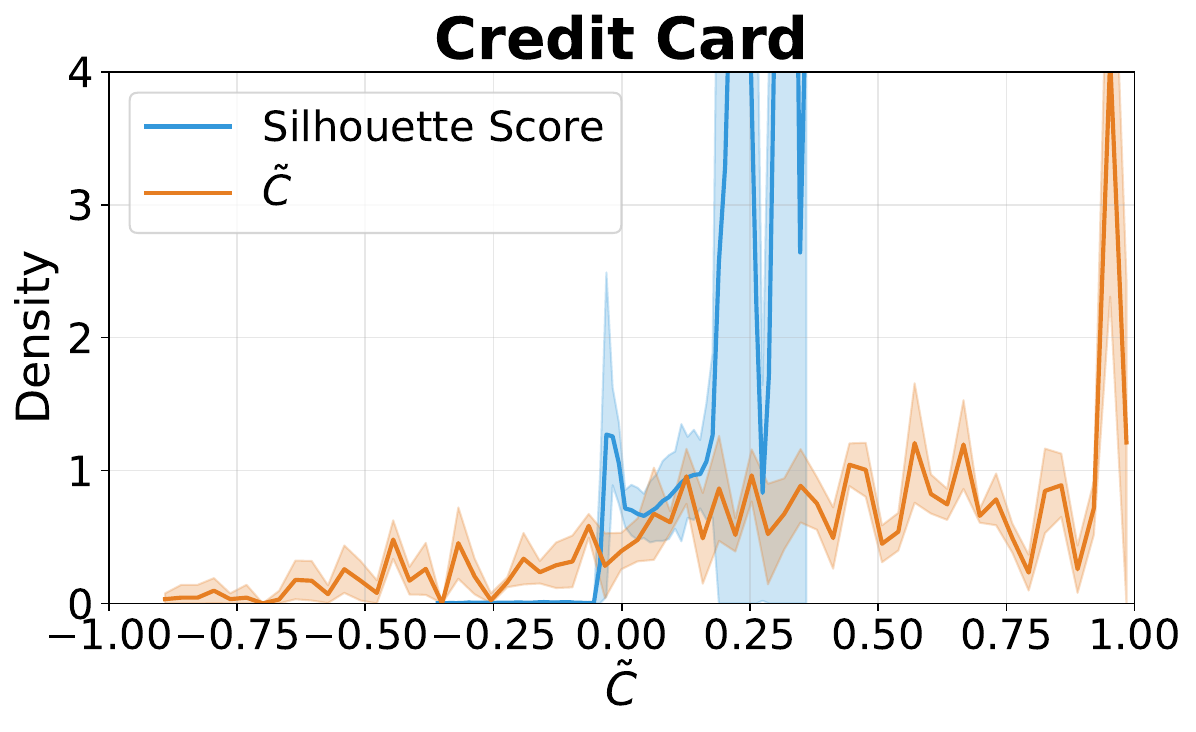}
        \caption{CreditCard}
        \label{fig:peak-k10-withpeak}
    \end{subfigure}

    %\caption{Peak indicator for the \textit{ExpSetting} configuration, Yeast and CreditCard for $B=1$.}
    \caption{Peak indicator ($B=1$) and Silouhette samples scores for the \textit{ExpSetting}, Yeast and CreditCard.}
    \label{fig:peak-indicator-expsetting}
\end{figure}

\Cref{fig:peak-indicator-expsetting} (alongside \Cref{appendix:fig:peak-indicator-expsetting} in the Appendix) illustrates the peak indicator distributions for the \textit{ExpSetting} data set under both peaked ($b=0.5$) and non-peaked configurations, while \Cref{fig:res-models-expsetting} details the corresponding predictive performances. We observe that introducing a sharp peak alters the distribution profile, shifting probability mass toward $\mathbf{\tilde{C}} \approx 1$ across both $B=1$ and $B=10$. For the peaked \textit{ExpSetting} regime, the weighting strategy yields predictive gains. This distributional shift underscores the capacity of $\mathbf{\tilde{C}}$ to effectively identify problem regimes that stand to benefit from a weighting-based strategy. In contrast, standard Silhouette sample scores fail to detect the presence or absence of a localized peak.

Then, we employ $4$ tabular data sets from \cite{Dua:2019},    \cite{alinat1993periodic} (Phoneme) and  \cite{dal2015calibrating} (Credit Card).
The characteristics of the data sets are summarized in \Cref{table:Chap2:data-sets}. Their corresponding peak indicator histograms are displayed in \Cref{fig:peak-indicator-expsetting} and \Cref{fig:peak-indicator-real-dataset} (see \Cref{appendix:supp_num_exp}), while their downstream predictive performances are detailed in \Cref{fig:res-models-expsetting}. Consistent with our previous findings, we observe a strong empirical alignment: the data sets where class-weighting successfully improves the PR AUC are precisely those whose peak indicator histograms exhibit a concentration of samples near $1$, such as Credit Card data set.
Notably, for unconstrained (fully developed) random forests (\Cref{fig:res-models-expsetting} center), predictive performance with or without weighting becomes negligible. This behavior aligns with \Cref{lemma:main-mustar}, as fully grown trees may accurately approximate the true regression function. Conversely, for Credit Card data set, when maximum depth is constrained, we observe that the weighting-based strategy yields better predictive performance, while our peak indicator indicates that the data set may contain localized peaks.
Besides, increasing to $B=10$ effectively filters out noise. Indeed, the Haberman data set displays a minor peak near $\tilde{C}=1$ under $B=1$ that disappears at $B=10$, explaining the absence of gains from weighting.

% \begin{table}[h]
%     \centering
%     \begin{tabular}{lcccr}
%     \toprule
%          Data set & unweighted & Weighting &Weighting\\
%           &  &  ($p^{\star}=0.5$) &  ($p^{\star}=0.9$)\\
%           \midrule
%          \textit{ExpSetting} No peak & \textbf{0.569} &0.554 & 0.550\\
%          %ExpSetting With peak & 0.552 & \textbf{0.645}\\
%          ExpSetting With peak & 0.588 & \textbf{0.607} &0.549\\
%          \midrule
%          Haberman & 0.388 & 0.389 &0.389\\
%          Yeast & \textbf{0.844} & 0.821 &  0.799\\
%          Phoneme & 0.868& \textbf{0.871} &0.841\\
%     \bottomrule    
%     \end{tabular}
%     \caption{CART tree PR AUC on \textit{ExpSetting} and random forest PR AUC for real-world data sets.}
%     \label{tab:res-models}
% \end{table}

%\clearpage
\section{CONCLUSION}

In this paper, we provide insights for weighting-based rebalancing strategies. First, we theoretically study the estimation of a linear regression function through weighted and unweighted CART trees with limited depths. In this setting, we prove that the points in the PR space for both strategies are distributed along the same curve, thus illustrating the similar non-asymptotic performance of the two strategies.  
%s
Beyond the linear case, under fixed model complexity constraints, standard unweighted classifiers suffer from a negative behavior that may underestimate sharp minority peaks. Conversely, weighting strategies capture these structures, improving their predictive performance.
%
%To transition these insights into actionable engineering workflows, we introduced an indicator, $\tilde{C}$. By quantifying instance-wise spatial isolation via nearest-neighbor, this tool serves as a lightweight diagnostic filter. It accurately signals whether a given data set requires a weighting strategy. 
%before a practitioner spends computational budget on extensive grid-search optimization pipelines. 
%
Then, we introduce a peak indicator $\tilde{C}$ that aims to determine whether a given data set benefits from a weighting strategy if the regression function is not well estimated.

%Thus, our findings redefine how class weighting should be viewed: class weighting does not change what a model can inherently learn, it simply provides a way to tune a capacity-limited model.
%
%A future work is to generalize our exact PR curve derivations to arbitrary regression functions and broader classifiers. 

\clearpage
\subsection*{AI use statement}

%(This section is \textbf{required} and does not count toward the page limit.)

In this work, we used generative AI tools for rephrasing and text formatting.
Generative AI tools were used solely to assist with the algebraic factorization of $f_{X^{\star}}$ in \Cref{lemma:IG_weighted_optimal_treshold}, and were not employed for any other theoretical derivations.
Additionally, we used generative AI tools for the preparation of supporting materials through Python code.
We have reviewed line by line all AI-assisted work, including the formatting code.
We take responsibility for the final content of this work, including text, claims, or artifacts produced with the aid of generative AI.

%See the AISTATS 2027 AI Policy for Authors for more details. This statement should not be more than 1 page.

% \subsubsection*{Acknowledgements}
% All acknowledgements go at the end of the paper, including thanks to reviewers who gave useful comments, to colleagues who contributed to the ideas, and to funding agencies and corporate sponsors that provided financial support.
% To preserve the anonymity, please include acknowledgements \emph{only} in the camera-ready papers. The acknowledgements do not count against the 9-page page limit in the camera-ready.

%\clearpage
\bibliography{refs}
\bibliographystyle{plainnat}

%%%%%%%%%%%%%%%%%%%%%%%%%%%%%%%%%%%%%%%%%%%%%%%%%%%%%%%%%%%%
%\clearpage

% %%% BEGIN INSTRUCTIONS %%%
%The checklist follows the references. For each question, choose your answer from the three possible options: Yes, No, Not Applicable.  You are encouraged to include a justification to your answer, either by referencing the appropriate section of your paper or providing a brief inline description (1-2 sentences).
%Please do not modify the questions.  Note that the Checklist section does not count towards the page limit. Not including the checklist in the first submission won't result in desk rejection, although in such case we will ask you to upload it during the author response period and include it in camera ready (if accepted).

%\textbf{In your paper, please delete this instructions block and only keep the Checklist section heading above along with the questions/answers below.}
% %%% END INSTRUCTIONS %%%

\clearpage

\thispagestyle{empty}

% Supplementary material: To improve readability, you must use a single-column format for the supplementary material.
\onecolumn
\doparttoc
\faketableofcontents
\part{}
\appendix

\aistatstitle{Supplementary Materials}

\vspace{1cm}
\parttoc % Insert the appendix TOC
\newpage

%\addcontentsline{toc}{chapter}{Appendix}

\section{Supplementary numerical experiments}
\label{appendix:supp_num_exp}

\subsection{General additional numerical experiments}
For all experiments, we use an AMD
Ryzen Threadripper PRO 5955WX (16 cores, 4GHz) with 256GB RAM.

\begin{figure}[h]
    \centering
    \includegraphics[width=0.8\linewidth]{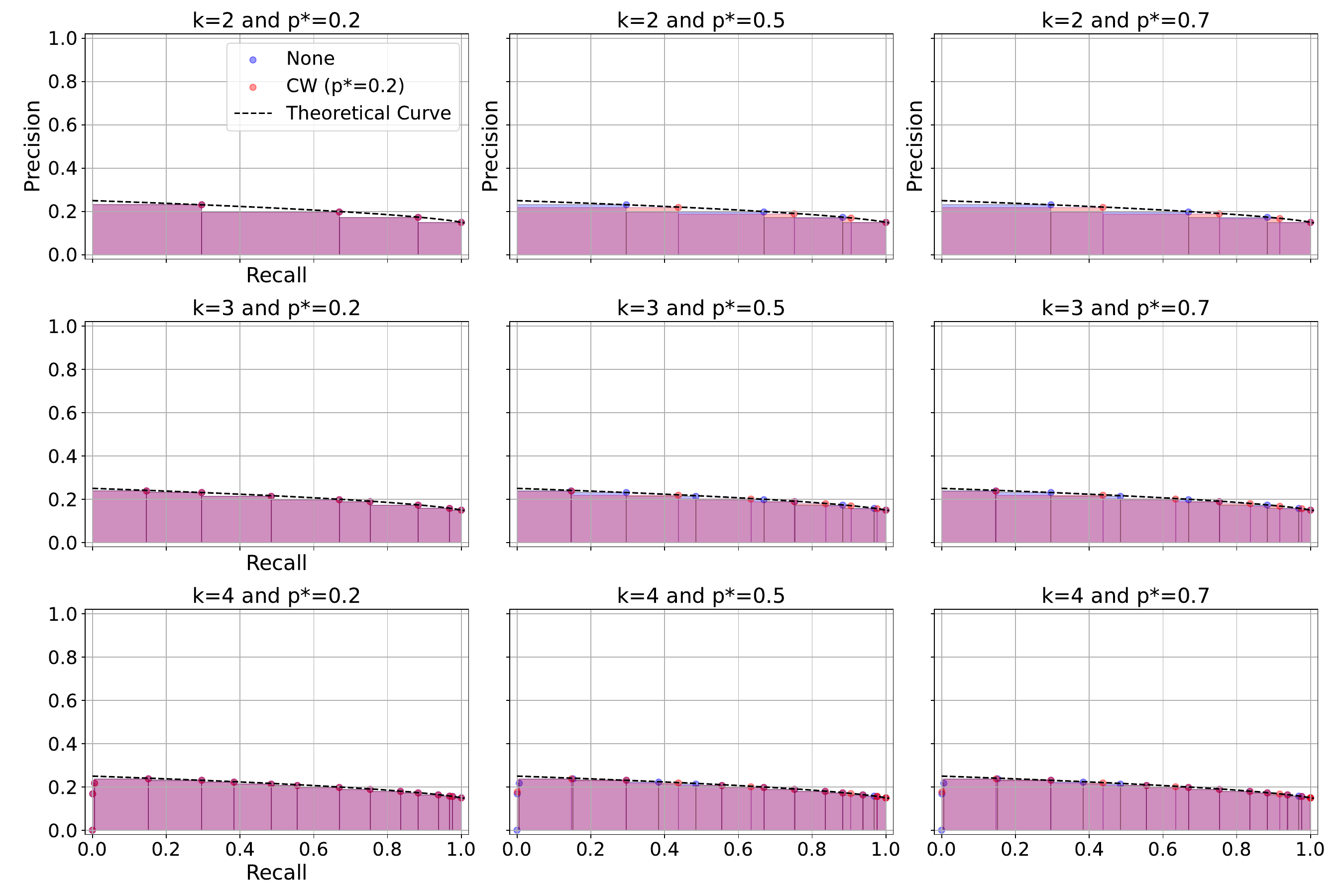}
    \caption{PR curves of  None and CW for a linear regression function with $\alpha=0.2$ and $\beta=0.05$.}
    \label{fig:prcurves-linear-all}
\end{figure}

%Here, the peak indicators results of various real-world data sets are shown with $B=1$ and $B=10$.

\begin{figure}[h]
    \centering
    \begin{subfigure}[b]{0.23\textwidth}
        \centering
        \includegraphics[width=1.0\linewidth]{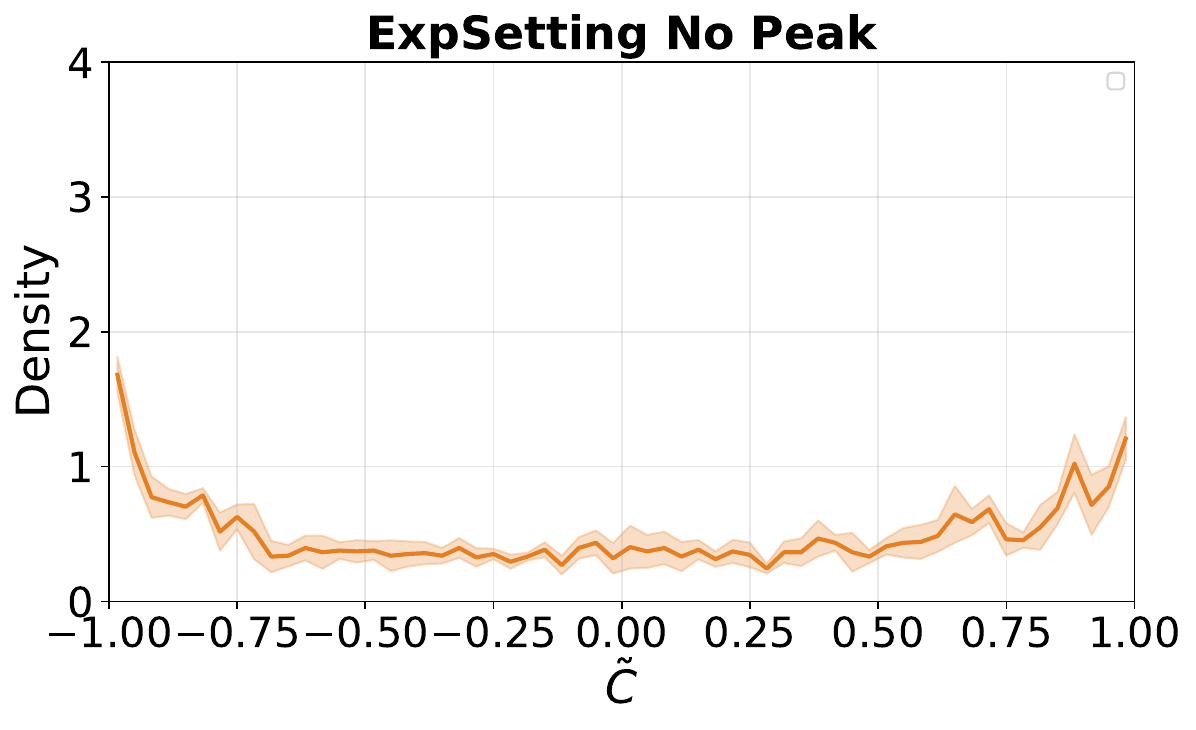}
        \caption{No peak ($B=1$)}
        \label{appendix:fig:peak-k1-nopeak}
    \end{subfigure}\hfill
    \begin{subfigure}[b]{0.23\textwidth}
        \centering
        \includegraphics[width=1.0\linewidth]{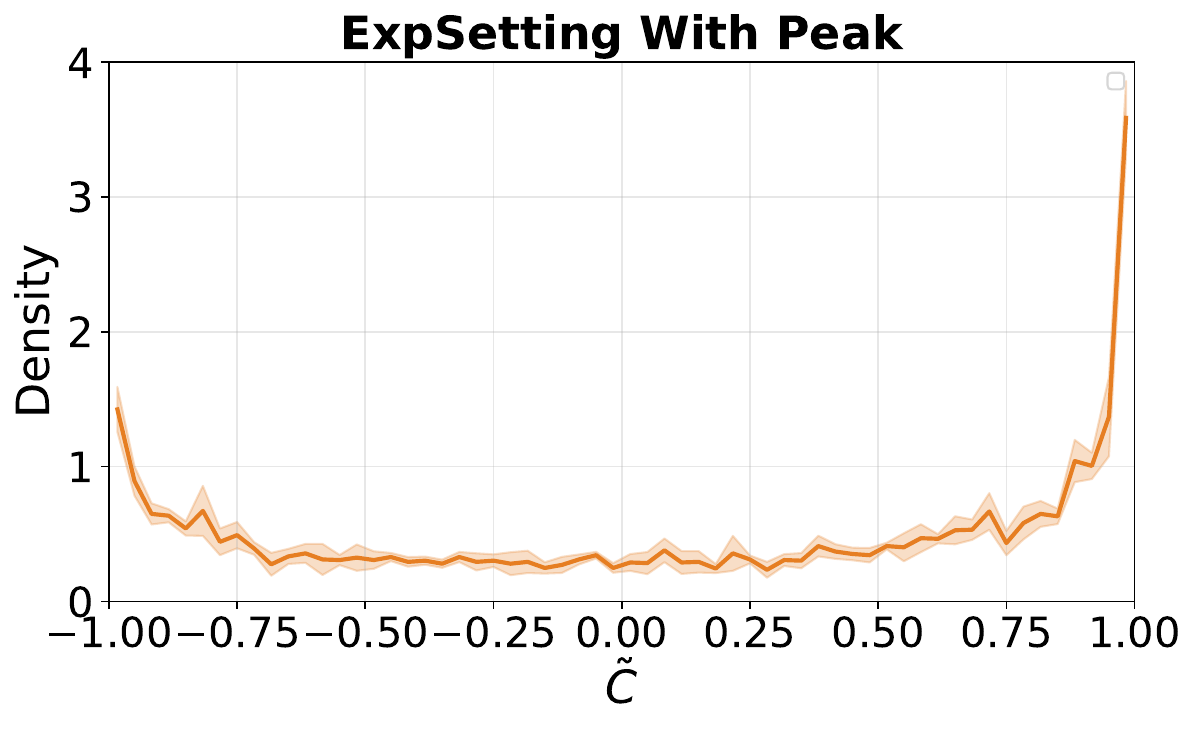}
        \caption{With peak ($B=1$)}
        \label{appendix:fig:peak-k1-withpeak}
    \end{subfigure}\hfill
    \begin{subfigure}[b]{0.23\textwidth}
        \centering
        \includegraphics[width=1.0\linewidth]{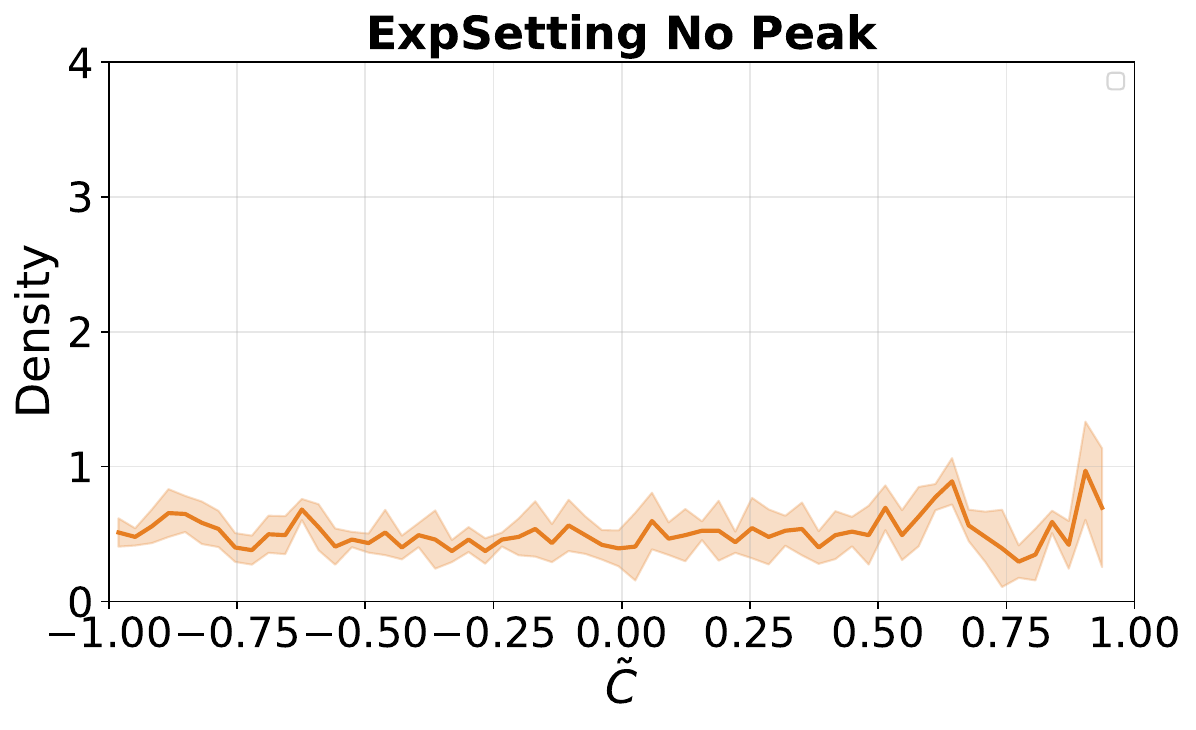}
        \caption{No peak ($B=10$)}
        \label{appendix:fig:peak-k10-nopeak}
    \end{subfigure}\hfill
    \begin{subfigure}[b]{0.23\textwidth}
        \centering
        \includegraphics[width=1.0\linewidth]{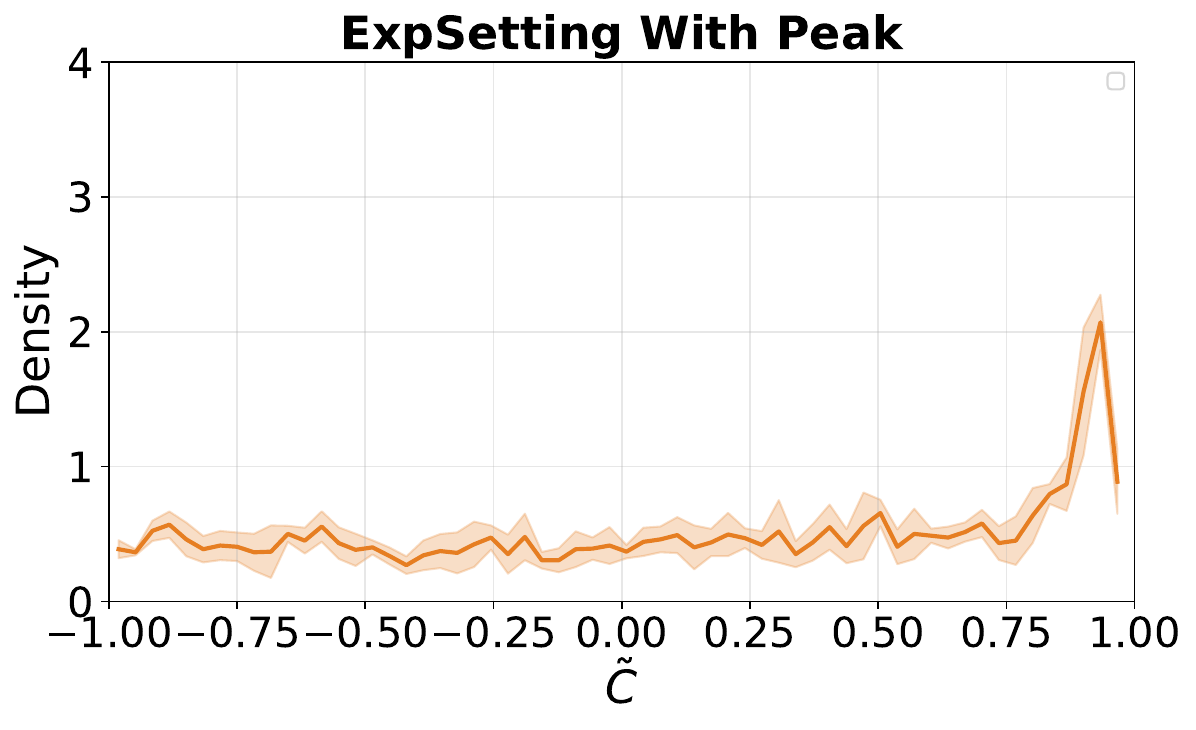}
        \caption{With peak ($B=10$)}
        \label{appendix:fig:peak-k10-withpeak}
    \end{subfigure}

    \caption{Peak indicator for the \textit{ExpSetting} configuration under varying peak settings and $K$ parameters.}
    \label{appendix:fig:peak-indicator-expsetting}
\end{figure}

\begin{figure}[h]
    \centering
    % Top line: K = 1
    \begin{subfigure}[b]{\textwidth}
        \centering
        \includegraphics[width=0.23\linewidth]{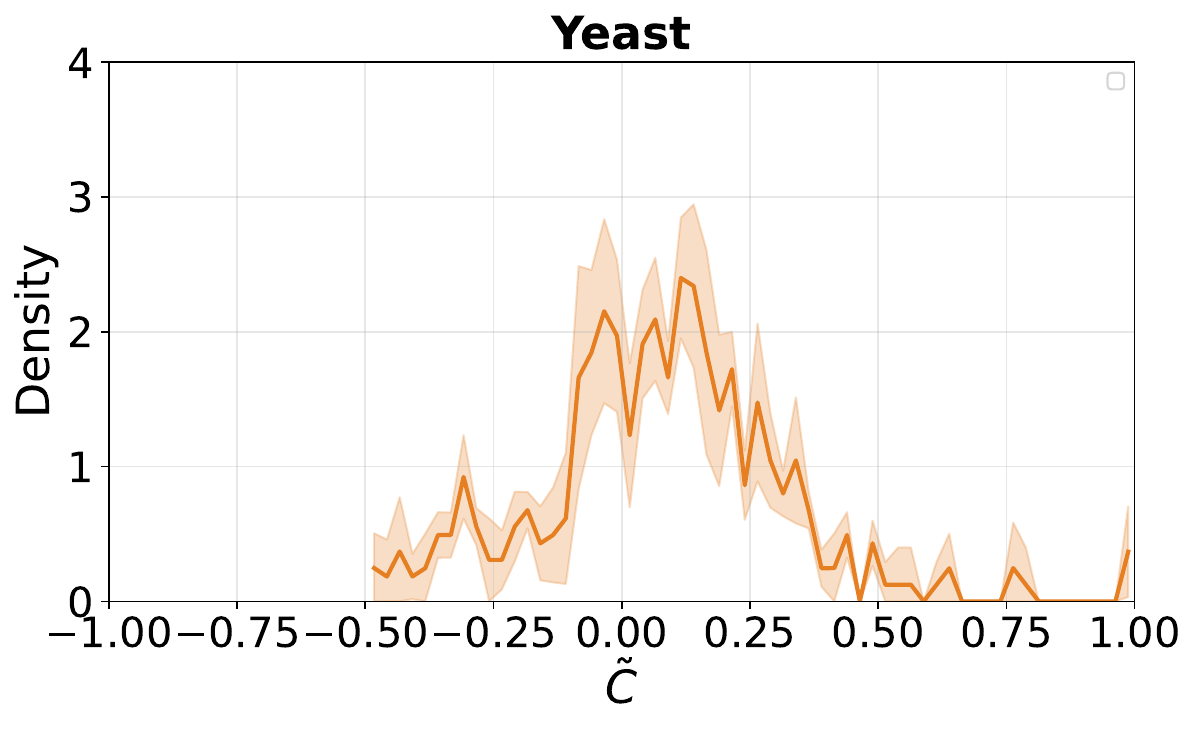}\hfill
        \includegraphics[width=0.23\linewidth]{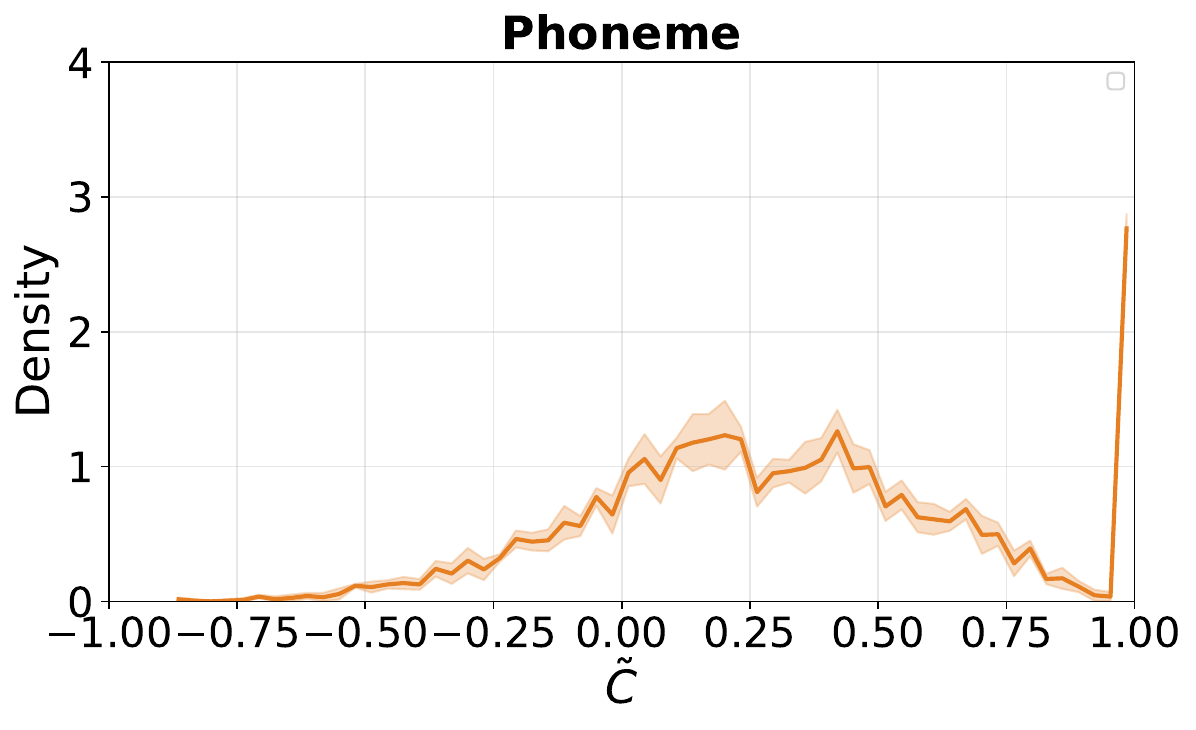}\hfill
        \includegraphics[width=0.23\linewidth]{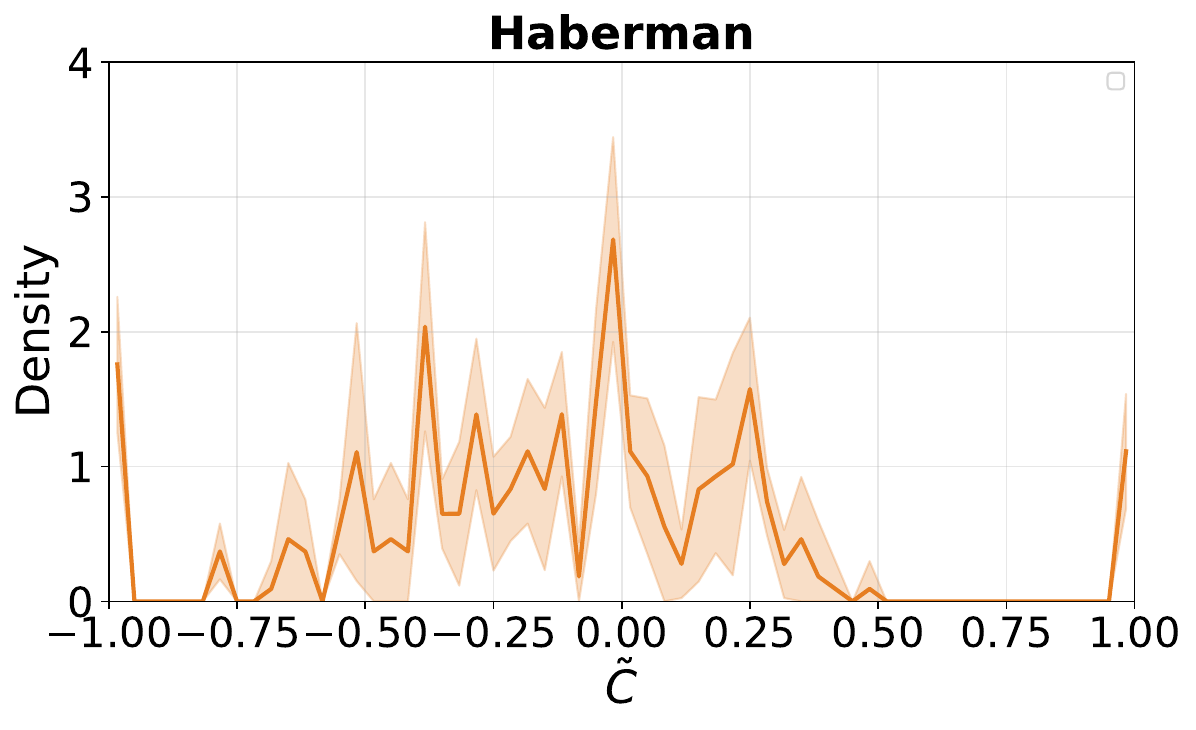}\hfill
        \includegraphics[width=0.23\linewidth]{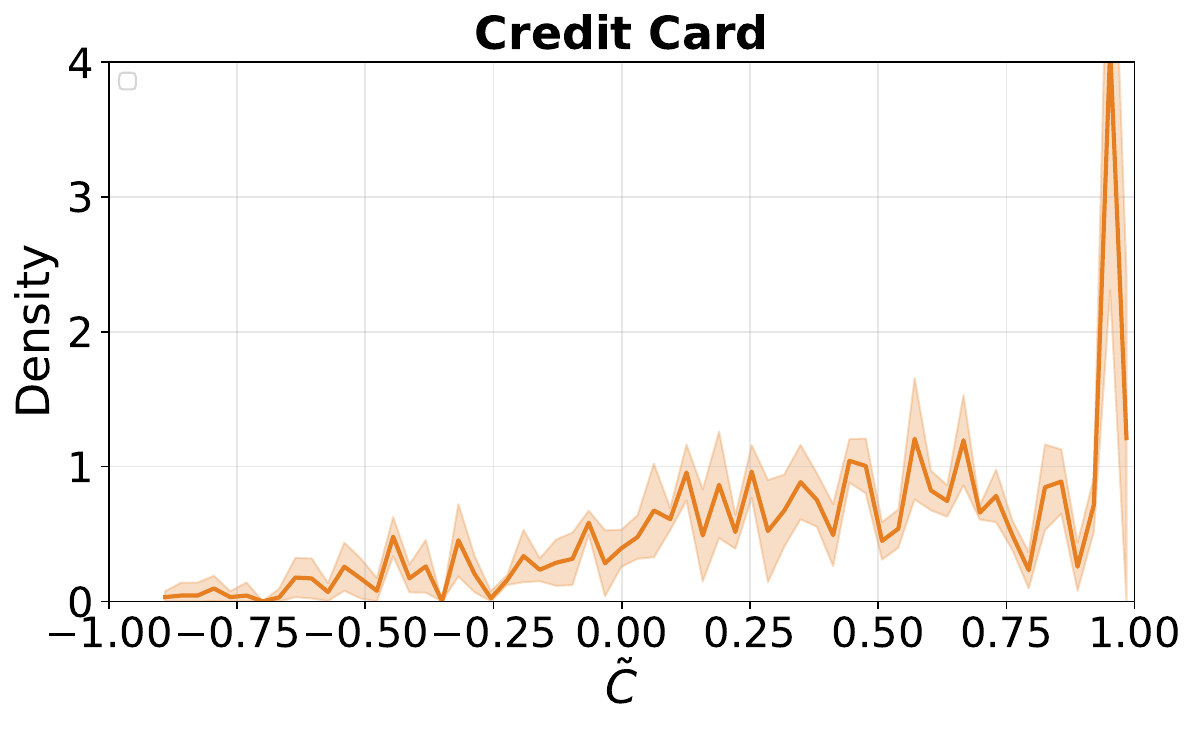}
        \caption{$B=1$.}
        \label{fig:peak-indicator-k1}
    \end{subfigure}

    \vspace{0.8em} % Vertical spacing between top and bottom lines

    % Bottom line: K = 10
    \begin{subfigure}[b]{\textwidth}
        \centering
        \includegraphics[width=0.23\linewidth]{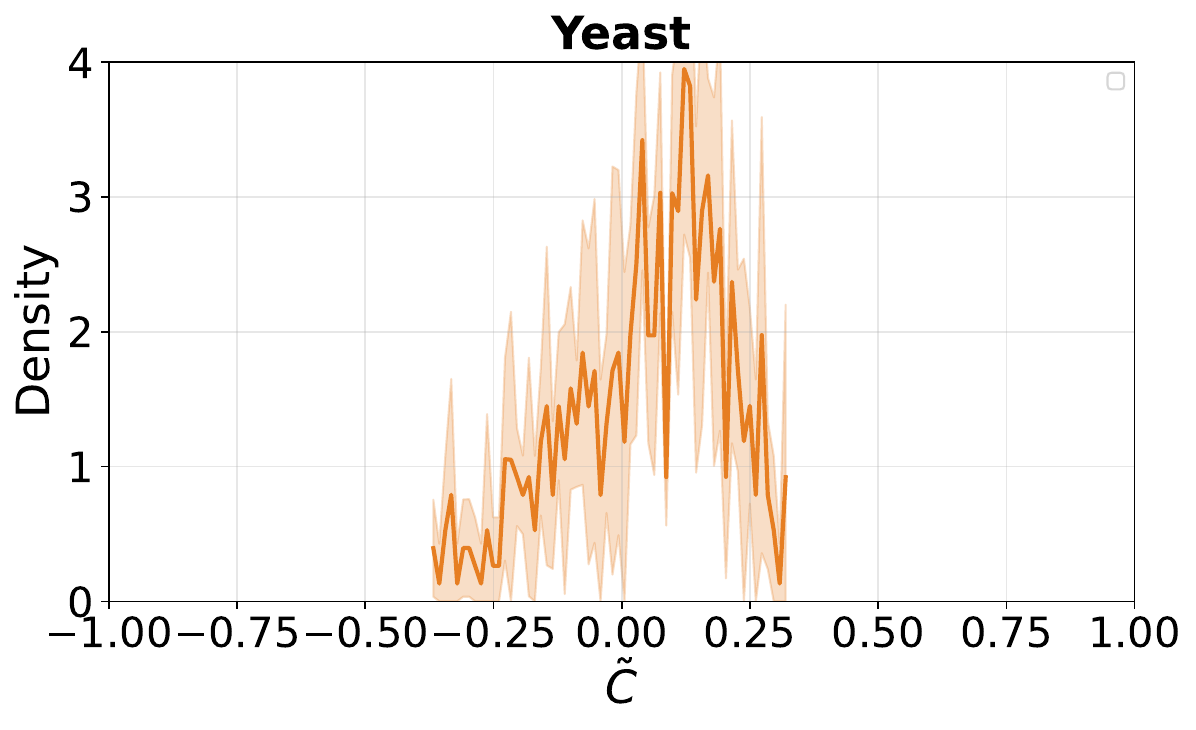}\hfill
        \includegraphics[width=0.23\linewidth]{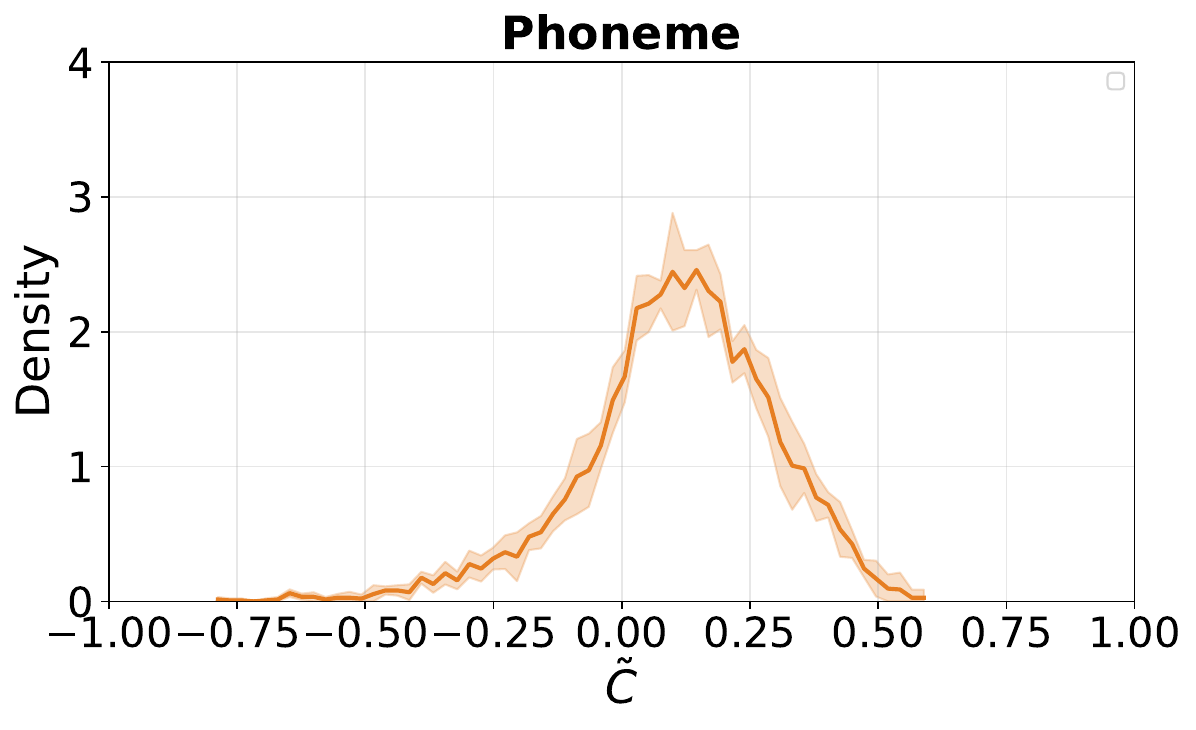}\hfill
        \includegraphics[width=0.23\linewidth]{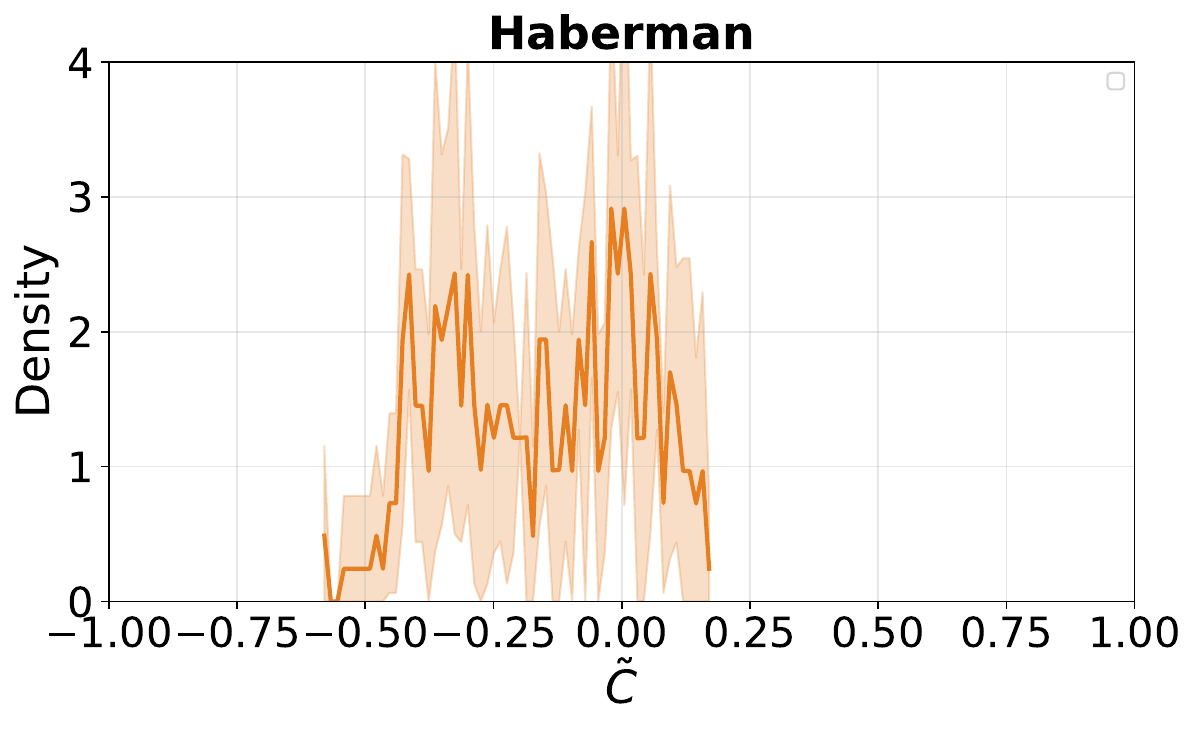}\hfill
        \includegraphics[width=0.23\linewidth]{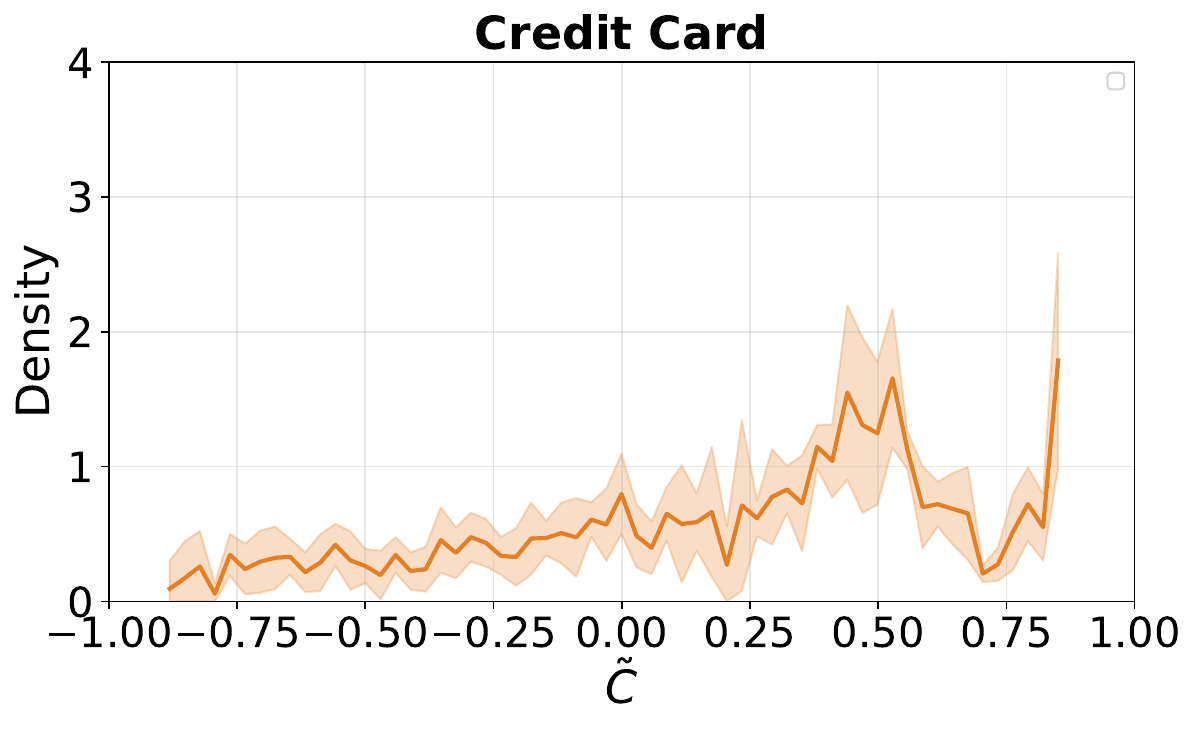}
        \caption{$B=10$.}
        \label{fig:peak-indicator-k10}
    \end{subfigure}

    \caption{Peak indicator on real-world data sets for $B=1$ (top) and $B=10$ (bottom).}
    \label{fig:peak-indicator-real-dataset}
\end{figure}

%\clearpage
\subsection{\textit{ExpSetting}: additional results}

\begin{figure}[h]
    \centering
    \includegraphics[width=1.0\linewidth]{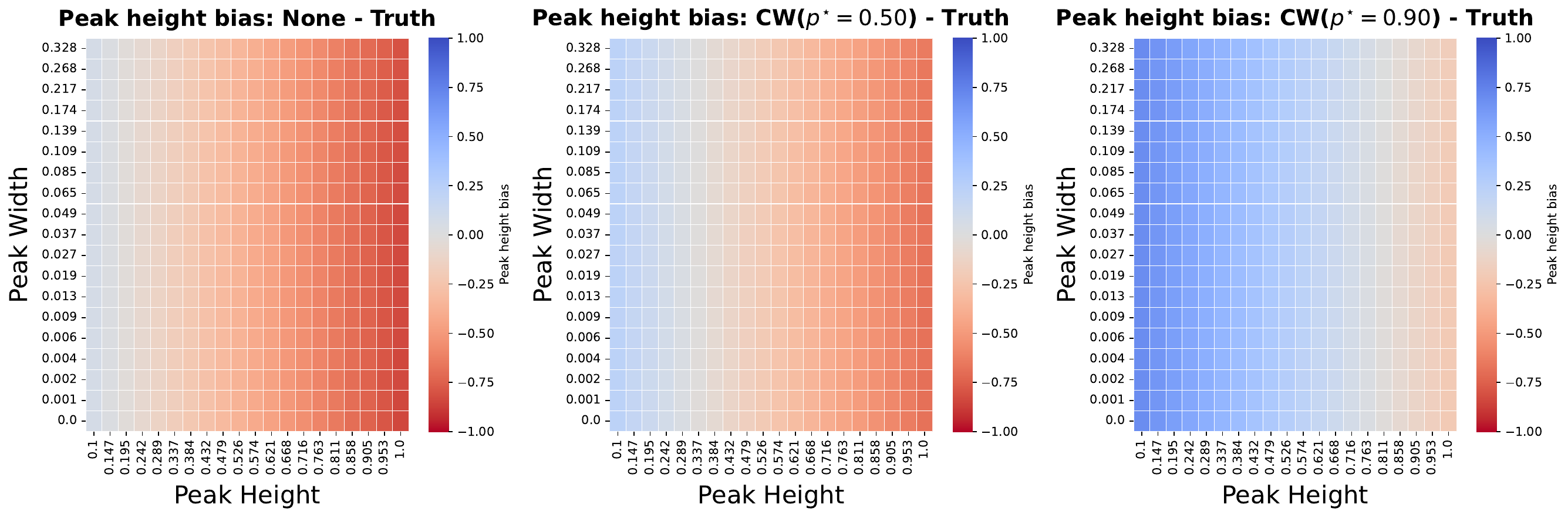}
    \caption{\textit{ExpSetting}: CART tree difference between predicted and ground-truth peak probability for input \textbf{dimension of 5} ($k= 4$). %Panels show the None alongside weighting with $p^{\star} = 0.5$ and $p^{\star} =0.9$.
    }
    \label{fig:sim-heatmap-expsetting-dim5}
\end{figure}

\begin{figure}[h]
    \centering
    \includegraphics[width=0.6\linewidth]{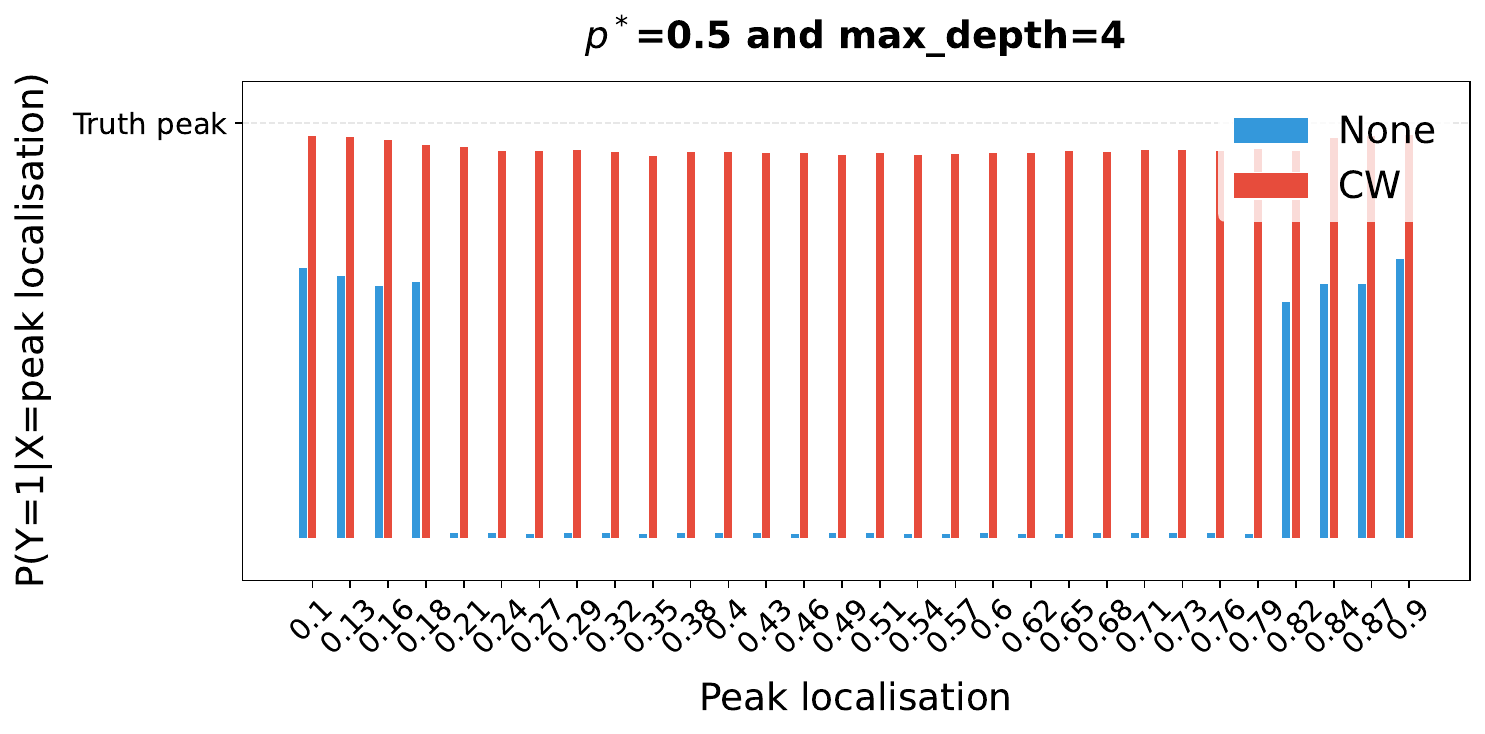}
    \caption{CART tree probability of a \textit{ExpSetting} regression function with respect to peak localisation.}
    \label{fig:sim-barplot-scatter-peak-pos-expsetting}
\end{figure}
\Cref{fig:sim-barplot-scatter-peak-pos-expsetting} evaluates a regression setup where a sharp peak (height $a=1.0$ and width $c=0.003$) %\es{il faudrait aussi donner son expression mathématiques, éventuellement avec sa hauteur et largeur en annexe} 
is introduced at various positions for both settings. The panel illustrates the output probabilities of both models as a function of the peak position. We observe that the weighted model consistently detects the peak regardless of its location within $[0,1]$. Conversely, the unweighted model only manages to detect two peaks located at the boundaries of the support.

In the following, we focus on the \texttt{ExpSetting} regression function, evaluating both predictive performance and the estimated probabilities produced by a CART decision tree. Each row corresponds to a maximum tree depth from $1$ to $4$, while each column represents a target class prevalence $p^{\star} \in \{0.1, 0.2, \dots, 0.9\}$.  

\begin{figure}[h]
    \centering
    \includegraphics[width=0.8\linewidth]{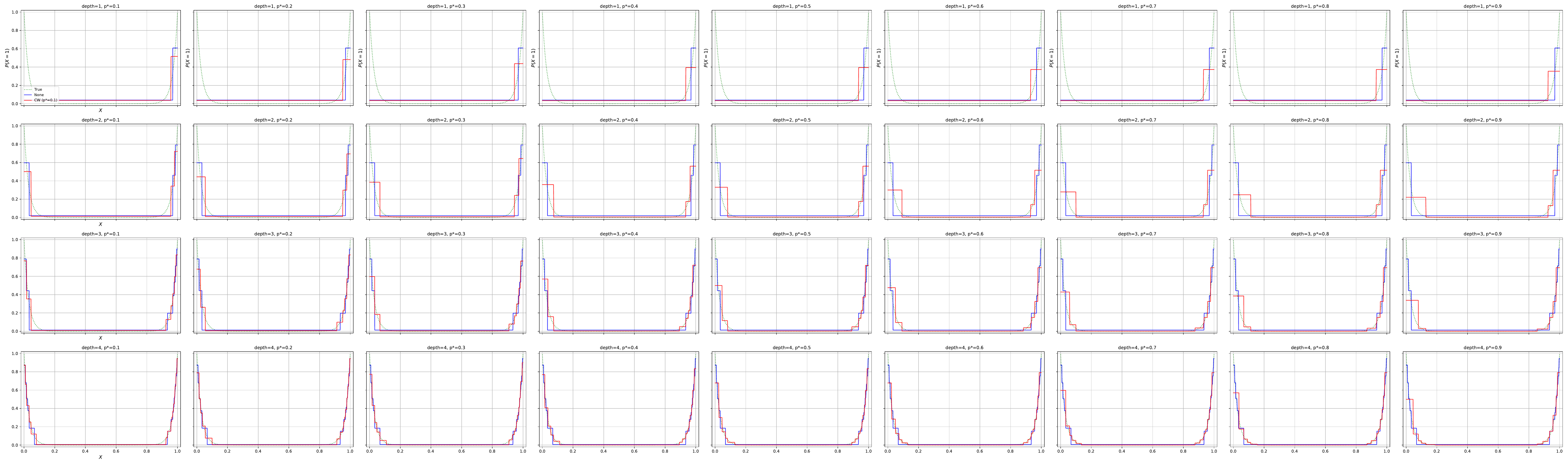}
    \includegraphics[width=0.8\linewidth]{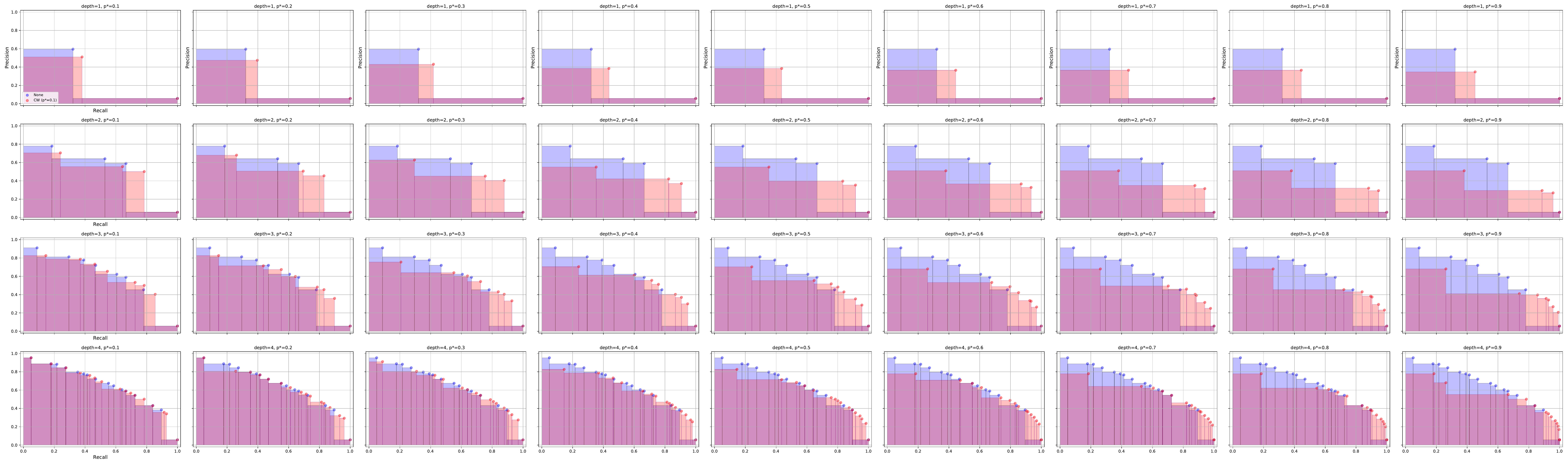}
    \includegraphics[width=0.8\linewidth]{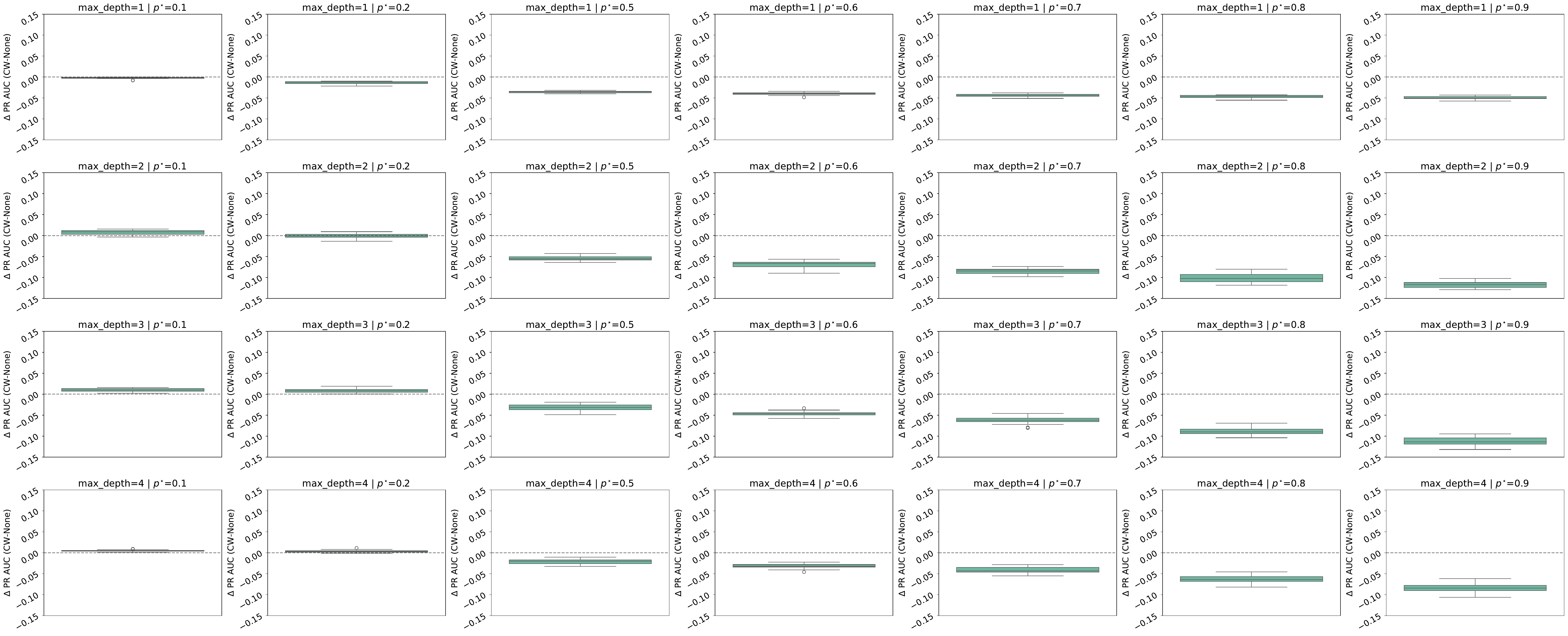}
    \caption{Estimated regression curves, PR curves, and PR AUC boxplots for \textit{ExpSetting} in the absence of a peak. For the regression plots, the inverse transformation from \Cref{lemma:main-mustar} is applied to the class-weighted strategy's predictions to ensure direct comparability with the unweighted baseline.}
    \label{fig:pr_reg_curves_no_peak}
\end{figure}

\begin{figure}[h]
    \centering
    \includegraphics[width=0.8\linewidth]{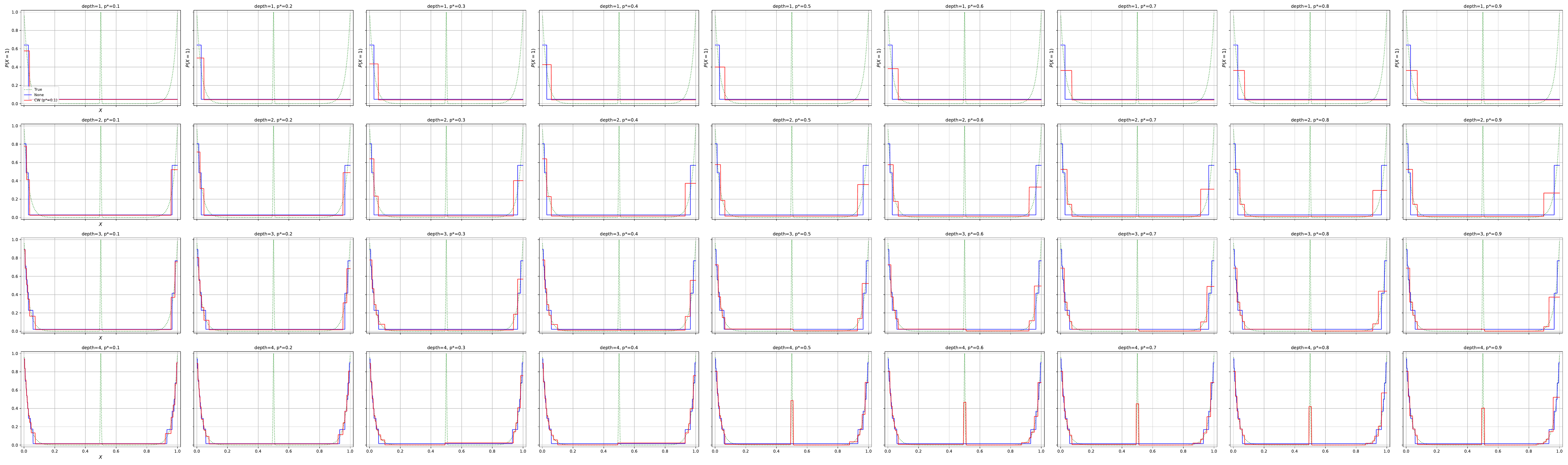}
    \includegraphics[width=0.8\linewidth]{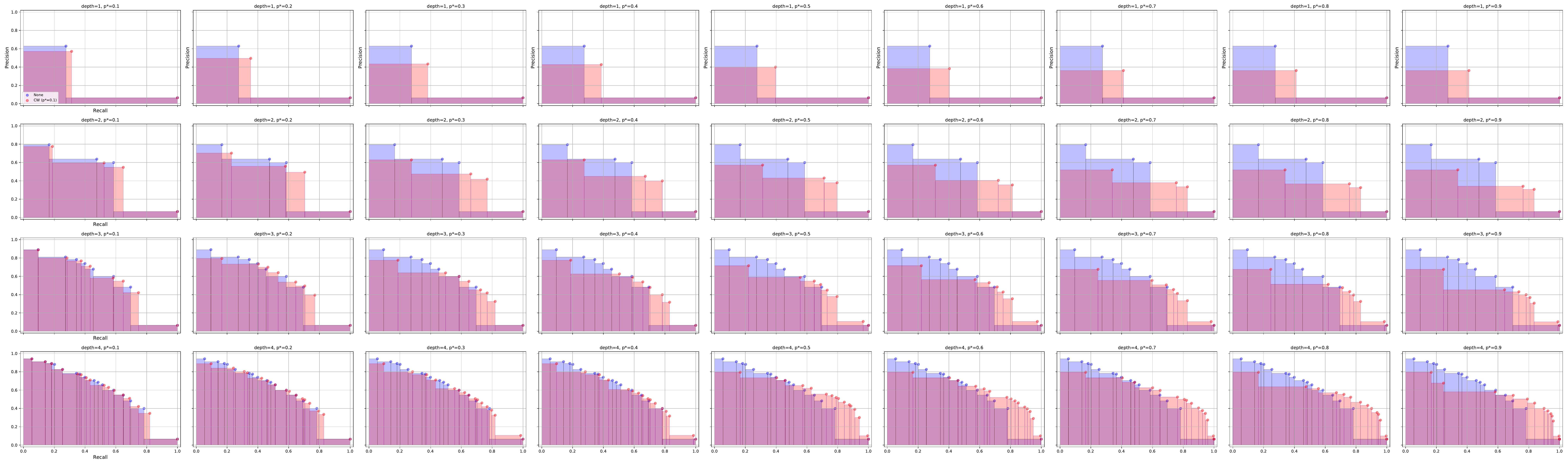}
    \includegraphics[width=0.8\linewidth]{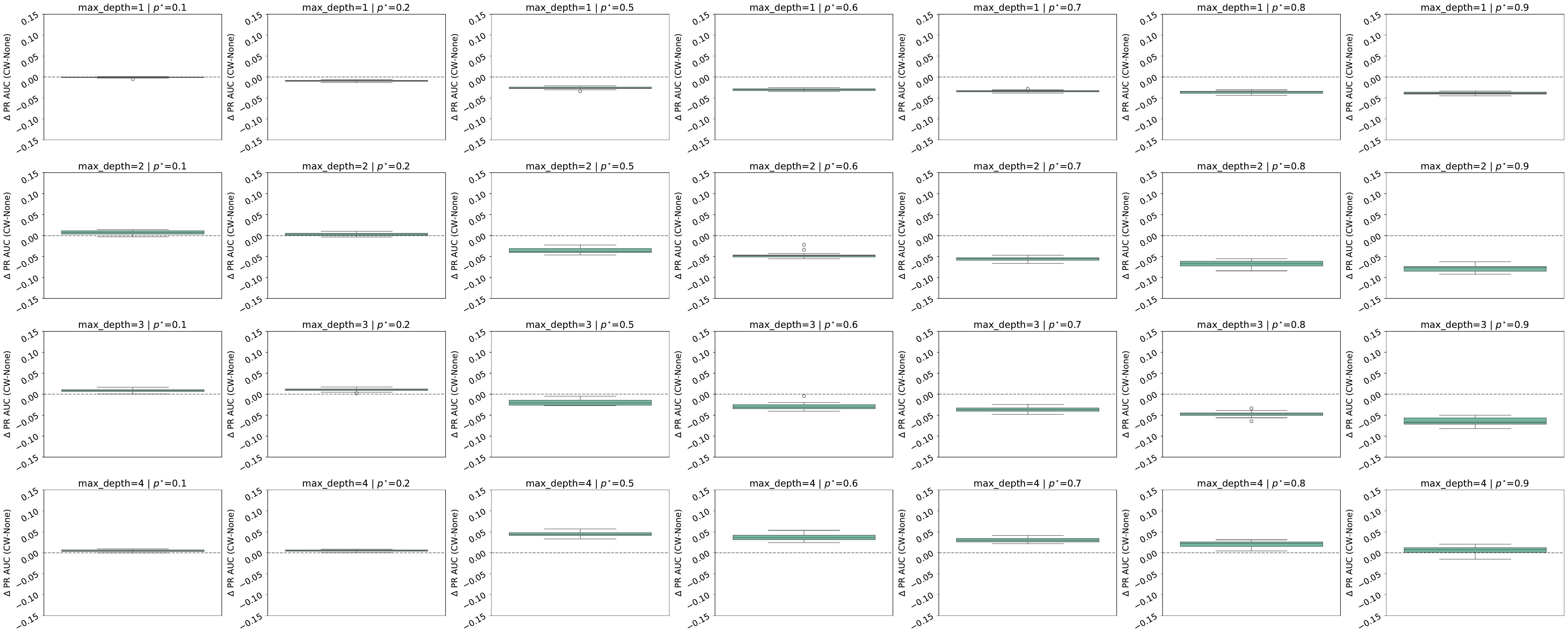}
    \caption{Estimated regression curves, PR curves, and PR AUC boxplots for \textit{ExpSetting} with peak at $x=0.5$. For the regression plots, the inverse transformation from \Cref{lemma:main-mustar} is applied to the class-weighted strategy's predictions to ensure direct comparability with the unweighted baseline.}
    \label{fig:pr_reg_curves_with_peak_0-5}
\end{figure}

\clearpage

\subsection{\textit{SinSetting}}

\label{appendix:sinsetting}
In the following, we look at another specific regression function named \textit{SinSetting}  and defined as follows: $$\mu(x) = 0.15sin\left(2\pi x+1.2\right) +0.16.$$
\Cref{fig:sinsetting-regcurves} illustrates the \textit{SinSetting} function, demonstrating the presence and absence of a peak across various localisations.

Consistent with the findings in the previous section, the \textit{SinSetting} configuration yields an improved PR AUC under the weighting strategy for specific peak locations.

\Cref{fig:sim-barplot-scatter-peak-pos-sinsetting} illustrates the empirical probability of peak detection for both the unweighted and weighted (with $p^{\star}=0.9$) models as a function of the peak localization. We observe that both methodologies successfully detect the peak across most locations, except when it resides in a zero-signal region for  $b \in [0.51,0.65]$. In this specific scenario, only the weighted classifier is capable of estimating the peak, thereby reinforcing our previous observations from \Cref{fig:sim-barplot-scatter-peak-pos-expsetting} under the \textit{ExpSetting}.
\begin{figure}[h]
    \centering
    \includegraphics[width=0.5\linewidth]{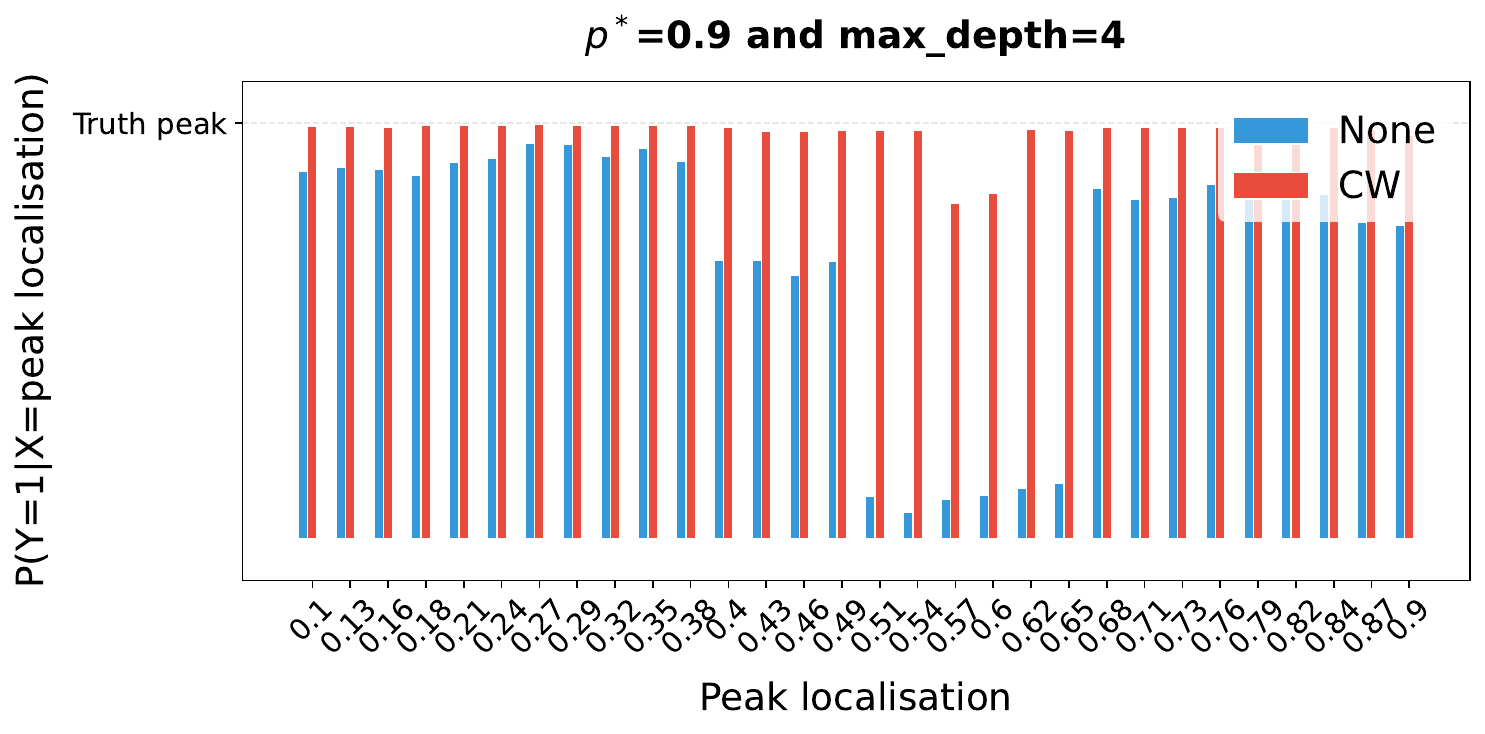}
    \caption{CART tree probability of a \textit{SinSetting} regression function with respect to peak localisation.}
    \label{fig:sim-barplot-scatter-peak-pos-sinsetting}
\end{figure}

%\Cref{fig:sim-heatmap-sinsetting} presents a heatmap analogous to \Cref{fig:sim-heatmap-expsetting}, evaluated on the \textit{SinSetting} regression function which features a peak at $x=0.6$. Consistent with our earlier findings, the baseline None strategy fails to detect the peak when it is characterized by a large height and a narrow width. However, this peak is accurately estimated across a notably broader range of parameter configurations (height and width) than in the \textit{ExpSetting}. Ultimately, the results confirm that the weighting strategy successfully recovers the peak in geometric regimes where it otherwise goes undetected.
\begin{figure}[h]
    \centering
    \includegraphics[width=1.0\linewidth]{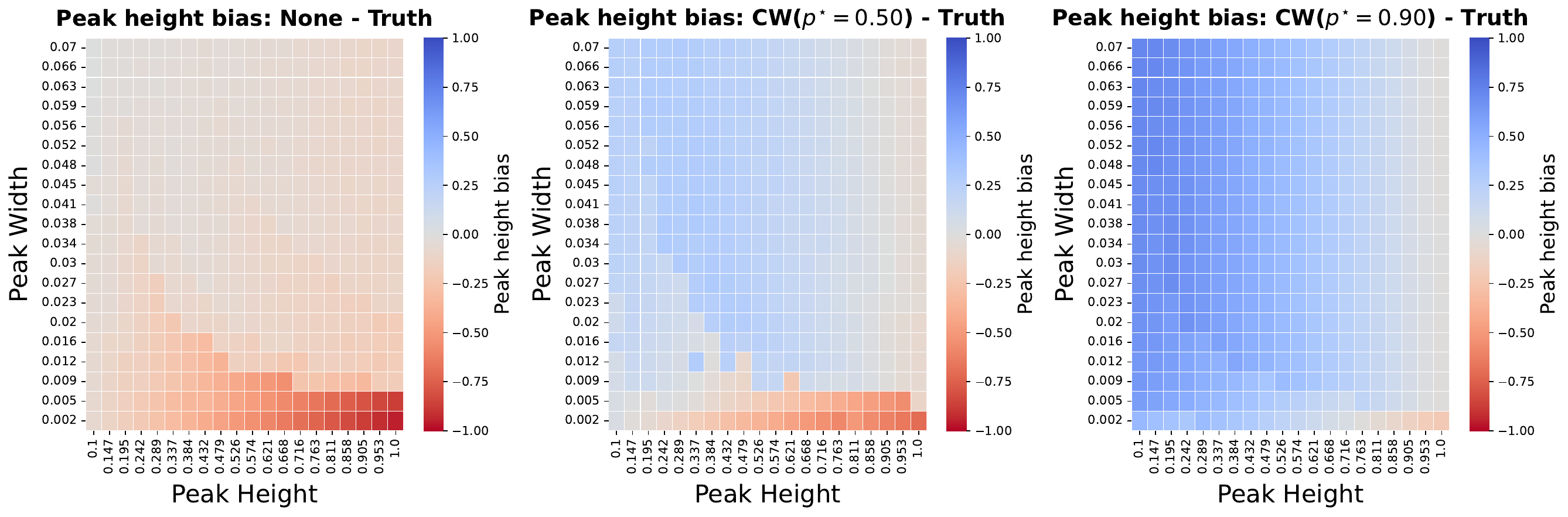}
    \caption{\textit{SinSetting} regression function: CART tree peak probability minus Truth probability with respect to peak width and height (maximum depth $=4$).}
    \label{fig:sim-heatmap-sinsetting}
\end{figure}

%\clearpage
%\subsection{Peak indicator: additional results}
%\label{appendix:add_results_peak_indicator}

%%%%%
\clearpage
%\section{Supplementary numerical experiments}
%\subsection{\texttt{ExpSetting} With or Without Peak.}

\clearpage
\section{Supplementary Theoretical Results}
\label{appendix:supp_theoretical_results}

\subsection{Tree Expression}
Let $t \in [0, 1]$ denote the decision threshold used by a CART tree to predict $\hat{Y} = 1$ within each leaf cell. Formally, for a given leaf $A$, the decision rule is
\begin{align}
    \mathds{E}[Y \mid X \in A] \ge t &\Longrightarrow \hat{Y} = 1.
\end{align}
Below, we derive explicit analytical expressions for both unweighted and weighted decision stumps.

\begin{lemma}[Default stump]
\label{lemma:default-stump}
Under \Cref{assump-infinite-sample} and \Cref{assump-1}, we have for a default stump (tree of depth $1$),
    \begin{align}
    \hat{m}_{k}(x) = \left\{ 
        \begin{array}{ll}
            1 & \text{if} \; t \in [0, \frac{\alpha \Myt_p}{2} + \beta] \\
            \mathds{1}_{\{x \geq \Myt_p\}} & \text{if} \; t \in [ \frac{\alpha \Myt_p}{2} + \beta, \frac{\alpha (\Myt_p+1)}{2} + \beta] \\
            0 & \text{if} \; t \in [ \frac{\alpha (\Myt_p+1)}{2} + \beta, 1].
        \end{array}
    \right.
\end{align}
\end{lemma}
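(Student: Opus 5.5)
The plan is to compute the two leaf values of the depth-one tree explicitly and then read off the prediction rule by comparing them with the threshold $t$. Under \Cref{assump-infinite-sample}, the stump $\hat{m}_{k}$ with split $\Myt_p$ has two leaves, $A_L=[0,\Myt_p)$ and $A_R=[\Myt_p,1]$. Since the sample is infinite, the value in each leaf is the conditional mean $\mathds{E}[Y\mid X\in A]$. Under \Cref{assump-1}, $f_X=1$ on $[0,1]$, so in the notation of \Cref{lemma:ig-default} these values are $J_L/I_L$ and $J_R/I_R$.

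First I would compute the masses and integrals directly: $I_L=\Myt_p$ and $I_R=1-\Myt_p$. Then $J_L=\int_0^{\Myt_p}(\alpha x+\beta)\,\mathrm{d}x=\alpha\Myt_p^2/2+\beta\Myt_p$, and $J_R=\alpha(1-\Myt_p^2)/2+\beta(1-\Myt_p)$. Dividing gives the two leaf values
\begin{align*}
\mathds{E}[Y\mid X<\Myt_p]=\frac{\alpha\Myt_p}{2}+\beta, \qquad \mathds{E}[Y\mid X\ge \Myt_p]=\frac{\alpha(\Myt_p+1)}{2}+\beta.
\end{align*}
Because $\alpha\ge 0$, the left value is at most the right value. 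Both values lie in $[0,1]$ because $\alpha+\beta\le 1$, so they can be compared with any threshold $t\in[0,1]$.

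Next I would apply the decision rule $\mathds{E}[Y\mid X\in A]\ge t \Rightarrow \hat Y=1$ leaf by leaf, which splits into three cases.
\begin{itemize}
\item If $t$ is at most the left value, both leaves predict $1$, so $\hat{m}_{k}\equiv 1$.
\item If $t$ lies between the two values, only the right leaf predicts $1$, so $\hat{m}_{k}(x)=\mathds{1}_{\{x\ge\Myt_p\}}$.
\item If $t$ exceeds the right value, both leaves predict $0$.
\end{itemize}
These are exactly the three intervals in the statement. At the interval endpoints the rule uses $\ge$, so the lower case takes precedence; the statement writes closed intervals, and I would remark that they overlap only at endpoints, which have measure zero for the PR analysis.

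There is no real obstacle here. The only points requiring care are to note that $\Myt_p=1/2$ (from \cite{ishwaran2015effect}), although the formula holds for any split $\Myt_p\in(0,1)$, and to keep track of the endpoint convention.
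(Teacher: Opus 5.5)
Your proposal is correct and is essentially the paper's proof. The paper computes the two leaf means $\alpha\Myt_p/2+\beta$ and $\alpha(\Myt_p+1)/2+\beta$ using the tower property and the uniform conditional law on each leaf, which is the same computation as your ratios $J_L/I_L$ and $J_R/I_R$, and then applies the threshold rule leaf by leaf to get the three cases; your remark on the $\ge$ convention at the overlapping interval endpoints is a small clarification the paper leaves implicit.
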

The proof of \Cref{lemma:default-stump} can be found in \Cref{proof:default-stump}. We remark that our tree stump is constant piecewise with respect to $s$.
\begin{lemma}[Weighted stump]
\label{lemma:ponderate-stump}
Under \Cref{assump-infinite-sample} and \Cref{assump-1}, we have for a reweighted stump (tree of depth $1$),
{\small
\begin{align}
    \hat{m}_{k}^{\star}(x) = \left\{ 
        \begin{array}{ll}
            1 & \text{if} \; t \in [0, \frac{\omega_{1}(\frac{\alpha}{2}\Myt_{p}^{\star \, 2} + \beta \Myt_{p}^{\star} )}{  \omega_{0}(\Myt_{p}^{\star}- \frac{\alpha}{2} \Myt_{p}^{\star \, 2} - \beta \Myt_{p}^{\star} ) + \omega_{1}(\frac{\alpha}{2} \Myt_{p}^{\star \, 2} + \beta \Myt_{p}^{\star}) }] \\
            \mathds{1}_{\{x \geq \Myt_{p}^{\star}\}} & \text{if} \; t \in [ \frac{\omega_{1}(\frac{\alpha}{2}\Myt_{p}^{\star \, 2} + \beta \Myt_{p}^{\star} )}{  \omega_{0}(\Myt_{p}^{\star}- \frac{\alpha}{2} \Myt_{p}^{\star \, 2} - \beta \Myt_{p}^{\star} ) + \omega_{1}(\frac{\alpha}{2} \Myt_{p}^{\star \, 2} + \beta \Myt_{p}^{\star}) }, \frac{\omega_{1}(\frac{\alpha}{2}(1-\Myt_{p}^{\star \, 2}) + \beta (1-\Myt_{p}^{\star}) )}{p(1-p) -  \omega_{0}(\Myt_{p}^{\star}- \frac{\alpha}{2} \Myt_{p}^{\star \, 2} - \beta \Myt_{p}^{\star} ) - \omega_{1}(\frac{\alpha}{2} \Myt_{p}^{\star \, 2} + \beta \Myt_{p}^{\star}) }] \\
            0 & \text{if} \; t \in [\frac{\omega_{1}(\frac{\alpha}{2}(1-\Myt_{p}^{\star \, 2}) + \beta (1-\Myt_{p}^{\star}) )}{p(1-p) -  \omega_{0}(\Myt_{p}^{\star}- \frac{\alpha}{2} \Myt_{p}^{\star \, 2} - \beta \Myt_{p}^{\star} ) - \omega_{1}(\frac{\alpha}{2} \Myt_{p}^{\star \, 2} + \beta \Myt_{p}^{\star}) }, 1].
        \end{array}
    \right.
\end{align}
}
\end{lemma}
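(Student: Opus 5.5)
The plan is to reduce the weighted stump to leaf averages computed under the weighted law $(X^\star,Y^\star)$, and then to apply the thresholding rule leaf by leaf. By Assumption~\ref{assump-infinite-sample}, the weighted stump is the infinite-sample limit of weighted CART. Its split is $\Myt_p^\star$, given by Lemma~\ref{lemma:IG_weighted_optimal_treshold}. On each leaf $A\in\{A_L(\Myt_p^\star),A_R(\Myt_p^\star)\}$ it outputs $\mathds{P}(Y^\star=1\mid X^\star\in A)$. The proof therefore reduces to computing these two conditional probabilities explicitly and comparing them with $t$.

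\textbf{Step 1 (weighted joint density).} Since reweighting leaves the class-conditional densities unchanged, the joint sub-densities are
\[
\mathds{P}(Y^\star=1, X^\star\in \mathrm{d}x)=\tfrac{p^\star}{p}\,\mu(x)f_X(x)\,\mathrm{d}x
\quad\text{and}\quad
\mathds{P}(Y^\star=0, X^\star\in \mathrm{d}x)=\tfrac{1-p^\star}{1-p}\,(1-\mu(x))f_X(x)\,\mathrm{d}x .
\]
This is the same computation that underlies Lemma~\ref{lemma:main-mustar}. Multiplying both by $p(1-p)$ gives the weights $\omega_1\mu f_X$ and $\omega_0(1-\mu)f_X$. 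Hence, with the notation $I_A,J_A$ of Lemma~\ref{lemma:ig-default},
\[
\mathds{P}(Y^\star=1\mid X^\star\in A)=\frac{\omega_1 J_A}{\omega_0(I_A-J_A)+\omega_1 J_A}.
\]

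\textbf{Step 2 (explicit integrals).} Under Assumption~\ref{assump-1}, write $s=\Myt_p^\star$. The left leaf has $I_L=s$ and $J_L=\frac{\alpha}{2}s^2+\beta s$. Substituting these into Step 1 gives exactly the first threshold in the statement.

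For the right leaf, $J_R=\frac{\alpha}{2}(1-s^2)+\beta(1-s)$, which is the numerator term. The denominator needs to be rewritten into the stated form. The total normalisation satisfies
\[
\omega_0(1-p)+\omega_1 p = p(1-p^\star)(1-p)+p^\star(1-p)p = p(1-p),
\]
where we use $p=\int_0^1\mu=\frac{\alpha}{2}+\beta$. Therefore the right-leaf denominator equals $p(1-p)-\omega_0(I_L-J_L)-\omega_1J_L$, which is the expression in the statement.

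\textbf{Step 3 (ordering and case split).} Since $\alpha\ge 0$, $\mu$ is nondecreasing, so $J_L/I_L\le J_R/I_R$. The map $g$ of Lemma~\ref{lemma:main-mustar} is increasing, and the weighted leaf value equals $g(J_A/I_A)$. Hence the left weighted leaf value is at most the right one. Applying the rule ``predict $1$ on $A$ iff the leaf value is $\ge t$'' then yields the three regimes:
\begin{itemize}
\item both leaves predict $1$ when $t$ is below the left value;
\item only $\{x\ge \Myt_p^\star\}$ predicts $1$ when $t$ lies between the two values;
\item neither leaf predicts $1$ when $t$ is above the right value.
\end{itemize}
This completes the argument, mirroring the proof of Lemma~\ref{lemma:default-stump}.

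The only delicate point is the algebraic identification of the right-leaf denominator with $p(1-p)-\omega_0(\cdot)-\omega_1(\cdot)$, which relies on the normalisation identity above. The boundary cases where $t$ equals a leaf value are harmless: the intervals overlap only at endpoints, and the convention $\ge$ settles them.
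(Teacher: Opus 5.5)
Your proposal is correct and takes essentially the same route as the paper. You compute the two weighted leaf values $\mathds{E}[Y^\star\mid X^\star\in A_L]=\omega_1 J_L/(\omega_0(I_L-J_L)+\omega_1 J_L)$ and its right-leaf analogue, then threshold leaf by leaf. Your Bayes computation with the sub-densities $\omega_1\mu f_X$ and $\omega_0(1-\mu)f_X$ is exactly the cancellation the paper obtains by integrating $\mu^\star$ against the truncated density of $X^\star$. Your normalisation identity $\omega_0(1-p)+\omega_1 p=p(1-p)$ plays the role of the paper's $1-\mathds{P}(X^\star\le s)$.

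Your Step 3 is a useful addition: it checks, via the monotonicity of $g$ and $\mu$, that the left leaf value does not exceed the right one. The paper's proof writes the three-case form without verifying this ordering.
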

The proof of \Cref{lemma:ponderate-stump} is provided in \Cref{proof:ponderate-stump}. While previous results demonstrate that weighted trees assign greater volume to regions with higher regression function values, \Cref{lemma:ponderate-stump} establishes that class reweighting systematically shifts the decision stump's leaf-level probability estimates.

Below, we analyze the performance metrics of both trees to evaluate how these structural differences influence predictive performance.

\subsection{Metrics}

Based on \Cref{lemma:main-mustar}, one might theoretically expect the predictive performance of a decision stump to remain identical with or without sample weighting.

\begin{proposition}
\label{prop:main1-metrics}
We denote by $(\tilde{X},\tilde{Y})$ the distribution of data used for the training of a stump.
Under \Cref{assump-infinite-sample} and \Cref{assump-1}, we have for any stump (tree of depth $1$) cutting at $\Myt$,
{\footnotesize
\begin{align}
    Precision = \left\{ 
        \begin{array}{ll}
             \alpha/2 + \beta& \text{if} \; t \in \left[0, B_{A_{L}}\right] \\
            \frac{\alpha}{2} (1+\Myt) + \beta & \text{if} \; t \in \left[ B_{A_{L}}, B_{A_{R}}\right] \\
            Undefined  & \text{if} \; t \in \left[B_{A_{R}}, 1\right],
        \end{array}
    \right.
    Recall = \left\{ 
        \begin{array}{ll}
            1 & \text{if} \; t \in \left[0, B_{A_{L}}\right] \\
            \frac{\frac{\alpha}{2}(1-\Myt^{2}) + \beta (1-\Myt)}{ \frac{\alpha}{2} + \beta} & \text{if} \; t \in \left[B_{A_{L}},B_{A_{R}} \right] \\
            0 & \text{if} \; t \in \left[B_{A_{R}},1\right].
        \end{array}
    \right.
\end{align}

\begin{align}
    FPR = \left\{ 
        \begin{array}{ll}
            1 & \text{if} \; t \in \left[0, B_{A_{L}}\right] \\
            \frac{-\frac{\alpha}{2} \Myt^{2} + \Myt (1 -\beta)}{  1 -\beta - \frac{\alpha}{2}} & \text{if} \; t \in \left[B_{A_{L}},B_{A_{R}} \right] \\
            0 & \text{if} \; t \in \left[B_{A_{R}},1\right].
        \end{array}
    \right.    
    \text{F1} = \left\{ 
        \begin{array}{ll}
             \frac{\alpha/2 + \beta}{1+ \alpha/2 + \beta}& \text{if} \; t \in \left[0, B_{A_{L}}\right] \\
            \frac{ \left(\frac{\alpha}{2}(1+\Myt) + \beta\right)(1-t) }{ (\frac{\alpha}{2} + \beta)  + (1-\Myt)} & \text{if} \; t \in \left[B_{A_{L}}, B_{A_{R}}\right] \\
            Undefined  & \text{if} \; t \in \left[B_{A_{R}},1\right].
        \end{array}
    \right.
\end{align}
}
{\small
\begin{align}
    &\text{PR AUC} = \frac{\left(\frac{\alpha}{2}(1-\Myt^{2}) + \beta (1-\Myt)\right) \left(\frac{\alpha}{2}\Myt \right) }{\frac{\alpha}{2} + \beta} + \frac{\alpha}{2}+ \beta, &&
    ROC AUC = 1 - \frac{ \Myt^2 \left(1 -\frac{\alpha}{2} \Myt -\beta \right) \left(\frac{\alpha}{2}\Myt + \beta  \right)}{  \left(1 -\beta - \frac{\alpha}{2}\right) \left(\frac{\alpha}{2} + \beta\right)}.
\end{align}
}

When $(\tilde{X},\tilde{Y}) \sim (X,Y)$, we have,
\begin{align}
    B_{A_{L}} = \mathds{E}[{Y}|{X} \in A_{L}] &= \frac{\alpha t}{2} + \beta, &&
    B_{A_{R}} = \mathds{E}[{Y}|{X} \in A_{R}] = \frac{\alpha (t+1)}{2} + \beta.
\end{align}
On the other hand, when $(\tilde{X},\tilde{Y}) \sim (X^\star,Y^\star)$, we have,
\begin{align}
    B_{A_{L}} = \mathds{E}[{Y^\star}|{X^\star} \in A_{L}] &= \frac{\omega_{1}(\frac{\alpha}{2}\Myt^{2} + \beta t )}{  \omega_{0}(\Myt- \frac{\alpha}{2} \Myt^2 - \beta \Myt ) + \omega_{1}(\frac{\alpha}{2} \Myt^2 + \beta \Myt) }   \\
    B_{A_{R}} = \mathds{E}[{Y^\star}|{X^\star} \in A_{R}] &= \frac{\omega_{1}(\frac{\alpha}{2}(1-\Myt^{2}) + \beta (1-t) )}{p(1-p) -  \omega_{0}(\Myt- \frac{\alpha}{2} \Myt^2 - \beta \Myt ) - \omega_{1}(\frac{\alpha}{2} \Myt^2 + \beta \Myt) }.
\end{align}
\end{proposition}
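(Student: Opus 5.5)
The plan is a direct computation that splits according to the threshold $t$. By \Cref{lemma:default-stump} and \Cref{lemma:ponderate-stump}, any stump splitting at $\Myt$ is piecewise constant: it returns a leaf value $B_{A_L}$ on $A_L=[0,\Myt)$ and $B_{A_R}$ on $A_R=[\Myt,1]$.

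First I will compute these leaf values in the two training regimes.

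\emph{Original distribution.} With $X\sim\mathcal U([0,1])$ we get
\[
B_{A_L}=\frac{1}{\Myt}\int_0^{\Myt}(\alpha x+\beta)\,\mathrm dx=\frac{\alpha\Myt}{2}+\beta,
\qquad
B_{A_R}=\frac{\alpha(\Myt+1)}{2}+\beta .
\]
Since $\alpha\ge 0$, we have $B_{A_L}\le B_{A_R}$.

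\emph{Weighted distribution.} I will write $\mathds E[Y^\star\mid X^\star\in A]$ using \Cref{lemma:main-mustar}. Equivalently, the weighted joint density is proportional to $\omega_1\mu(x)$ on class $1$ and $\omega_0(1-\mu(x))$ on class $0$. Then
\[
\mathds E[Y^\star\mid X^\star\in A]=\frac{\omega_1\int_A\mu}{\omega_0\int_A(1-\mu)+\omega_1\int_A\mu}.
\]
Plugging in $\int_0^{\Myt}\mu=\frac{\alpha}{2}\Myt^2+\beta\Myt$ gives the stated $B_{A_L}$. For $B_{A_R}$, I will use the complementary integrals. Its denominator should equal $\omega_0\int_0^1(1-\mu)+\omega_1\int_0^1\mu$ minus the left-cell contribution. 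Up to the normalisation convention used in \Cref{lemma:ponderate-stump}, this total equals $p(1-p)$, since $\omega_0(1-p)+\omega_1 p=p(1-p)$.

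Monotonicity of $g$ in \Cref{lemma:main-mustar} does not by itself give $B_{A_L}\le B_{A_R}$, because cell averages of $g(\mu)$ are not $g$ of cell averages. However, the weighted cell value is increasing in the class-1 fraction of the cell, which is larger on $A_R$, so the ordering still holds.

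Next I will evaluate all metrics on the original $(X,Y)$ in three threshold regimes.

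\emph{Case $t\le B_{A_L}$.} The stump predicts $1$ everywhere. So Recall $=1$, FPR $=1$, and Precision $=\mathds P(Y=1)=\alpha/2+\beta$.

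\emph{Case $B_{A_L}<t\le B_{A_R}$.} The stump predicts $\mathds 1\{x\ge\Myt\}$. Then
\[
\text{Precision}=\mathds E[\mu(X)\mid X\ge\Myt]=\frac{\alpha}{2}(1+\Myt)+\beta,
\qquad
\text{Recall}=\frac{\int_{\Myt}^1\mu}{\int_0^1\mu},
\]
and FPR is $\int_{\Myt}^1(1-\mu)$ divided by $\int_0^1(1-\mu)=1-\beta-\alpha/2$. One point needs care: the paper writes $-\frac{\alpha}{2}\Myt^2+\Myt(1-\beta)$, which is the left-cell mass. I would recompute this and reconcile it, possibly via the complement. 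F1 then follows from its harmonic-mean formula.

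\emph{Case $t>B_{A_R}$.} Nothing is predicted positive. So Recall $=0$, FPR $=0$, and Precision is undefined.

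Finally, I will compute the AUCs as areas over the three operating points.

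\emph{PR AUC.} I will take the step-wise (Riemann/average-precision) area through the points $(1,\alpha/2+\beta)$, $(R,P_R)$ and $(0,\cdot)$, namely
\[
R\cdot P_R+(1-R)\,(\alpha/2+\beta).
\]
Simplifying with $P_R-(\alpha/2+\beta)=\alpha\Myt/2$ yields the stated form.

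\emph{ROC AUC.} I will take the trapezoidal area through $(0,0)$, $(\text{FPR},\text{Recall})$ and $(1,1)$. Simplifying it gives the factorised expression in $\Myt^2(1-\frac{\alpha}{2}\Myt-\beta)(\frac{\alpha}{2}\Myt+\beta)$.

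The main obstacle is not conceptual. It is keeping the interpolation conventions for both AUCs, and the normalisation of the weighted denominators, consistent with the printed formulas. This is where the discrepancies I flagged, in FPR and in the F1 expression involving $t$, would have to be resolved. My approach there is to recompute each term from the integrals and check limiting cases: $\Myt\to0$ and $\Myt\to1$ should recover the trivial classifiers.
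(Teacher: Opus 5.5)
Your route is the paper's own. You split on the threshold $t$ and compute $\mathbb{P}(Y=i,\hat Y=j)$ by the tower property in the three regimes. You get the weighted leaf values as $\omega_1\int_A\mu\big/\bigl(\omega_0\int_A(1-\mu)+\omega_1\int_A\mu\bigr)$. The paper reaches the same formula through the truncated density of $X^\star$. Note that $\omega_0(1-p)+\omega_1p=p(1-p)$ holds exactly with $p=\frac{\alpha}{2}+\beta$, so no normalisation convention is involved. You then use the step sum $R\,P_R+(1-R)(\frac{\alpha}{2}+\beta)$ for PR AUC. Precision, Recall, PR AUC and both pairs $B_{A_L},B_{A_R}$ come out as stated. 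Incidentally, dividing numerator and denominator by $|A|$ shows the weighted leaf value is exactly $g$ of the unweighted cell average. So $B_{A_L}\le B_{A_R}$ follows directly from monotonicity of $g$, and the paper uses this identity when deriving the weighted split.

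The gap is in the other three items: your computations cannot produce the printed formulas, and no ``reconciliation'' will.

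\emph{FPR.} The printed middle-case FPR, $\frac{s(1-\beta-\frac{\alpha}{2}s)}{1-\beta-\frac{\alpha}{2}}$, is $\mathbb{P}(Y=0,\hat Y=0)/\mathbb{P}(Y=0)$, the true-negative rate. The paper's proof indeed puts TN in the numerator. The actual FPR is $\frac{(1-s)(1-\beta-\frac{\alpha}{2}(1+s))}{1-\beta-\frac{\alpha}{2}}$, which is one minus the printed value. At $s\to0$ everything is predicted positive, yet the printed formula gives $0$.

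\emph{ROC AUC.} The trapezoid through $(0,0)$, $(F,R)$, $(1,1)$ equals $\frac12(1+R-F)$. Neither it nor the left or right step sum simplifies to the printed ROC AUC. The printed expression is exactly $R\,T+(1-T)=1-T(1-R)$, with $T$ the true-negative rate above. This is a right-endpoint step sum built on the mislabeled FPR. It equals $1$ at $s=0$ and $0$ at $s=1$, although both endpoints are the same constant classifier. Your own limiting-case check therefore rules it out.

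\emph{F1.} The paper defines F1 as $PR/(P+R)$. The harmonic mean $2PR/(P+R)$ you propose gives twice every printed value, e.g.\ $\frac{2\pi}{1+\pi}$ instead of $\frac{\pi}{1+\pi}$ with $\pi=\frac{\alpha}{2}+\beta$. This is on top of the $(1-t)$ versus $(1-s)$ typo you spotted.

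So your argument establishes Precision, Recall, PR AUC and the leaf values. For F1 you must adopt the paper's nonstandard definition. For FPR and ROC AUC you must state corrected formulas, or conclude those parts are false as written. The step that fails is your claim that the trapezoidal area ``simplifies to the factorised expression''.
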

The proof of \Cref{prop:main1-metrics} can be found in \Cref{subsec-proofs-metrics-default-stump} and \Cref{subsec-proofs-metrics-ponderate-stump}.
%The proof of \Cref{prop:main1-metrics} can be found in \Cref{proof:main1-metrics}.
\Cref{prop:main1-metrics} demonstrates that while performance metrics for both unweighted and weighted  models depend on the initial split value $\Myt$, their mathematical expressions are identical with respect to $\Myt$. By applying \Cref{lemma:IG_weighted_optimal_treshold} and the result from  \cite{ishwaran2015effect} to determine the respective split points, we observe that the weighted tree achieves lower precision but higher recall than the unweighted tree under adapted probability thresholds $s$. While practitioners widely acknowledge that class weighting improves recall at the expense of precision for a fixed probability threshold $s$, we establish that for depth-1 trees, this trade-off persists even when both probability thresholds are adapted (due to different tree split values).
Furthermore, for weighted trees, the predicted probabilities ($\mathds{E}[{Y^\star}\vert{}{X^\star} \in A_{L}]$ and $\mathds{E}[{Y^\star}\vert{}{X^\star} \in A_{R}]$) depend on both the split threshold $\Myt$ and the degree of weighting governed by $\omega_0$ and $\omega_1$. As demonstrated in \Cref{lemma:ponderate-stump}, these estimates undergo a systematic shift.

\begin{lemma}[Optimization of metrics under \Myt,$s$]
\label{lemma:opt-metrics-ts}
Let $t \in [0, 1]$ denote the decision threshold used by a CART classifier to predict $\hat{Y} = 1$ within each leaf cell. Under \Cref{assump-infinite-sample} and \Cref{assump-1}, for a tree of maximum depth $1$, we have:
\begin{itemize}
    \item  The $F_1$-score is maximized for decision thresholds $t \in \left[ \mathds{E}[\tilde{Y}\mid\tilde{X} \in A_{L}], \mathds{E}[\tilde{Y}\mid\tilde{X} \in A_{R}]\right]$, with optimal split point $\Myt$:
    \[
    \Myt = \begin{cases} 
    \frac{\alpha\left(1 + \frac{\alpha}{2} + \beta\right) - \sqrt{\alpha^2\left(1 + \frac{\alpha}{2} + \beta\right)^2 + \alpha \left[ \alpha(\beta-1) + 2 \beta^2 \right]}}{\alpha}, & \text{if } \alpha \ge \frac{2 \beta^2}{1-\beta}, \\[1.2ex]
    0, & \text{otherwise.}
    \end{cases}
    \]

    \item The Area Under the Precision-Recall Curve ($\mathrm{PR\,AUC}$) is maximized by:
    \[
    \Myt = \begin{cases} 
    \frac{2\sqrt{\beta^2 + \frac{3}{4}\alpha(\alpha + 2 \beta)} - 2 \beta}{3 \alpha}, & \text{if } \alpha > 0, \\[1.2ex]
    \text{any } \Myt \in [0,1], & \text{otherwise.}
    \end{cases}
    \]

    \item The Area Under the Receiver Operating Characteristic Curve ($\mathrm{ROC\,AUC}$) is maximized by:
    \[
    \Myt = \begin{cases} 
    \frac{3 - 6 \beta \pm \sqrt{9 - 4\beta + 4\beta^2}}{4 \alpha}, & \text{if } \alpha \left(24 - 16 \alpha\right) > \beta\left(48 \alpha - 32 + 32 \beta\right), \\[1.2ex]
    0, & \text{otherwise.}
    \end{cases}
    \]
\end{itemize}
\end{lemma}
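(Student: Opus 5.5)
The plan is to reduce each item to a one-variable optimisation over the split position $\Myt\in[0,1]$, using the closed forms of \Cref{prop:main1-metrics}. Those formulas have the same algebraic form for the unweighted and the weighted stump; only the breakpoints $B_{A_L},B_{A_R}$ differ. Hence once the threshold $t$ lies in the non-trivial band $[B_{A_L},B_{A_R}]$, every metric is a function of $\Myt$ alone, for both training distributions $(\tilde X,\tilde Y)$.

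Outside that band the stump predicts constantly. Then either $Recall=1$ and $Precision=\alpha/2+\beta$, which is also the value of the middle-band formulas at $\Myt=0$, or the metric is $0$ or undefined. So for each metric it suffices to maximise the middle-band expression over $\Myt\in[0,1]$ and compare the interior critical point with the endpoint $\Myt=0$.

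\textbf{F1.} I would write the score in the form
\[
F_1(\Myt)=\frac{2\,TP}{\mathds{P}(Y=1)+\mathds{P}(\hat Y=1)}=\frac{2\left(\frac{\alpha}{2}(1-\Myt^2)+\beta(1-\Myt)\right)}{\frac{\alpha}{2}+\beta+1-\Myt},
\]
which is obtained from the precision and recall of \Cref{prop:main1-metrics}. I would then set the derivative of this ratio $N/D$ to zero. The condition $N'D-ND'=0$ is a quadratic in $\Myt$ with leading coefficient $\alpha/2$; I keep the root lying in $[0,1]$, the one with the minus sign.

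Next I determine when this root is nonnegative. This amounts to comparing the constant term, evaluated at $\Myt=0$, with zero. It should give exactly the condition $\alpha\ge 2\beta^2/(1-\beta)$. Otherwise $F_1$ is decreasing on $[0,1]$ and the maximum sits at $\Myt=0$, i.e.\ the trivial all-positive classifier.

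\textbf{PR AUC.} Up to the constant $\alpha/2+\beta$ and the positive factor $\alpha/(2(\alpha/2+\beta))$, one maximises
\[
h(\Myt)=\Myt\left(\tfrac{\alpha}{2}(1-\Myt^2)+\beta(1-\Myt)\right).
\]
Setting $h'(\Myt)=0$ gives $3\alpha\Myt^2+4\beta\Myt-(\alpha+2\beta)=0$. Its positive root is
\[
\Myt=\frac{2\sqrt{\beta^2+\frac34\alpha(\alpha+2\beta)}-2\beta}{3\alpha}.
\]
I would check that this root lies in $(0,1)$: $h'(0)=\alpha/2+\beta>0$ and $h'(1)=-\alpha<0$. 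Since $h$ is a cubic with negative leading coefficient, the root is a maximum. When $\alpha=0$, $h'(0)=\beta$ and $h'(1)=0$, so this argument no longer applies. In that case I would instead argue directly from \Cref{prop:main1-metrics}: with $\alpha=0$ the regression function is constant, so the precision is $\beta$ at every operating point. The PR AUC is then independent of $\Myt$ (this reading of the area, rather than the displayed formula, is what supports the "any $\Myt$" claim).

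\textbf{ROC AUC.} Maximising the ROC AUC amounts to minimising
\[
r(\Myt)=\Myt^2\left(1-\beta-\tfrac{\alpha}{2}\Myt\right)\left(\tfrac{\alpha}{2}\Myt+\beta\right).
\]
Differentiating, $r'(\Myt)=\Myt\cdot q(\Myt)$ with $q$ quadratic. So the critical points are $\Myt=0$ and the roots of $q$, which take the displayed $\pm$ form after simplification.

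This is the step I expect to be the main obstacle. Three things must be checked: which of the roots of $q$ fall in $[0,1]$, which of them is a minimiser of $r$ rather than a maximiser, and how that value compares with the endpoints $\Myt=0$ and $\Myt=1$. I would first evaluate $r$ at $\Myt=1$ and at the interior root, then reduce the comparison $r(\text{root})<r(0)=0$, or positivity of the relevant coefficient, to a polynomial inequality in $(\alpha,\beta)$. I expect this to reproduce the condition $\alpha(24-16\alpha)>\beta(48\alpha-32+32\beta)$. If the algebra does not collapse cleanly, I would fall back on a sign analysis of $q$ at $0$ and $1$ to identify the admissible branch. Whenever the condition fails, the optimum is $\Myt=0$.
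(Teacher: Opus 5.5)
Your $F_1$ and PR AUC parts follow the paper's own route: restrict to the middle band of Proposition~\ref{prop:main1-metrics}, differentiate in $s$, and keep the admissible root. A few small slips need fixing there. First, $h'(1)=-\alpha-\beta$, not $-\alpha$. Second, the ``any $s$'' case at $\alpha=0$ does follow from the displayed formula, whose $s$-dependent term carries the factor $\frac{\alpha}{2}s$; only your division by $\alpha/(2(\frac{\alpha}{2}+\beta))$ breaks down there. Third, for $F_1$ you should record that the minus root is at most $1$, e.g.\ because $N'D-ND'$ equals $-(\frac{\alpha}{2}+\beta)(\alpha+\beta)\le 0$ at $s=1$.

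The genuine gap is the ROC AUC item. The problem is not that the algebra might fail to collapse: the comparison you plan cannot come out the way the statement requires. Under Assumption~\ref{assump-1}, for $s\in[0,1]$ we have $1-\beta-\frac{\alpha}{2}s\ge 1-\beta-\frac{\alpha}{2}\ge\frac{\alpha}{2}\ge 0$ and $\frac{\alpha}{2}s+\beta\ge 0$. Hence $r\ge 0=r(0)$ on $[0,1]$, and $r(\text{root})<r(0)$ is impossible. Concretely, $r'(s)=-s\,Q(s)$ with
\[
Q(s)=\alpha^2 s^2-\tfrac{3\alpha}{2}(1-2\beta)\,s-2\beta(1-\beta).
\]
Its roots are the displayed $s_\pm=\frac{3-6\beta\pm\sqrt{9-4\beta+4\beta^2}}{4\alpha}$, and $s_-\le 0\le s_+$ because their product is $-2\beta(1-\beta)/\alpha^2$. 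So $r$ increases on $[0,s_+]$, which makes $s_+$ a local \emph{maximum} of $r$, i.e.\ a local minimum of the ROC AUC.

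Moreover, $16\,Q(1)=16\alpha^2-24\alpha+48\alpha\beta-32\beta+32\beta^2$. The stated condition $\alpha(24-16\alpha)>\beta(48\alpha-32+32\beta)$ is therefore exactly $Q(1)<0$, i.e.\ $s_+>1$; in fact it holds for every admissible $(\alpha,\beta)$ with $\alpha>0$. Whenever it holds, $r$ is strictly increasing on $[0,1]$ and the ROC AUC of Proposition~\ref{prop:main1-metrics} is strictly decreasing. Its maximiser is then $s=0$, and the displayed root is not even in $[0,1]$. Your fallback sign analysis, done correctly, therefore refutes the third bullet rather than proving it.

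The paper's proof reaches the displayed root only through a direction slip. It asserts that the ROC AUC is maximised when the negative quantity $-r/\bigl((1-\beta-\frac{\alpha}{2})(\frac{\alpha}{2}+\beta)\bigr)$ is minimised, then minimises $g=-r$. That locates a minimiser of the ROC AUC, and the inequality derived at the end is precisely the test $s_+>1$. You chose the correct direction, which is exactly why your plan cannot reach the stated formula. The third item is not provable as stated, and ``I expect this to reproduce the condition'' is the step that fails. (Separately, the middle-band ``FPR'' in Proposition~\ref{prop:main1-metrics} has numerator $\mathds{P}(Y=0,\hat{Y}=0)$, so it is really the true-negative rate. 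This is why that formula gives ROC AUC $=1$ at the trivial split $s=0$.)
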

The proof of \Cref{lemma:opt-metrics-ts} is postponed in \Cref{proof:opt-metrics-ts}.

\paragraph{Remark} Precision, Recall and FPR are optimized for pathologic cases.

\begin{theorem}
\label{th:depthk-metrics}
We assume that trees are grown to a maximum depth $k \in \mathbb{N}$, containing $2^k - 1$ internal split nodes. Let $\{\Myt_p^{h}\}_{h=1}^{2^k - 1}$ and $\{\Myt_p^{\star h}\}_{h=1}^{2^k - 1}$ denote the collections of split thresholds for the standard and weighted trees, respectively, indexed level-by-level from left to right. We reorder these thresholds monotonically via permutations $\sigma$ and $\sigma^\star$ such that $\Myt_p^{\sigma(1)} \le \Myt_p^{\sigma(2)} \le \dots \le \Myt_p^{\sigma(2^k - 1)} \in [0,1]$ for the standard tree, and $\Myt_p^{\sigma^\star(1)} \le \dots \le \Myt_p^{\sigma^\star(2^k - 1)} \in [0,1]$ for the weighted tree. Let $t \in [0, 1]$ denote the decision threshold used by a CART tree to predict $\hat{Y} = 1$ within each leaf cell.

For simplicity, let $\Myt^1 \le \Myt^2 \le \dots \le \Myt^{2^k - 1}$ denote the ordered split thresholds of a given tree (weighted or unweighted). Under \Cref{assump-infinite-sample} and \Cref{assump-1}, for any tree of maximum depth $H \in \mathbb{N}$, we have:
\begin{align}
    Precision = \left\{ 
        \begin{array}{ll}
            \alpha/2 + \beta & \text{if} \; t \in \left[0, \mathds{E}[\tilde{Y}|\tilde{X} \in A_{1}]\right] \\
            \frac{\alpha}{2} (1+\Myt^1) + \beta & \text{if} \; t \in \left[\mathds{E}[\tilde{Y}|\tilde{X} \in A_{1}], \mathds{E}[\tilde{Y}|\tilde{X} \in A_{2}]\right] \\
             \frac{\alpha}{2} (1+\Myt^2) + \beta & \text{if} \; t \in \left[\mathds{E}[\tilde{Y}|\tilde{X} \in A_{2}], \mathds{E}[\tilde{Y}|\tilde{X} \in A_{3}]\right] \\
            \vdots &\vdots\\
            \frac{\alpha}{2} (1+\Myt^{h}) + \beta  & \text{if} \; t \in \left[\mathds{E}[\tilde{Y}|\tilde{X} \in A_{h}], \mathds{E}[\tilde{Y}|\tilde{X} \in A_{h+1}]\right] \\
            \vdots &\vdots\\
            \frac{\alpha}{2} (1+\Myt^{2^{k}-1}) + \beta  & \text{if} \; t \in \left[\mathds{E}[\tilde{Y}|\tilde{X} \in A_{2^{k}-1}], \mathds{E}[\tilde{Y}|\tilde{X} \in A_{2^{k}}]\right] \\
            NA & \text{if} \; t \in \left[\mathds{E}[\tilde{Y}|\tilde{X} \in A_{2^{k}}], 1\right].
        \end{array}
    \right.
\end{align}

\begin{align}
    Recall = \left\{ 
        \begin{array}{ll}
            1 & \text{if} \; t \in \left[0, \mathds{E}[\tilde{Y}|\tilde{X} \in A_{1}]\right] \\
            \frac{\frac{\alpha}{2}(1-t_1^{2}) + \beta (1-\Myt^1)}{ \frac{\alpha}{2} + \beta} & \text{if} \; t \in \left[\mathds{E}[\tilde{Y}|\tilde{X} \in A_{1}], \mathds{E}[\tilde{Y}|\tilde{X} \in A_{2}]\right] \\
            \frac{\frac{\alpha}{2}(1-\left(\Myt^2\right)^{2}) + \beta (1-\Myt^2)}{ \frac{\alpha}{2} + \beta} & \text{if} \; t \in \left[\mathds{E}[\tilde{Y}|\tilde{X} \in A_{2}], \mathds{E}[\tilde{Y}|\tilde{X} \in A_{3}]\right] \\
            \vdots &\vdots\\
            \frac{\frac{\alpha}{2}(1-(\Myt^h)^{2}) + \beta (1-t_h)}{ \frac{\alpha}{2} + \beta}  & \text{if} \; t \in \left[\mathds{E}[\tilde{Y}|\tilde{X} \in A_{h}], \mathds{E}[\tilde{Y}|\tilde{X} \in A_{h+1}]\right] \\
            \vdots &\vdots\\
            \frac{\frac{\alpha}{2}(1-(\Myt^{2^{k}-1})^{2}) + \beta (1-\Myt^{2^{k}-1})}{ \frac{\alpha}{2} + \beta}  & \text{if} \; t \in \left[\mathds{E}[\tilde{Y}|\tilde{X} \in A_{2^{k}-1}], \mathds{E}[\tilde{Y}|\tilde{X} \in A_{2^{k}}]\right] \\
            0 & \text{if} \; t \in \left[\mathds{E}[\tilde{Y}|\tilde{X} \in A_{2^{k}}], 1\right].
        \end{array}
    \right.
\end{align}
\end{theorem}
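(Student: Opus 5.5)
Since $\mu(x)=\alpha x+\beta$ is nondecreasing on $[0,1]$, I will reduce the depth-$k$ case to the depth-$1$ computation of \Cref{prop:main1-metrics}. The first step is a monotonicity claim. Order the leaves of the tree (weighted or not) from left to right as $A_1=[0,\Myt^1)$, $A_2=[\Myt^1,\Myt^2)$, …, $A_{2^k}=[\Myt^{2^k-1},1]$. The leaf values are then nondecreasing, $\mathds{E}[\tilde Y\mid\tilde X\in A_1]\le\dots\le\mathds{E}[\tilde Y\mid\tilde X\in A_{2^k}]$. In the unweighted case this is immediate: the leaf value is the mean of an increasing affine function over an interval, namely $\frac{\alpha}{2}(\Myt^{h-1}+\Myt^h)+\beta$, and the intervals are ordered. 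In the weighted case, \Cref{lemma:main-mustar} gives $\mu^\star=g(\mu)$ with $g$ increasing, and $f_{X^\star}\propto \omega_0(1-\mu)+\omega_1\mu$. So $\mu^\star$ is nondecreasing in $x$, and its $f_{X^\star}$-weighted averages over left-to-right ordered intervals are again nondecreasing.

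Given this ordering, the decision rule $\mathds{E}[\tilde Y\mid \tilde X\in A]\ge t$ is, for $t$ in the slab $[\mathds{E}[\tilde Y\mid\tilde X\in A_h],\mathds{E}[\tilde Y\mid\tilde X\in A_{h+1}]]$, exactly the rule $\hat m(x)=\mathds{1}_{\{x\ge \Myt^h\}}$. For $t$ below the first leaf value everything is predicted positive, and above the last leaf value nothing is. I will handle ties at slab endpoints by the same convention as in \Cref{lemma:default-stump}, and I will note that ties only collapse slabs and do not change the listed values. In every regime the classifier is therefore a single-cut classifier. This is also why the resulting formulas do not depend on whether the tree was trained on $(X,Y)$ or $(X^\star,Y^\star)$: evaluation is always under the original $(X,Y)$, and only the cut location $\Myt^h$ enters.

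It then remains to compute precision and recall of $\mathds{1}_{\{X\ge \Myt^h\}}$ under $X\sim\mathcal U[0,1]$, which is the stump computation of \Cref{prop:main1-metrics} with $\Myt$ replaced by $\Myt^h$. We have $\mathds{P}(Y=1, X\ge u)=\int_u^1(\alpha x+\beta)\,dx=\frac{\alpha}{2}(1-u^2)+\beta(1-u)$, $\mathds{P}(X\ge u)=1-u$, and $\mathds{P}(Y=1)=\frac{\alpha}{2}+\beta$. Dividing, precision equals $\frac{\alpha}{2}(1+u)+\beta$ and recall equals the stated ratio. The all-positive case gives precision $p=\alpha/2+\beta$ and recall $1$. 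The all-negative case gives recall $0$, with precision undefined (NA). Alternatively, one can invoke \Cref{th:general-prec-rec-correspondance} with $\mathds{P}(\hat m(X)=1)=1-\Myt^h$.

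The main obstacle is the weighted monotonicity step. Computing explicitly, the weighted leaf value on $[a,b)$ is the ratio $\omega_1\int_a^b\mu \,/\,\int_a^b(\omega_0(1-\mu)+\omega_1\mu)$, which is $g$ applied to the unweighted leaf mean $\bar\mu_{[a,b)}$. Because $g$ is increasing, monotonicity follows cleanly from the unweighted case. I will also need to treat degenerate (empty or repeated) splits, which is done by merging leaves.
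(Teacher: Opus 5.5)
Your proposal is correct and follows the same route as the paper. Once the thresholded depth-$k$ tree is recognized as the single-cut rule $\mathds{1}_{\{x \ge \Myt^h\}}$ on each slab of $t$, precision and recall are exactly the stump quantities of \Cref{prop:main1-metrics}, evaluated under the original $(X,Y)$ at $\Myt = \Myt^h$. The only real difference is that you explicitly justify the left-to-right ordering of the leaf values, using $\mathds{E}[Y^\star \mid X^\star \in A] = g(\bar\mu_A)$ with $g$ increasing for the weighted tree, whereas the paper's proof uses this ordering implicitly when it writes down the piecewise form of $\hat{m}_{k}$.
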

The proof of \Cref{th:depthk-metrics} can be found in \Cref{prrof_th:depthk-metrics}.

%\clearpage
\section{Technical Lemmas}

\subsection{Related results \texorpdfstring{\Cref{lemma:main-mustar}}{lemma:main-mustar}}

\begin{lemma}
\label{tech-lemma:f_XStar}
We have for all $(X,Y)$,
\begin{align}
    f_{X^{\star}}(x) = \frac{ f_{X}(x)\left[\omega_0(1-\mu(x)) + \omega_1\mu(x) \right]}{p(1-p)}
\end{align}
    
\end{lemma}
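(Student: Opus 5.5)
The plan is to compute the marginal density of $X^\star$ by conditioning on the label, using only the defining properties of the weighted distribution: the class-conditional densities are unchanged and only the class prior moves from $p$ to $p^\star$. By the law of total probability,
\begin{align*}
f_{X^\star}(x) = p^\star f_{X^\star\mid Y^\star}(x\mid 1) + (1-p^\star) f_{X^\star\mid Y^\star}(x\mid 0) = p^\star f_{X\mid Y}(x\mid 1) + (1-p^\star) f_{X\mid Y}(x\mid 0).
\end{align*}
The second equality uses the definition of $(X^\star,Y^\star)$.

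Next, we express the original class-conditional densities through $f_X$ and $\mu$ via Bayes' formula:
\begin{align*}
f_{X\mid Y}(x\mid 1) = \frac{\mu(x) f_X(x)}{p}, \qquad f_{X\mid Y}(x\mid 0) = \frac{(1-\mu(x)) f_X(x)}{1-p}.
\end{align*}
These hold as soon as $p\in(0,1)$, which we assume implicitly throughout, since otherwise $\omega_0,\omega_1$ degenerate.

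Substituting these expressions gives
\begin{align*}
f_{X^\star}(x) = f_X(x)\left[\frac{p^\star}{p}\mu(x) + \frac{1-p^\star}{1-p}(1-\mu(x))\right].
\end{align*}
Putting the bracket over the common denominator $p(1-p)$ turns the coefficients into $p^\star(1-p)=\omega_1$ for $\mu(x)$ and $p(1-p^\star)=\omega_0$ for $1-\mu(x)$. This yields exactly the claimed formula.

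There is no real obstacle. The only point to keep track of is that the identities hold as densities, that is, for almost every $x$, with $\mu$ defined $f_X$-a.e. One can also sanity-check the result: integrating the formula in $x$ gives $\omega_0(1-p)+\omega_1 p = p(1-p)$, so $f_{X^\star}$ integrates to one. As a corollary, dividing $\omega_1\mu f_X/(p(1-p))$ by $f_{X^\star}$ recovers \Cref{lemma:main-mustar}.
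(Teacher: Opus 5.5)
Your proposal is correct and follows essentially the same route as the paper: decompose $f_{X^\star}$ by the law of total probability over $Y^\star$, use the invariance of the class-conditional densities, rewrite $f_{X\mid Y}(x\mid y)$ via Bayes' formula in terms of $\mu$ and $f_X$, and collect terms over the common denominator $p(1-p)$. Your added remarks (the need for $p\in(0,1)$, the a.e.\ interpretation, and the normalization check $\omega_0(1-p)+\omega_1 p = p(1-p)$) are sound and make the argument slightly cleaner than the paper's version.
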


\begin{proof}[Proof of \Cref{tech-lemma:f_XStar}]
We have,
\begin{align}
    &f_{X^{\star}}(x)\\ &= f_{X^{\star}, Y^{\star}}(x,0)  + f_{X^{\star}, Y^{\star}}(x,1) = f_{X^{\star}| Y^{\star}}(x|0) f_{Y^{\star}}(0) +f_{X^{\star}| Y^{\star}}(x|1) f_{Y^{\star}}(1)  \\
    & = f_{X| Y}(x|0) (1-p^{\star}) +f_{X| Y}(x|1) p^{\star}  \; \; \left( f_{X^{\star}|Y^{\star}}(x|y) = f_{X|Y}(x|y) \right) \\ 
    &= \left[\frac{f_{X, Y}(x,0)}{f_{X}(0)}\right] (1-p^{\star}) + \left[\frac{f_{X, Y}(x,1)}{f_{X}(1)}\right] p^{\star} \\
    &= \left[\frac{f_{Y|X}(0|x)f_{X}()}{f_{Y}(0)}\right] (1-p^{\star}) + \left[\frac{f_{Y|X}(1|x)f_{X}(x)}{f_{Y}(1)}{f_{Y}(1)}\right] p^{\star} \\
    &= \left[\frac{\left(1-f_{Y|X}(1|x)\right)f_{X}(x)}{f_{Y}(0)}\right] (1-p^{\star}) + \left[\frac{f_{Y|X}(1|x)f_{X}(x)}{f_{Y}(1)}{f_{Y}(1)}\right] p^{\star} \\
    &= \left[\frac{(1-\mu(x)) f_{X}(x)}{(1-p)}\right] (1-p^{\star}) + \left[\frac{\mu(x) f_{X}(x)}{p}\right] p^{\star}\\
    &= \frac{p(1-p^{\star})(1-\mu(x))f_{X}(x) + p^{\star}(1-p)\mu(x)f_{X}(x) }{p(1-p)}\\
    &= \frac{ f_{X}(x)\left[p(1-p^{\star})(1-\mu(x)) + p^{\star}(1-p)\mu(x) \right]}{p(1-p)}\\
    &= \frac{ f_{X}(x)\left[\omega_0(1-\mu(x)) + \omega_1\mu(x) \right]}{p(1-p)}.
\end{align}
%\as{J'ai vérfié si $\mathds{P}(X^{\star} = x)$ sommait à 1 et c'est bien le cas.}
\end{proof}

\begin{proof}[Proof of \Cref{lemma:main-mustar}]
\label{proof_lemma:main-mustar}
Then we have,
\begin{align}
    \mathds{P}(Y^{\star}=1|X^{\star}=x) &= \frac{\mathds{P}(Y^{\star} =1 \cap X^{\star})}{f_{X^{\star}}(x)}= \frac{\mathds{P}( X^{\star} | Y^{\star} =1) \mathds{P}(Y^{\star}=1)}{f_{X^{\star}}(x)} \\
    &= \frac{\mathds{P}( X=x | Y =1) p^{\star}}{f_{X^{\star}}(x)} = \frac{\frac{\mathds{P}(Y=1|X=x)f_{X}(x)}{\mathds{P}(Y=1)} p^{\star}}{f_{X^{\star}}(x)} \\
    &= \frac{\frac{\mu(x)f_{X}(x)}{p} p^{\star}}{f_{X^{\star}}(x)} = \frac{\mu(x)p^{\star}f_{X}(x)}{pf_{X^{\star}}(x)} \\
    &= \frac{\mu(x)p^{\star}f_{X}(x)}{p} *  \frac{p(1-p)}{ f_{X}(x)\left[p(1-p^{\star})(1-\mu(x)) + p^{\star}(1-p)\mu(x) \right]}\\
    &=  \frac{\mu(x)p^{\star}(1-p)}{ p(1-p^{\star})(1-\mu(x)) + p^{\star}(1-p)\mu(x) }\\
    &=  \frac{\omega_{1}\mu(x)}{ \omega_{0}(1-\mu(x)) + \omega_{1}\mu(x) } = \mu^{\star}(x),
\end{align}
with,
\begin{align}
    \omega_{0}=p(1-p^{\star}), \; \omega_{1}=p^{\star}(1-p).
\end{align}
%\as{On retrouve le même résultat que dans \url{https://arxiv.org/pdf/2408.01777} ;)}
\end{proof}

\subsection{Weighted and unweighted distributions under \texorpdfstring{\Cref{assump-1}}{assump-1} and \texorpdfstring{\Cref{assump-infinite-sample}}{assump-infinite-sample}}
In this subsection, under \Cref{assump-infinite-sample} and \Cref{assump-1}, several results are derived and used to prove the main results.

\begin{lemma}
\label{tech-lemma:distrib_XY}
    Under \Cref{assump-infinite-sample} and \Cref{assump-1}, we have
    \begin{align}
        \mathds{P}(Y=1) &= \frac{\alpha}{2} + \beta \\
        \mathds{P}_{A_L}(Y=1) &= \frac{\alpha}{2}\Myt + \beta \\
        \mathds{P}_{A_R}(Y=1) &= \frac{\alpha}{2} \left(1+\Myt\right) + \beta.
    \end{align}
\end{lemma}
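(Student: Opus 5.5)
This is a direct computation under \Cref{assump-1}: since $X \sim \mathcal{U}([0,1])$, we have $f_X \equiv 1$ on $[0,1]$ and $\mu(x) = \alpha x + \beta$. Every quantity is then an elementary integral of an affine function.

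For the global prevalence we use the law of total probability:
\begin{align*}
\mathds{P}(Y=1) = \int_0^1 \mu(x)\,\mathrm{d}x = \frac{\alpha}{2} + \beta .
\end{align*}
We read $\mathds{P}_{A}(Y=1)$ as the conditional probability $\mathds{P}(Y=1 \mid X \in A)$. This is the quantity used as the leaf prediction $\mathds{E}[Y\mid X\in A]$, and it matches the values $B_{A_L}$, $B_{A_R}$ in \Cref{prop:main1-metrics}. It equals $J_A/I_A$ with the notation of \Cref{lemma:ig-default}.

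For the left cell $A_L = [0,\Myt)$ we get $I_L = \Myt$ and $J_L = \int_0^{\Myt}(\alpha x + \beta)\,\mathrm{d}x = \frac{\alpha}{2}\Myt^2 + \beta \Myt$. Their ratio is $\frac{\alpha}{2}\Myt + \beta$, the value of $\mu$ at the midpoint $\Myt/2$. For the right cell $A_R = [\Myt,1]$ we get $I_R = 1-\Myt$ and $J_R = \frac{\alpha}{2}(1-\Myt^2) + \beta(1-\Myt)$. Factoring $(1-\Myt)$ out of $J_R$ gives $J_R/I_R = \frac{\alpha}{2}(1+\Myt) + \beta$, the value of $\mu$ at the midpoint $(1+\Myt)/2$. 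In both cases the conditional mean of an affine function under a uniform law is its value at the midpoint of the cell.

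The only delicate point is the degenerate split $\Myt \in \{0,1\}$, where $I_L$ or $I_R$ vanishes and the conditional probability on the empty cell is undefined. I will assume $\Myt \in (0,1)$, as the paper implicitly does when it requires $I_L, I_R \neq 0$. I will also record that \Cref{assump-infinite-sample} is used only to identify the tree's leaf estimates with these population quantities.
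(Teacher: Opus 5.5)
Your proposal is correct and follows essentially the same route as the paper. The paper also integrates $\mu(x)=\alpha x+\beta$ against the uniform law: it uses the truncated densities $1/\Myt$ on $[0,\Myt]$ and $1/(1-\Myt)$ on $[\Myt,1]$, which is exactly your ratio $J_A/I_A$. Your explicit restriction to $\Myt\in(0,1)$ is a harmless clarification that the paper leaves implicit.
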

The proof of \Cref{tech-lemma:distrib_XY} can be found in \Cref{proof:tech-lemma:distrib_XY}.

\begin{lemma}
\label{tech-lemma:distrib_XStarYStar}
Under \Cref{assump-infinite-sample} and \Cref{assump-1}, we have
    \begin{align}
        %\mathds{P}(Y^{\star}=1) &= \\
        \mathds{P}_{A_L}(Y^{\star}=1) &= \frac{\omega_{1} (\alpha \Myt^{2}/2 + \beta \Myt)}{\omega_{0}(\Myt-\alpha \Myt^2 /2 -\beta \Myt) + \omega_{1} (\alpha \Myt^2 /2 +\beta \Myt)}\\
        \mathds{P}_{A_R}(Y^{\star}=1) &= \frac{\omega_{1} \left(\alpha/2 + \beta- \alpha \Myt^{2}/2 - \beta \Myt \right)}{p(1-p) - \omega_{0}(\Myt-\alpha \Myt^2 /2 -\beta \Myt) - \omega_{1} (\alpha \Myt^2 /2 +\beta \Myt)}..
    \end{align}
\end{lemma}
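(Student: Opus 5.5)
The plan is a direct computation: express each conditional probability $\mathds{P}_{A}(Y^{\star}=1)$ as a ratio of integrals under the weighted joint law, then evaluate those integrals using \Cref{tech-lemma:f_XStar} and \Cref{lemma:main-mustar}. For a cell $A$,
\begin{align*}
\mathds{P}_{A}(Y^{\star}=1)=\frac{\int_{A} f_{X^{\star}}(x)\mu^{\star}(x)\,\mathrm{d}x}{\int_{A} f_{X^{\star}}(x)\,\mathrm{d}x}.
\end{align*}
The key simplification is that the product of \Cref{tech-lemma:f_XStar} and \Cref{lemma:main-mustar} telescopes. The denominator $\omega_0(1-\mu)+\omega_1\mu$ of $\mu^\star$ cancels against the bracket in $f_{X^\star}$, which gives
\begin{align*}
f_{X^{\star}}(x)\mu^{\star}(x)=\frac{\omega_1 f_X(x)\mu(x)}{p(1-p)}.
\end{align*}
The common factor $1/(p(1-p))$ then cancels between numerator and denominator of the ratio. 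So
\begin{align*}
\mathds{P}_{A}(Y^{\star}=1)=\frac{\omega_1\int_A \mu}{\omega_0\int_A(1-\mu)+\omega_1\int_A\mu},
\end{align*}
using $f_X\equiv 1$ on $[0,1]$ under \Cref{assump-1}.

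For $A_L=[0,\Myt)$, I will substitute $\int_0^{\Myt}(\alpha x+\beta)\,\mathrm{d}x=\alpha\Myt^2/2+\beta\Myt$ and $\int_0^{\Myt}(1-\mu)=\Myt-\alpha\Myt^2/2-\beta\Myt$. This immediately yields the first formula.

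For $A_R=[\Myt,1]$, the numerator becomes $\omega_1(\alpha/2+\beta-\alpha\Myt^2/2-\beta\Myt)$. I will write the denominator as the total mass minus the $A_L$ part, using $\mathds{P}(Y=1)=\alpha/2+\beta=p$ from \Cref{tech-lemma:distrib_XY}. The total mass is
\begin{align*}
\omega_0(1-p)+\omega_1 p = p(1-p)(1-p^\star)+p^\star p(1-p)=p(1-p).
\end{align*}
This is exactly the normalization that makes $f_{X^\star}$ integrate to one. Subtracting the $A_L$ contributions gives the stated denominator $p(1-p)-\omega_0(\cdots)-\omega_1(\cdots)$.

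The only delicate point is consistent bookkeeping of the $p(1-p)$ factor. In the $A_L$ expression it has been cancelled, while in the $A_R$ expression it appears explicitly through the complement trick. I will keep both forms unnormalized, that is, multiplied by $p(1-p)$, so that the two displayed formulas match as stated. Degenerate cells ($\Myt=0$ or $\Myt=1$) are excluded, as for \Cref{lemma:ig-default}.
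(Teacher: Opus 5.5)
Your proposal is correct and follows essentially the same route as the paper: condition on the cell using the truncated weighted density, cancel the factor $\omega_0(1-\mu)+\omega_1\mu$ between $\mu^{\star}$ and $f_{X^{\star}}$, and obtain the $A_R$ denominator as the complement of the $A_L$ mass. The only small difference is that you verify explicitly that the total unnormalized mass is $\omega_0(1-p)+\omega_1 p = p(1-p)$, whereas the paper uses this implicitly through $\mathds{P}(X^{\star}\le 1)=1$.
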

The proof of \Cref{tech-lemma:distrib_XStarYStar} can be found in \Cref{proof:tech-lemma:distrib_XStarYStar}.

\begin{proof}[Proof of \Cref{tech-lemma:distrib_XY}]
\label{proof:tech-lemma:distrib_XY}
\textbf{Derivation of } $\mathds{P}(Y=1)$,
We have,
\begin{align}
    \mathds{P}(Y=1) &= \int_{0}^{1} f_{Y|X}(1|x) \mathrm{d}x =\int_{0}^{1} (\alpha x + \beta) * 1 \mathrm{d}x= \int_{0}^{1} (\alpha x + \beta) \mathrm{d}x = \left[\frac{\alpha}{2}x^2 + \beta x\right]_{0}^{1} \\
    &= \frac{\alpha}{2} + \beta.
\end{align}

On hand we have,
\begin{align}
    \mathds{P}_{A_L}(Y=1) &= \int_{\mathds{R}} f_{Y|X}(1|x)f_{X}^{A_L}(x) \mathrm{d}x = \int_{0}^{\Myt}(\alpha x + \beta) \frac{1}{\Myt} \; \; \left( X \text{truncated on} [0,\Myt] \sim \mathcal{U}([0,t])\right) \\
    &=\frac{1}{\Myt} \left[\frac{\alpha }{2}x^2 + \beta x \right]_0^\Myt = \frac{1}{\Myt} \left[\frac{\alpha}{2}t^2+ \beta \Myt\right] = \frac{\alpha}{2}\Myt + \beta.
\end{align}

On the other hand, we have,
\begin{align}
    \mathds{P}_{A_R}(Y=1) &= \int_{\mathds{R}} f_{Y|X}(1|x)f_{X}^{A_R}(x) \mathrm{d}x = \int_{\Myt}^{1}(\alpha x + \beta) \frac{1}{1-\Myt} \; \; \left( X \text{truncated on} [\Myt,1] \sim \mathcal{U}([\Myt,1])\right) \\
    &= \frac{1}{1-\Myt} \left[\frac{\alpha}{2}+\beta -\frac{\alpha}{2}\Myt^2- \beta \Myt\right]= \frac{1}{1-\Myt} \left[\frac{\alpha}{2} \left(1-\Myt^2\right)+\beta(1-\Myt) \right] \\
    & = \frac{\frac{\alpha}{2} \left(1-\Myt^2\right)}{1-\Myt} + \beta = \frac{\alpha}{2} \left(1+\Myt\right) + \beta.
\end{align}

\end{proof}

\begin{proof}[Proof of \Cref{tech-lemma:distrib_XStarYStar}]
\label{proof:tech-lemma:distrib_XStarYStar}

We have for $\Myt \in [0,1]$,
\begin{align}
    \mathds{P}(X^{\star}\leq \Myt) &= \int_{0}^{\Myt} f_{X^{\star}}\mathrm{d}x = \int_{0}^{\Myt} f_{X} \frac{\omega_{0}(1-\mu(x)) + \omega_{1}\mu(x)}{p(1-p)}\mathrm{d}x \; \;  (\Cref{tech-lemma:f_XStar})\\
    &=  \frac{1}{p(1-p)} \int_{0}^{\Myt} 1* \omega_{0}(1-\mu(x)) + \omega_{1}\mu(x) \mathrm{d}x \\
    &=  \frac{1}{p(1-p)} \left( \omega_{0}[x-\alpha x^2 /2 -\beta x]_{0}^{\Myt} + \omega_{1} [\alpha x^2 /2 +\beta x]_{0}^{\Myt} \right) \\
    &=  \frac{1}{p(1-p)} \left( \omega_{0}(\Myt-\alpha \Myt^2 /2 -\beta \Myt) + \omega_{1} (\alpha \Myt^2 /2 +\beta \Myt) \right).
\end{align}

\begin{align}
    &\mathds{P}(0<X^{\star}\leq \Myt) \\&= \frac{f_{X^\star}(x) \mathds{1}_{\{0 <  x \leq \Myt \}}}{\mathds{P}(X^{\star}\leq t) - \mathds{P}(X^{\star}\leq 0)} \\
    &=\frac{f_{X}(x) \frac{\omega_{0}(1-\mu(x)) + \omega_{1}\mu(x)}{p(1-p)} \mathds{1}_{\{0 <  x \leq \Myt \}} }{\frac{1}{p(1-p)} \left( \omega_{0}(\Myt-\alpha \Myt^2 /2 -\beta t) + \omega_{1} (\alpha \Myt^2 /2 +\beta t) \right) - 0}  \; \;  (\Cref{tech-lemma:f_XStar})\\
    &=\frac{\omega_{0}(1-\mu(x)) + \omega_{1}\mu(x)}{p(1-p)}  * \frac{p(1-p)}{ \omega_{0}(\Myt-\alpha \Myt^2 /2 -\beta \Myt) + \omega_{1} (\alpha t^2 /2 +\beta t) } * \mathds{1}_{\{0 <  x \leq \Myt \}} \\
    &= \frac{\left[\omega_{0}(1-\mu(x)) + \omega_{1}\mu(x) \right]\mathds{1}_{\{0 <  x \leq \Myt \}}}{ \omega_{0}(\Myt-\alpha \Myt^2 /2 -\beta \Myt) + \omega_{1} (\alpha \Myt^2 /2 +\beta \Myt) }.
\end{align}
%\as{see \url{https://en.wikipedia.org/wiki/Truncated_distribution}} \as{Il faudra introduire la notation $f^{\star}(x)$}

Finally,
\begin{align}
    &\mathds{P}_{A_L}(Y^{\star}=1)  = \int_{0}^{1} \mathds{P}f_{Y^{\star}|X^{\star}}(1|x)f_{X^{\star}}^{A_L}(x)\mathrm{d}x \\
    &= \int_{0}^{1} \frac{\omega_{1}\mu(x)}{ \omega_{0}(1-\mu(x)) + \omega_{1}\mu(x) } * \frac{\left[\omega_{0}(1-\mu(x)) + \omega_{1}\mu(x) \right]\mathds{1}_{\{0 <  x \leq \Myt \}}}{ \omega_{0}(t-\alpha t^2 /2 -\beta \Myt) + \omega_{1} (\alpha t^2 /2 +\beta \Myt) } \mathrm{d}x  \; \; (\Cref{lemma:main-mustar})\\
    &= \frac{\omega_{1}}{\omega_{0}(t-\alpha t^2 /2 -\beta \Myt) + \omega_{1} (\alpha t^2 /2 +\beta t)} \int_{0}^{1}\mu(x) \mathds{1}_{\{0 <  x \leq \Myt \}} \mathrm{d}x \\
    &= \frac{\omega_{1} (\alpha \Myt^{2}/2 + \beta \Myt)}{\omega_{0}(\Myt-\alpha \Myt^2 /2 -\beta \Myt) + \omega_{1} (\alpha \Myt^2 /2 +\beta \Myt)}.
\end{align}

\paragraph{Derivation of} $\mathds{P}_{A_R}(Y^{\star}=1)$

We have for $\Myt \in [0,1]$,
\begin{align}
    &\mathds{P}(\Myt<X^{\star}\leq 1) = \frac{f_{X^\star}(x) \mathds{1}_{\{\Myt <  x \leq 1 \}}}{\mathds{P}(X^{\star}\leq 1) - \mathds{P}(X^{\star}\leq \Myt)} \\
    &=\frac{f_{X}(x) \frac{\omega_{0}(1-\mu(x)) + \omega_{1}\mu(x)}{p(1-p)} \mathds{1}_{\{\Myt <  x \leq 1 \}}}{ 1 - \frac{1}{p(1-p)} \left( \omega_{0}(\Myt-\alpha \Myt^2 /2 -\beta \Myt) + \omega_{1} (\alpha \Myt^2 /2 +\beta \Myt) \right)} \; \;  (\Cref{tech-lemma:f_XStar})\\
    &=\frac{\omega_{0}(1-\mu(x)) + \omega_{1}\mu(x)}{p(1-p)}   \frac{p(1-p)}{ p(1-p) - \omega_{0}(\Myt-\alpha t^2 /2 -\beta \Myt) - \omega_{1} (\alpha t^2 /2 +\beta \Myt) }  \mathds{1}_{\{\Myt <  x \leq 1 \}} \\
    &= \frac{\left[\omega_{0}(1-\mu(x)) + \omega_{1}\mu(x)\right]\mathds{1}_{\{\Myt <  x \leq 1 \}}}{ p(1-p) - \omega_{0}(\Myt-\alpha \Myt^2 /2 -\beta \Myt) - \omega_{1} (\alpha \Myt^2 /2 +\beta \Myt) }.
\end{align}

Then,
\begin{align}
    &\mathds{P}_{A_R}(Y^{\star}=1) = \int_{0}^{1} \mathds{P}f_{Y^{\star}|X^{\star}}(1|x)f_{X^{\star}}^{A_R}(x)\mathrm{d}x \\
    &= \int_{0}^{1} \frac{\omega_{1}\mu(x)}{ \omega_{0}(1-\mu(x)) + \omega_{1}\mu(x) } * \frac{\left[\omega_{0}(1-\mu(x)) + \omega_{1}\mu(x)\right] \mathds{1}_{\{\Myt <  x \leq 1 \}}}{ p(1-p) - \omega_{0}(\Myt-\alpha \Myt^2 /2 -\beta \Myt) - \omega_{1} (\alpha \Myt^2 /2 +\beta \Myt) } \mathrm{d}x  \\
    &= \frac{\omega_{1}}{p(1-p) -\omega_{0}(t-\alpha \Myt^2 /2 -\beta \Myt) - \omega_{1} (\alpha \Myt^2 /2 +\beta \Myt)} \int_{0}^{1}\mu(x)\mathds{1}_{\{t <  x \leq 1 \}}\mathrm{d}x \\
    &= \frac{\omega_{1} \left[(\alpha/2 + \beta)- (\alpha t^{2}/2 + \beta \Myt)\right]}{p(1-p) - \omega_{0}(t-\alpha \Myt^2 /2 -\beta \Myt) - \omega_{1} (\alpha t^2 /2 +\beta \Myt)} \\
    &= \frac{\omega_{1} \left(\alpha/2 + \beta- \alpha \Myt^{2}/2 - \beta t \right)}{p(1-p) - \omega_{0}(\Myt-\alpha \Myt^2 /2 -\beta \Myt) - \omega_{1} (\alpha \Myt^2 /2 +\beta \Myt)}.
\end{align}
    
\end{proof}

\section{Impurity Gain: General Case}

\subsection{Default stumps split}
\begin{proof}[Proof of \Cref{lemma:ig-default}]
\label{ig-default}
We denote by $L_0$ the impurity gain the decision tree without applying a rebalancing strategy (none).

We have,
\begin{align}
    GI = \mathds{P}\left(X \in A_L\right)\mathds{P}\left(X \in A_R \right) \left[\mathds{E}\left[Y|X \in A_L\right] - \mathds{E}\left[Y|X \in A_R\right]\right]^2
\end{align}, thus $L_0$ can also be expressed without any asumtions on $(X,Y)$ as,
\begin{align}
    &L_0(\Myt) \\
    &=I_L I_R\left[ \frac{J_L}{I_L} -\frac{J_R}{I_R}  \right]^2 %\\
    %& = t(1-t) \left[ \frac{(\frac{\alpha}{2}t^2 + \beta t )}{t} - \frac{\frac{\alpha}{2} \left(1-t^2\right)+\beta(1-t) }{1-t}\right]^2 \\
    %& = t(1-t) \left[ (\frac{\alpha}{2} t + \beta ) - \frac{\alpha}{2} \left(1+t\right)+\beta \right]^2 \\
\end{align}

\end{proof}

\subsection{Ponderate stumps split}

\begin{proof}[Proof of \Cref{lemma:ig-ponderate} ]
\label{proof:ig-ponderate}
We have,
\begin{align}
    &f_{X^{\star}}(x)\\ &= f_{X^{\star}, Y^{\star}}(x,0)  + f_{X^{\star}, Y^{\star}}(x,1) \\
    &= f_{X^{\star}| Y^{\star}}(x|0) f_{Y^{\star}}(0) +f_{X^{\star}| Y^{\star}}(x|1) f_{Y^{\star}}(1)  \\
    &= f_{X| Y}(x|0) (1-p^{\star}) +f_{X| Y}(x|1) p^{\star}  \; \;  \left( f_{X^{\star}|Y^{\star}}(x|y) = f_{X|Y}(x|y) \right) \\ 
    &= \left[\frac{f_{X, Y}(x,0)}{f_{X}(0)}\right] (1-p^{\star}) + \left[\frac{f_{X, Y}(x,1)}{f_{X}(1)}\right] p^{\star} \\
    &= \left[\frac{f_{Y|X}(0|x)f_{X}(x)}{f_{Y}(0)}\right] (1-p^{\star}) + \left[\frac{f_{Y|X}(1|x)f_{X}(x)}{f_{Y}(1)}{f_{Y}(1)}\right] p^{\star} \\
    &= \left[\frac{\left(1-f_{Y|X}(1|x)\right)f_{X}(x)}{f_{Y}(0)}\right] (1-p^{\star}) + \left[\frac{f_{Y|X}(1|x)f_{X}(x)}{f_{Y}(1)}{f_{Y}(1)}\right] p^{\star} \\
    &= \left[\frac{(1-\mu(x)) f_{X}(x)}{(1-p)}\right] (1-p^{\star}) + \left[\frac{\mu(x) f_{X}(x)}{p}\right] p^{\star}\\
    &= \frac{p(1-p^{\star})(1-\mu(x))f_{X}(x) + p^{\star}(1-p)\mu(x)f_{X}(x) }{p(1-p)}\\
    &= \frac{ f_{X}(x)\left[p(1-p^{\star})(1-\mu(x)) + p^{\star}(1-p)\mu(x) \right]}{p(1-p)}\\
    &= \frac{ f_{X}(x)\left[\omega_0(1-\mu(x)) + \omega_1\mu(x) \right]}{p(1-p)}.
\end{align}

Thus,
\begin{align}
    &\int_{\mathcal{X}} f_{X^{\star}}(x) \mathrm{d}x\\ 
    &=\int_{\mathcal{X}} \frac{\omega_1 - \omega_0 }{p(1-p)}  f_{X}(x) \mu(x) + \frac{\omega_0}{p(1-p)} f_{X}(x) \mathrm{d}x\\
    & = \frac{\omega_0}{p(1-p)} + \frac{\omega_1 - \omega_0 }{p(1-p)}\int_{\mathcal{X}} f_{X}(x) \mu(x)\mathrm{d}x.
\end{align}
Finally, we obtain the following property:
\begin{align}
    \int_{\mathcal{X}} f_{X}(x) \mu(x)\mathrm{d}x = \left(1 - \frac{\omega_0}{p(1-p)}\right)\frac{p(1-p)}{\omega_1 - \omega_0 } && \left( p \not = p^{\star}\right),
\end{align}
which is also equivalent to:
\begin{align}
    p = \left(1 - \frac{\omega_0}{p(1-p)}\right)\frac{p(1-p)}{\omega_1 - \omega_0 } && \left( p \not = p^{\star}\right).
\end{align}

Let's derive  $f_{X^{\star}}^{A_L}(x)$,
\begin{align}
    &f_{X^{\star}}^{A_L}(x) \\
    &= \frac{ f_{X}(x)\frac{ \left[\omega_0(1-\mu(x)) + \omega_1\mu(x) \right]}{p(1-p)} \mathds{1}_{\{x\in A_L\}}}{ \int_{A_L} f_{X}(z)\frac{ \left[\omega_0(1-\mu(z)) + \omega_1\mu(z) \right]}{p(1-p)} \mathrm{d}z}
\end{align}

Then,
\begin{align}
    &f_{Y^{\star}}^{A_L}(1) \\
    &= \int_{ x \in \mathcal{X}} f_{Y^{\star}|X^{\star}}(1|x)  f_{X^{\star}}^{A_L}(x) \\
    &= \int_{ x \in \mathcal{X}} \frac{\omega_{1}\mu(x)}{ \omega_{0}(1-\mu(x)) + \omega_{1}\mu(x) } \frac{ f_{X}(x)\frac{ \left[\omega_0(1-\mu(x)) + \omega_1\mu(x) \right]}{p(1-p)} \mathds{1}_{\{x\in A_L\}}}{ \int_{A_L} f_{X}(z)\frac{ \left[\omega_0(1-\mu(z)) + \omega_1\mu(z) \right]}{p(1-p)} \mathrm{d}z} \mathrm{d}x \\
    &= \frac{\omega_1}{\int_{A_L} f_{X}(z) \left[\omega_0(1-\mu(z)) + \omega_1\mu(z) \right] \mathrm{d}z} \int_{ x \in \mathcal{X}} f_{X}(x) \mu(x) \mathds{1}_{\{x\in A_L\}}
\end{align}

Let's derive the Gini impurity gain for the ponderate tree:
\begin{align}
    &L(s) = \mathds{P}\left(X^{\star} \in A_L\right)\mathds{P}\left(X^{\star} \in A_R \right) \left[\mathds{E}\left[Y^{\star}|X^{\star} \in A_L\right] - \mathds{E}\left[Y^{\star}|X^{\star} \in A_R\right]\right]^2 \\
    &= \left( \int_{x\in A_{L}} f_{X^{\star}}(x) \mathrm{d}x \right) \left( \int_{x\in A_{R}} f_{X^{\star}}(x) \mathrm{d}x \right) \left[ \int_{x\in A_{L}} f_{X^{\star}}(x) \mu^{\star}(x)\mathrm{d}x - \int_{x\in A_{R}} f_{X^{\star}}(x) \mu^{\star}(x)\mathrm{d}x \right]^2 \\
    &= \left( \left[ \frac{\omega_1-\omega_0}{p(1-p)}\int_{x\in A_{L}} f_{X}(x) \mu(x)\mathrm{d}x \right] + \left[ \frac{\omega_0}{p(1-p)} \int_{x\in A_{L}} f_{X}(x)  \mathrm{d}x \right] \right) \\
    & \qquad \times \left( \left[\frac{\omega_1-\omega_0}{p(1-p)}\int_{x\in A_{R}} f_{X}(x)\mu(x) \mathrm{d}x \right] + \left[ \frac{\omega_0}{p(1-p)} \int_{x\in A_{R}} f_{X}(x) \mathrm{d}x  \right]  \right) \\
    & \qquad \times \Biggl[ \frac{\omega_1}{\int_{A_L} f_{X}(z) \left[\omega_0(1-\mu(z)) + \omega_1\mu(z) \right] \mathrm{d}z}  \int_{x\in A_{L}} f_{X}(x) \mu(x)\mathrm{d}x \\
    & \qquad \qquad - \frac{\omega_1}{\int_{A_R} f_{X}(z) \left[\omega_0(1-\mu(z)) + \omega_1\mu(z) \right] \mathrm{d}z}  \int_{x\in A_{R}} f_{X}(x) \mu(x)\mathrm{d}x \Biggl]^2 \\
    &= \left( \left[ \frac{\omega_1-\omega_0}{p(1-p)}\int_{x\in A_{L}} f_{X}(x) \mu(x)\mathrm{d}x \right] + \left[ \frac{\omega_0}{p(1-p)} \int_{x\in A_{L}} f_{X}(x)  \mathrm{d}x \right] \right) \\
    & \qquad \times \left( \left[\frac{\omega_1-\omega_0}{p(1-p)}\int_{x\in A_{R}} f_{X}(x)\mu(x) \mathrm{d}x \right] + \left[ \frac{\omega_0}{p(1-p)} \int_{x\in A_{R}} f_{X}(x) \mathrm{d}x  \right]  \right) \\
    & \qquad \times \Biggl[ \frac{\omega_1}{\omega_0 \int_{z \in A_L} f_X(z) \mathrm{d}z + (\omega_1 - \omega_0)\int_{z \in A_L} f_X(z) \mu(z) \mathrm{d}z}  \int_{x\in A_{L}} f_{X}(x) \mu(x)\mathrm{d}x \\
    & \qquad \qquad - \frac{\omega_1}{\omega_0 \int_{z \in A_R} f_X(z) \mathrm{d}z + (\omega_1 - \omega_0)\int_{z \in A_R} f_X(z) \mu(z) \mathrm{d}z}  \int_{x\in A_{R}} f_{X}(x) \mu(x)\mathrm{d}x \Biggl]^2.
\end{align}

Let,
\begin{align}
   &I_L = \int_{x \in A_L} f_X(x) \mathrm{d}x && I_R = \int_{x \in A_R} f_X(x) \mathrm{d}x  \\
    &J_L =  \int_{x \in A_L} f_X(x) \mu(x) \mathrm{d}x 
    &&J_R = \int_{x \in A_R} f_X(x) \mu(x) \mathrm{d}x.
\end{align}

If we suppose $p^{\star} = p + \varepsilon$, then we have
\begin{align}
    & L(s) \\
    &= \left( \left[ \frac{\varepsilon}{p(1-p)}\int_{x\in A_{L}} f_{X}(x) \mu(x)\mathrm{d}x \right] + \left[ \frac{p(1-p-\varepsilon)}{p(1-p)} \int_{x\in A_{L}} f_{X}(x)  \mathrm{d}x \right] \right) \\
    & \qquad \times \left( \left[\frac{\varepsilon}{p(1-p)}\int_{x\in A_{R}} f_{X}(x)\mu(x) \mathrm{d}x \right] + \left[ \frac{p(1-p-\varepsilon)}{p(1-p)} \int_{x\in A_{R}} f_{X}(x) \mathrm{d}x  \right]  \right) \\
    & \qquad \times \Biggl[ \frac{(p+\varepsilon)(1-p)}{p(1-p-\varepsilon) \int_{z \in A_L} f_X(z) \mathrm{d}z + \varepsilon \int_{z \in A_L} f_X(z) \mu(z) \mathrm{d}z}  \int_{x \in A_{L}} f_{X}(x) \mu(x)\mathrm{d}x \\
    & \qquad \qquad - \frac{(p+\varepsilon)(1-p)}{p(1-p-\varepsilon) \int_{z \in A_R} f_X(z) \mathrm{d}z + \varepsilon \int_{z \in A_R} f_X(z) \mu(z) \mathrm{d}z}  \int_{x\in A_{R}} f_{X}(x) \mu(x)\mathrm{d}x \Biggl]^2 \\
    & = \left( \frac{\varepsilon}{p(1-p)} J_L + \frac{p(1-p-\varepsilon)}{p(1-p)} I_L\right) \left( \frac{\varepsilon}{p(1-p)} J_R + \frac{p(1-p-\varepsilon)}{p(1-p)} I_R\right) \\
    & \qquad \times \left[(p+\varepsilon)(1-p) \left( \frac{1}{p(1-p-\varepsilon)I_L + \varepsilon J_L} J_L-\frac{1}{p(1-p-\varepsilon)I_R + \varepsilon J_R} J_R \right)\right]^2  \\
    & = \left( \frac{\varepsilon}{p(1-p)} J_L + \left(1 - \frac{\varepsilon}{(1-p)}\right) I_L\right) \left( \frac{\varepsilon}{p(1-p)} J_R + \left(1 - \frac{\varepsilon}{(1-p)}\right) I_R\right) \\
    & \qquad \times \left[(p+\varepsilon)(1-p) \left( \frac{J_L}{p(1-p-\varepsilon)I_L + \varepsilon J_L} -\frac{J_R}{p(1-p-\varepsilon)I_R + \varepsilon J_R}  \right)\right]^2 
\end{align}
\begin{align}
    & = \left(I_L +\frac{\varepsilon}{p(1-p)} \left(J_L - I_L \right)\right) \left(I_R +\frac{\varepsilon}{p(1-p)} \left(J_R - I_R \right)\right) \\
    & \qquad \times \left[(p+\varepsilon)(1-p) \left( \frac{J_L}{p(1-p)I_L + \varepsilon (J_L-pI_L)} -\frac{J_R}{p(1-p)I_R + \varepsilon (J_R-pI_R)}  \right)\right]^2 \\
    & = \left(I_L I_R +\frac{\varepsilon}{p(1-p)} \left( I_R(J_L - I_L) + I_L(J_R - I_R) + \frac{\varepsilon}{p(1-p)}(J_L - I_L) (J_R - I_R) \right)\right) \\
    & \qquad \times \left[\frac{(p+\varepsilon)(1-p)}{p(1-p)} \left( \frac{J_L}{I_L + \varepsilon \frac{(J_L-pI_L)}{p(1-p)}} -\frac{J_R}{I_R + \varepsilon \frac{(J_R-pI_R)}{p(1-p)}}  \right)\right]^2 \\
    & = \left(I_L I_R +\frac{\varepsilon}{p(1-p)} \left( I_R(J_L - I_L) + I_L(J_R - I_R) + \frac{\varepsilon}{p(1-p)}(J_L - I_L) (J_R - I_R) \right)\right) \\
    & \qquad \times \left[\left(1 +\frac{\varepsilon}{p}\right) \left( \frac{J_L}{I_L} \frac{1}{1 + \varepsilon \frac{(J_L-pI_L)}{I_L p(1-p)}} -\frac{J_R}{I_R} \frac{1}{1 + \varepsilon \frac{(J_R-pI_R)}{I_R p(1-p)}}  \right)\right]^2. 
\end{align}

Finally:
\begin{align}
    &L(s) \\
    & = I_L I_R G \\
    & \qquad +\varepsilon \left[ \frac{1}{p(1-p)} \left( I_R(J_L - I_L) + I_L(J_R - I_R)  \right) + I_L I_R \frac{2}{p} \right]G \\
    & \qquad + \varepsilon^2 \Biggl[ \frac{1}{p^2(1-p)^2} (J_L - I_L) (J_R - I_R) \left(1+ \frac{2\varepsilon}{p} + \frac{\varepsilon^2}{p^2}\right) \\
    & \qquad \qquad + \frac{1}{p(1-p)} \left( I_R(J_L - I_L) + I_L(J_R - I_R) \left(\frac{2}{p} + \frac{\varepsilon}{p^2}\right) \right) + I_LI_R \frac{1}{p^2}\Biggl]G, 
\end{align}

where,
\begin{align}
    G = \left( \frac{J_L}{I_L} \frac{1}{1 + \varepsilon \frac{(J_L-pI_L)}{I_L p(1-p)}} -\frac{J_R}{I_R} \frac{1}{1 + \varepsilon \frac{(J_R-pI_R)}{I_R p(1-p)}}  \right)^2
\end{align}

For $\varepsilon$ small enough we have,
\begin{align}
    \frac{1}{1 +\varepsilon \frac{ J_L - pI_L}{I_L p (1-p)} } \approx 1 - \varepsilon \frac{ J_L - pI_L}{I_L p (1-p)} + \mathcal{O}(\varepsilon^2), \\
    \frac{1}{1 +\varepsilon \frac{ J_R - pI_R}{I_R p (1-p)} } \approx 1 - \varepsilon \frac{ J_R - pI_R}{I_R p (1-p)} + \mathcal{O}(\varepsilon^2).
\end{align}

Thus,
\begin{align}
    &  \frac{J_L}{I_L}\frac{1}{1 +\varepsilon \frac{ J_L - pI_L}{I_L p (1-p)} } -\frac{J_R}{I_R} \frac{1}{1 + \varepsilon \frac{ J_R - pI_R }{I_R p (1-p)} }    \\
    &  \approx \frac{J_L}{I_L} \left(1 - \varepsilon \frac{ J_L - pI_L}{I_L p (1-p)}\right) -\frac{J_R}{I_R} \left(1 - \varepsilon \frac{ J_R - pI_R}{I_R p (1-p)}\right)    + \mathcal{O}(\varepsilon^2) \\
    & =  \frac{J_L}{I_L} - \frac{J_R}{I_R} - \varepsilon \left[ \frac{J_L}{I_L}\frac{ J_L - pI_L}{I_L p (1-p)} + \frac{J_R}{I_R}\frac{ J_R - pI_R}{I_R p (1-p)} \right]    + \mathcal{O}(\varepsilon^2) \\
    &= G_0 + \varepsilon G_1. 
\end{align}

Thus,
\begin{align}
    &L(s) \\
    & \approx  I_L I_R \left( G_0 + \varepsilon G_1 \right)^2 \\
    & \qquad +\varepsilon \left[ \frac{1}{p(1-p)} \left( I_R(J_L - I_L) + I_L(J_R - I_R)  \right) + I_L I_R \frac{2}{p} \right]\left( G_0 + \varepsilon G_1 \right)^2 \\
    & \qquad + \varepsilon^2 \Biggl[ \frac{1}{p^2(1-p)^2} (J_L - I_L) (J_R - I_R) \left(1+ \frac{2\varepsilon}{p} + \frac{\varepsilon^2}{p^2}\right) \\
    & \qquad \qquad + \frac{1}{p(1-p)} \left( I_R(J_L - I_L) + I_L(J_R - I_R) \left(\frac{2}{p} + \frac{\varepsilon}{p^2}\right) \right) + I_LI_R \frac{1}{p^2}\Biggl]\left( G_0 + \varepsilon G_1 \right)^2 \\
    & \qquad + \mathcal{O}(\varepsilon^2)
\end{align}
\begin{align}
    & =  I_L I_RG_0^2 \\
    & \qquad + \varepsilon 2G_0G_1 I_L I_R + \varepsilon \left[ \frac{1}{p(1-p)} \left( I_R(J_L - I_L) + I_L(J_R - I_R)  \right) + I_L I_R \frac{2}{p} \right]\left( G_0 + \varepsilon G_1 \right)^2  \\
    & \qquad +\varepsilon^2 G_1^2 I_L I_R + \varepsilon^2 \Biggl[ \frac{1}{p^2(1-p)^2} (J_L - I_L) (J_R - I_R) \left(1+ \frac{2\varepsilon}{p} + \frac{\varepsilon^2}{p^2}\right) \\
    & \qquad \qquad + \frac{1}{p(1-p)} \left( I_R(J_L - I_L) + I_L(J_R - I_R) \left(\frac{2}{p} + \frac{\varepsilon}{p^2}\right) \right) + I_LI_R \frac{1}{p^2}\Biggl]\left( G_0 + \varepsilon G_1 \right)^2 \\
    & \qquad + \mathcal{O}(\varepsilon^2)
\end{align}
\begin{align}
    & =  L_0(s) \\
    & + \varepsilon \left[2G_0G_1 I_L I_R +  G_0^2 \left(\frac{1}{p(1-p)} \left( I_R(J_L - I_L) + I_L(J_R - I_R)  \right) + I_L I_R \frac{2}{p} \right) \right]  \\
    &   + \varepsilon^2 \Biggl[ G_1^2 I_L I_R \\
    & \qquad + \left(\frac{1}{p(1-p)} \left( I_R(J_L - I_L) + I_L(J_R - I_R)  \right) + I_L I_R \frac{2}{p} \right) \left( 2\varepsilon G_0G_1 + \varepsilon^2 G_1^2 \right) \\
    &  \qquad + \Bigg( \frac{1}{p^2(1-p)^2} (J_L - I_L) (J_R - I_R) \left(1+ \frac{2\varepsilon}{p} + \frac{\varepsilon^2}{p^2}\right) \\
    &  \qquad \qquad  + \frac{1}{p(1-p)} \left( I_R(J_L - I_L) + I_L(J_R - I_R) \left(\frac{2}{p} + \frac{\varepsilon}{p^2}\right) \right) + I_LI_R \frac{1}{p^2}\Bigg)\left( G_0 + \varepsilon G_1 \right)^2 \Biggl] \\
    &  + \mathcal{O}.(\varepsilon^2).
\end{align}

Then,
\begin{align}
    &2G_0G_1 I_L I_R +  G_0^2 \left(\frac{1}{p(1-p)} \left( I_R(J_L - I_L) + I_L(J_R - I_R)  \right) + I_L I_R \frac{2}{p} \right) \\
    &=\frac{2 G_1}{G_0}L_0(s) +  G_0^2\left(\frac{1}{p(1-p)} \left( I_RJ_L - I_RI_L + I_LJ_R - I_LI_R  \right) + I_L I_R \frac{2}{p} \right)  && (\text{suppose } G_0 \not = 0) \\
    &= \frac{2 G_1}{G_0}L_0(s) + L_0 \left(\frac{2}{p} -  \frac{2}{p(1-p)}\right) + \frac{G_02}{p(1-p)} (I_RJ_L + I_LJ_R) \\
    &= 2 L_0 \left(\frac{G_1}{G_0} - \frac{1}{1-p} \right) + \frac{G_0^2}{p(1-p)}(I_RJ_L + I_LJ_R) \\
    &=L_1(s).
\end{align}

Thus, we have,
\begin{align}
    &L(s) \approx L_0(s) + \varepsilon L_1(s) + \varepsilon^2 L_2(t) + \mathcal{O}(\varepsilon^2).
\end{align}

\end{proof}

\section{Linear Regression Function: Impurity Gain}

\subsection{Default stump}

\begin{proof}[Another Proof of result from  \cite{ishwaran2015effect} ]
\label{proof:lemma:IG_UNweighted_optimal_treshold}
We denote by $L_0(s)$ the impurity gain the decision tree without applying weighting (None). We want to maximize this impurity gain and derive the optimal split value of our stump with respect to $\Myt$.

\begin{align}
&L(\Myt) \\ &= \mathds{P}(Y=1)(1-\mathds{P}(Y=1))\\  & \qquad - \Myt\mathds{P}_{A_L}(Y=1) (1-\mathds{P}_{A_L}(Y=1)) - (1-\Myt)\mathds{P}_{A_R}(Y=1) (1-\mathds{P}_{A_R}(Y=1)) \\
&= \left(\frac{\alpha}{2}+ \beta\right)\left(1 - \frac{\alpha}{2}-\beta\right) - \Myt\left[\frac{\alpha}{2}t+ \beta\right] \left[1- \frac{\alpha}{2}t- \beta\right] \\
& \qquad - (1-\Myt)\left[\frac{\alpha}{2}(1+\Myt)+ \beta\right] \left[1 - \frac{\alpha}{2}(1+\Myt)- \beta\right] \nonumber \\
&= \frac{\alpha}{2} - \frac{\alpha^2}{4} - \frac{\alpha\beta}{2} + \beta - \frac{\alpha \beta}{2} - \beta^2 -\Myt\left[ \frac{\alpha}{2} \Myt - \frac{\alpha^2}{4}\Myt^2  - \frac{\alpha \beta}{2}\Myt + \beta - \frac{\alpha \beta}{2}\Myt -\beta^2 \right] \\
   & \nonumber \qquad -(1-\Myt) \left[ \frac{\alpha}{2}(1+\Myt) - \left[\frac{\alpha}{2}(1+\Myt)\right]^2 - \frac{\alpha \beta}{2}(1+\Myt) + \beta - \frac{\alpha \beta}{2}(1+\Myt) - \beta^2 \right]   \\
&= \frac{\alpha}{2} - \frac{\alpha^2}{4} - \frac{\alpha\beta}{2} + \beta - \frac{\alpha \beta}{2} - \beta^2 - \frac{\alpha}{2} t^2 + \frac{\alpha^2}{4}\Myt^3  + \frac{\alpha \beta}{2}\Myt^2 - \beta \Myt + \frac{\alpha \beta}{2}t^2 +\beta^2\Myt  \\
    & \nonumber \qquad - \frac{\alpha}{2}(1+\Myt) + \left[\frac{\alpha}{2}(1+\Myt)\right]^2 + \frac{\alpha \beta}{2}(1+\Myt) - \beta + \frac{\alpha \beta}{2}(1+\Myt) + \beta^2    \\
    & \nonumber \qquad  \frac{\alpha}{2}(1+\Myt)\Myt - \left[\frac{\alpha}{2}(1+\Myt)\right]^2\Myt - \frac{\alpha \beta}{2}(1+\Myt)\Myt + \beta \Myt - \frac{\alpha \beta}{2}(1+\Myt)\Myt - \beta^2\Myt    \\
&=  -\frac{\alpha^2}{4} +\frac{\alpha^2}{4} \Myt^3+ \left[\frac{\alpha}{2}(1+t)\right]^2 + \frac{\alpha \beta}{2}\Myt + \frac{\alpha \beta}{2}t - \left[\frac{\alpha}{2}(1+\Myt)\right]^2\Myt - \frac{\alpha \beta}{2}t -\frac{\alpha \beta }{2} \Myt \\
&=  -\frac{\alpha^2}{4} +\frac{\alpha^2}{4} \Myt^3+ \left[\frac{\alpha}{2}(1+t)\right]^2  - \left[\frac{\alpha}{2}(1+\Myt)\right]^2\Myt  \\
&= -\frac{\alpha^2}{4} +\frac{\alpha^2}{4} \Myt^3+ \frac{\alpha^2}{4} + \frac{\alpha^2}{2}\Myt + \frac{\alpha^2}{4} \Myt^2 - \frac{\alpha^2}{4}\Myt - \frac{\alpha^2}{2}\Myt^2 - \frac{\alpha^2}{4} \Myt^3\\
&=  \frac{\alpha^2}{2}\Myt + \frac{\alpha^2}{4} \Myt^2 - \frac{\alpha^2}{4}\Myt - \frac{\alpha^2}{2}\Myt^2\\
&= \Myt^2 \frac{\alpha^2 - 2 \alpha^2}{4} + \Myt \frac{2\alpha^2 -  \alpha^2}{4} \\
&= -\Myt^2 \frac{\alpha^2}{4} + \Myt \frac{\alpha^2}{4}.
\end{align}
$L_0$ is maximized for $\Myt= \frac{- \frac{\alpha^2}{4}}{2 \frac{-\alpha^2}{4}} = \frac{1}{2}$.
\end{proof}

\subsection{Ponderate stump}
\begin{proof}[Proof of \Cref{lemma:IG_weighted_optimal_treshold}]
\label{proof:lemma:IG_weighted_optimal_treshold}
We evaluate how CART optimizes its root split under \cref{assump-1}. 
We have:
\begin{align}
f_{X^*}(x) &= \frac{f_X(x)[\omega_0(1-\mu(x)) + \omega_1\mu(x)]}{p(1-p)}\\
&= \frac{\mathds{1}_{x \in [0,1]}[\omega_0(1-\mu(x)) + \omega_1\mu(x)]}{p(1-p)}\\
&= \frac{\mathds{1}_{x \in [0,1]}[\omega_0 + \mu(x)(\omega_1 - \omega_0)]}{p(1-p)}\\
&= \frac{\mathds{1}_{x \in [0,1]}[p(1-p^{\star}) + \mu(x)(p^{\star} - p)]}{p(1-p)}\\
&=\mathds{1}_{x \in [0,1]}  \left(\frac{p(1-p^{\star})}{p(1-p)} + \frac{\mu(x)(p^{\star} - p)}{p(1-p)} \right)\\
&=\mathds{1}_{x \in [0,1]}  \left(\frac{p(1-p^{\star}) + p^2 -p^2}{p(1-p)} + \frac{\mu(x)(p^{\star} - p)}{p(1-p)} \right)\\
&=\mathds{1}_{x \in [0,1]}  \left(\frac{p(1-p) - p(p^{\star} -p)}{p(1-p)} + \frac{\mu(x)(p^{\star} - p)}{p(1-p)} \right)\\
&=\mathds{1}_{x \in [0,1]} \left(1 + \frac{p^{\star} - p}{p(1-p)} \left( \mu(x) - p \right)\right) \\
&= \mathds{1}_{x \in [0,1]}\left(1 + 2\lambda \left( \mu(x) - p \right)\right), \quad \text{where } \lambda = \frac{p^* - p}{2p(1-p)}.
\end{align}

Let $A_L = \{X_j < \Myt\}$ and $A_R = \{X_j \ge \Myt\}$. Integrating $f_{X^*}(x)$ over these partitions yields the weighted mass components:
\begin{equation}
\mathbb{P}(X^* \in A_L) = \Myt \left[ 1 - \lambda \alpha (1-\Myt) \right], \qquad \mathbb{P}(X^* \in A_R) = (1-\Myt) \left[ 1 + \lambda \alpha \Myt \right].
\end{equation}
The unweighted partition expectations are $\bar{\mu}(A_L) = p - \frac{\alpha(1-\Myt)}{2}$ and $\bar{\mu}(A_R) = p + \frac{\alpha \Myt}{2}$. \Cref{lemma:main-mustar} gives $\hat{\mu}^*(A_L) = g(\bar{\mu}(A_L))$ and $\hat{\mu}^*(A_R) = g(\bar{\mu}(A_R))$ (for fixed leaf $A_L$ and $A_R$), yielding the leaf score gap:
\begin{equation}
\hat{\mu}^*(A_R) - \hat{\mu}^*(A_L) = \frac{\omega_1 \left( \frac{1}{2} - \lambda p \right) \alpha}{p(1-p) \left[ 1 - \lambda \alpha (1-\Myt) \right] \left[ 1 + \lambda \alpha \Myt \right]}.
\end{equation}
Substituting $\mathbb{P}(X^* \in A_L)$, $\mathbb{P}(X^* \in A_R)$, and $[\hat{\mu}^*(A_R) - \hat{\mu}^*(A_L)]^2$ into the Gini impurity gain equation $L(\Myt) = \mathbb{P}(X^* \in A_L) \mathbb{P}(X^* \in A_R) [\hat{\mu}^*(A_R) - \hat{\mu}^*(A_L)]^2$,  we obtain for feature $j$ over the full unit domain:
\begin{equation}
\label{eq:collapsed_gain}
L(s) = C \cdot \frac{  t(1-\Myt)}{\left[ 1 - \lambda \alpha (1-\Myt) \right] \cdot \left[ 1 + \lambda \alpha \Myt \right]},
\end{equation}
where $C = \alpha^2\left[ \frac{\omega_1}{p(1-p)} \left( \frac{1}{2} - \lambda p \right) \right]^2$. Let $\kappa = \frac{p^* - p}{2p(1-p)} \alpha$

Next, we differentiate $L(\Myt)$ directly with respect to $\Myt$ using the quotient rule.

Let $L(\Myt) = C  \frac{N(\Myt)}{D(\Myt)}$, where:
\begin{align}
N(t) &= \Myt(1-\Myt) = \Myt - \Myt^2, \label{eq:N_def} \\
D(t) &= \left[ 1 - \kappa(1-\Myt) \right] \left[ 1 + \kappa \Myt \right] = (1 - \kappa + \kappa \Myt)(1 + \kappa \Myt). \label{eq:D_def}
\end{align}

Expanding $D(\Myt)$ explicitly:
\begin{equation}\label{eq:D_expanded}
D(\Myt) = 1 - \kappa + \kappa \Myt + \kappa t - \kappa^2 \Myt + \kappa^2 \Myt^2 = (1 - \kappa) + \kappa(2 - \kappa)\Myt + \kappa^2 t^2.
\end{equation}

Differentiating $N(\Myt)$ and $D(\Myt)$ directly with respect to $t$:
\begin{align}
N'(\Myt) &= 1 - 2\Myt, \label{eq:N_prime} \\
D'(t) &= \kappa(2 - \kappa) + 2\kappa^2 \Myt = \kappa(2 - \kappa + 2\kappa \Myt). \label{eq:D_prime}
\end{align}

By the quotient rule, setting $\frac{dL^*(\Myt)}{d\Myt} = 0$ requires setting the numerator of the derivative to zero:
\begin{equation}\label{eq:quotient_num_zero}
N'(\Myt) D(\Myt) - N(\Myt) D'(\Myt) = 0.
\end{equation}

We evaluate $N'(\Myt) D(\Myt)$ and $N(\Myt) D'(\Myt)$ independently:

%\subparagraph{Evaluation of $N'(\Myt) D(\Myt)$:}
We have,
\begin{align}
N'(\Myt) D(\Myt) &= (1 - 2\Myt) \left[ (1 - \kappa) + \kappa(2 - \kappa)\Myt + \kappa^2 \Myt^2 \right] \nonumber \\
&= (1 - \kappa) + \kappa(2 - \kappa)\Myt + \kappa^2 \Myt^2 - 2\Myt(1 - \kappa) - 2\kappa(2 - \kappa)\Myt^2 - 2\kappa^2 \Myt^3 \nonumber \\
&= (1 - \kappa) + \left[ \kappa(2 - \kappa) - 2(1 - \kappa) \right] \Myt + \left[ \kappa^2 - 2\kappa(2 - \kappa) \right] \Myt^2 - 2\kappa^2 \Myt^3 \nonumber \\
&= (1 - \kappa) + (4\kappa - \kappa^2 - 2)\Myt + (3\kappa^2 - 4\kappa)\Myt^2 - 2\kappa^2 \Myt^3. \label{eq:ND_product}
\end{align}

%\subparagraph{Evaluation of $N(\Myt) D'(\Myt)$:}
Then we have,
\begin{align}
N(\Myt) D'(\Myt) &= (\Myt - \Myt^2) \kappa(2 - \kappa + 2\kappa \Myt) = (\Myt - \Myt^2)(2\kappa - \kappa^2) + (\Myt - \Myt^2)(2\kappa^2 \Myt) \nonumber \\
&= (2\kappa - \kappa^2)\Myt - (2\kappa - \kappa^2)\Myt^2 + 2\kappa^2 \Myt^2 - 2\kappa^2 \Myt^3 \nonumber \\
&= (2\kappa - \kappa^2)\Myt + (3\kappa^2 - 2\kappa)\Myt^2 - 2\kappa^2 \Myt^3. \label{eq:ND_prime_product}
\end{align}

Subtracting Equation~\eqref{eq:ND_prime_product} from Equation~\eqref{eq:ND_product}:
\begin{align}\label{eq:diff_expanded}
N'(\Myt)D(\Myt) - N(\Myt)D'(\Myt) &= (1 - \kappa) + \left[ (4\kappa - \kappa^2 - 2) - (2\kappa - \kappa^2) \right] \Myt \nonumber \\
&\quad + \left[ (3\kappa^2 - 4\kappa) - (3\kappa^2 - 2\kappa) \right] \Myt^2 + \left[ -2\kappa^2 - (-2\kappa^2) \right] \Myt^3 \nonumber \\
&= (1 - \kappa) + (2\kappa - 2)\Myt - 2\kappa \Myt^2.
\end{align}

Notice that the cubic $\Myt^3$ terms cancel out completely! Setting Equation~\eqref{eq:diff_expanded} to zero and multiplying by $-1$:
\begin{equation}\label{eq:quadratic_t_final}
2\kappa \Myt^2 + 2(1 - \kappa)\Myt - (1 - \kappa) = 0.
\end{equation}

Applying the standard quadratic formula $A \Myt^2 + B \Myt + C = 0$ with coefficients $A = 2\kappa$, $B = 2(1-\kappa)$, and $C = -(1-\kappa)$:
\begin{equation}
\Myt = \frac{-B \pm \sqrt{B^2 - 4AC}}{2A}.
\end{equation}

Evaluating the discriminant $\Delta = B^2 - 4AC$:
\begin{align}
\Delta &= [2(1-\kappa)]^2 - 4(2\kappa)[-(1-\kappa)] \nonumber \\
&= 4(1-\kappa)^2 + 8\kappa(1-\kappa) \nonumber \\
&= 4(1-\kappa) \left[ (1-\kappa) + 2\kappa \right] = 4(1-\kappa)(1+\kappa) = 4(1 - \kappa^2).
\end{align}

Substituting $\sqrt{\Delta} = 2\sqrt{1 - \kappa^2}$ back into the quadratic formula:
\begin{equation}
t = \frac{-2(1-\kappa) \pm 2\sqrt{1 - \kappa^2}}{2(2\kappa)} = \frac{\kappa - 1 \pm \sqrt{1 - \kappa^2}}{2\kappa}.
\end{equation}

To determine the root belonging to $\Myt_{p}^{\star} \in (0,1)$, we evaluate the limit as $\kappa \to 0$ (unweighted baseline):
\begin{itemize}
    \item Selecting the positive root $+\sqrt{1 - \kappa^2}$:
    \begin{align}
    \lim_{\kappa \to 0} \frac{\kappa - 1 + \sqrt{1 - \kappa^2}}{2\kappa} &= \lim_{\kappa \to 0} \left( \frac{1}{2} + \frac{\sqrt{1 - \kappa^2} - 1}{2\kappa} \right) \nonumber \\
    &\stackrel{\text{L'H\^opital}}{=} \frac{1}{2} + \lim_{\kappa \to 0} \frac{\frac{-2\kappa}{2\sqrt{1-\kappa^2}}}{2} = \frac{1}{2} - 0 = \frac{1}{2},
    \end{align}
    which correctly recovers the unweighted midpoint split $t_p^{\star} = \frac{1}{2}$.
    \item Selecting the negative root $-\sqrt{1 - \kappa^2}$:
    \begin{equation}
    \lim_{\kappa \to 0} \frac{\kappa - 1 - \sqrt{1 - \kappa^2}}{2\kappa} = \lim_{\kappa \to 0} \frac{-2}{2\kappa} = \pm \infty,
    \end{equation}
    which diverges outside the domain $(0,1)$.
\end{itemize}

Therefore, the unique plausible split value is:
\begin{equation}
\Myt_{p}^{\star} = \frac{\kappa - 1 + \sqrt{1 - \kappa^2}}{2\kappa} = \frac{1}{2} + \frac{\sqrt{1 - \kappa^2} - 1}{2\kappa}.
\end{equation}
\end{proof}

\section{Univariate Linear Regression Function: Proofs of Derivation Metrics}

\subsection{Default stump}
\label{proof:default-stump}
\begin{proof}[Proof of \Cref{lemma:default-stump}]
Grant \Cref{assump-infinite-sample} and \Cref{assump-1}.
Let $\Myt$ be the split value of our stump $\hat{m}_{k}$. Let $t \in [0,1]$, be the threshold for our stump to predict $\hat{Y}=1$ for each cell. More precisely,
\begin{align}
    \mathds{E}[Y|X \in A_{R}]  \geq t \Longrightarrow \hat{Y} = 1 \\
    \mathds{E}[Y|X \in A_{L}] \geq t \Longrightarrow \hat{Y} = 1 
\end{align}

We have,
\begin{align}
    &\mathds{E}[Y|X \in A_{R}] \\&= \mathds{E}[\mathds{E}[Y|X]|X \in A_{R}] && (\text{Tower property}) \\
    & = \mathds{E}[P(Y=1|X)| X \in A_{R}] \\
    & = \mathds{E}[\alpha X + \beta| X \in A_{R}] \\
    & = \alpha \mathds{E}[ X| X \in A_{R}]+ \beta   \\
    &= \alpha \left(\frac{1+\Myt}{2}\right) + \beta. && ( \sim \mathcal{U}([\Myt,1]))
\end{align}

Similarly,
\begin{align}
    &\mathds{E}[Y|X \in A_{L}] \\&= \mathds{E}[\mathds{E}[Y|X]|X \in A_{L}] && (\text{Tower property}) \\
    & = \mathds{E}[P(Y=1|X)| X \in A_{L}] \\
    & = \mathds{E}[\alpha X + \beta| X \in A_{L}] \\
    & = \alpha \mathds{E}[ X| X \in A_{L}]+ \beta   \\
    &= \frac{\alpha \Myt}{2} + \beta. && ( \sim \mathcal{U}([0,\Myt]))
\end{align}

Then,
\begin{align}
    x \leq \Myt \Longrightarrow x \in A_{L} \Longrightarrow \hat{m}_{k}(x)=1 \; \text{if} \; \frac{\alpha \Myt}{2} + \beta \geq t.
\end{align}
On the other hand,
\begin{align}
    x > \Myt \Longrightarrow x \in A_{R} \Longrightarrow \hat{m}_{k}(x)=1 \; \text{if} \; \alpha \left(\frac{1+\Myt}{2}\right) + \beta \geq t.
\end{align}

Thus,
\begin{align}
    \hat{m}_{k}(x) = \left\{ 
        \begin{array}{ll}
            1 & \text{if} \; x \leq \Myt \; \text{and} \; \frac{\alpha \Myt}{2} + \beta \geq t \\
            1 & \text{if} \; x > \Myt \; \text{and} \; \alpha \left(\frac{1+\Myt}{2}\right) + \beta \geq t \; \\
            0 & \text{otherwise}.
        \end{array}
    \right.
\end{align}

\end{proof}

\subsection{Default metrics}
\label{subsec-proofs-metrics-default-stump}
\begin{proof}[Proof of \Cref{prop:main1-metrics}]
\label{proof:main1-metrics}
Similarly to \Cref{proof:default-stump} we have,
\begin{align}
    \hat{m}_{k}(x) = \left\{ 
        \begin{array}{ll}
            1 & \text{if} \; t \in [0, \frac{\alpha \Myt}{2} + \beta] \\
            \mathds{1}_{\{x \geq \Myt\}} & \text{if} \; t \in [ \frac{\alpha \Myt}{2} + \beta, \frac{\alpha (\Myt+1)}{2} + \beta] \\
            0 & \text{if} \; t \in [ \frac{\alpha (\Myt+1)}{2} + \beta, 1].
        \end{array}
    \right.
\end{align}

\subsubsection{Precision}

\begin{definition}
We define the precision as below,
\begin{align}
     Precision = \frac{TP}{TP + FP}.
\end{align}
\end{definition}

We aim at computing the precision in our framework. We have,
\begin{align}
    Precision &= \frac{\mathds{P}(Y=1, \hat{Y}=1)}{\mathds{P}(Y=1, \hat{Y}=1) + \mathds{P}(Y=0, \hat{Y}=1)}.
\end{align}

We have, 
\begin{align}
    \mathds{P}(Y=1, \hat{Y}=1) &= \mathds{E}\left[\mathds{1}_{\{Y=1\}}\mathds{1}_{\{\hat{m}_{k}(X)=1\}}\right] \\
    &=\mathds{E}\left[\mathds{E}\left[\mathds{1}_{\{Y=1\}}\mathds{1}_{\{\hat{m}_{k}(X)=1\}}|X\right] \right]  && (\text{Tower property})\\
    &=\mathds{E}\left[\mathds{1}_{\{\hat{m}_{k}(X)=1\}}\mathds{E}\left[\mathds{1}_{\{Y=1\}}|X\right] \right] \\
    &=\mathds{E}\left[\mathds{1}_{\{\hat{m}_{k}(X)=1\}}\mathds{P}\left(Y=1|X\right) \right] \\
    &=\mathds{E}\left[\mathds{1}_{\{\hat{m}_{k}(X)=1\}}\left(\alpha X + \beta\right) \right] \\
    &=\alpha\mathds{E}\left[\mathds{1}_{\{\hat{m}_{k}(X)=1\}} X  \right] + \beta \mathds{P}(\hat{m}_{k}(x)=1).
\end{align}

Similarly, 
\begin{align}
    \mathds{P}(Y=0, \hat{Y}=1) &= \mathds{E}\left[\mathds{1}_{\{Y=0\}}\mathds{1}_{\{\hat{m}_{k}(X)=1\}}\right] \\
    &=\mathds{E}\left[\mathds{E}\left[\mathds{1}_{\{Y=0\}}\mathds{1}_{\{\hat{m}_{k}(X)=1\}}|X\right] \right]  && (\text{Tower property})\\
    &=\mathds{E}\left[\mathds{1}_{\{\hat{m}_{k}(X)=1\}}\mathds{E}\left[\mathds{1}_{\{Y=0\}}|X\right] \right] \\
    &=\mathds{E}\left[\mathds{1}_{\{\hat{m}_{k}(X)=1\}}\mathds{P}\left(Y=0|X\right) \right] \\
    &=\mathds{E}\left[\mathds{1}_{\{\hat{m}_{k}(X)=1\}}\left(1-\mathds{P}\left(Y=1|X\right)\right) \right] \\
    &=\mathds{E}\left[\mathds{1}_{\{\hat{m}_{k}(X)=1\}}\left(1 -\alpha X - \beta\right) \right] \\
    &=\mathds{P}(\hat{m}_{k}(X)=1) - \alpha\mathds{E}\left[\mathds{1}_{\{\hat{m}_{k}(X)=1\}} X  \right] - \beta \mathds{P}(\hat{m}_{k}(X)=1).
\end{align}

\paragraph{Case :} $t \in [0, \frac{\alpha \Myt}{2} + \beta]$

Then, $\hat{m}_{k}(X)= 1$ for all $x \in [0,1]$. Thus,
\begin{align}
    \mathds{P}(Y=1, \hat{Y}=1) = \alpha / 2 + \beta \\ 
    \mathds{P}(Y=0, \hat{Y}=1) = 1 - \alpha / 2 - \beta.
\end{align}
Therefore,
\begin{align}
    Precision = \frac{\alpha / 2 + \beta}{\alpha / 2 + \beta + 1 - \alpha / 2 - \beta} = \alpha / 2 + \beta.
\end{align}

\paragraph{Case :} $t \in [ \frac{\alpha \Myt}{2} + \beta, \frac{\alpha (\Myt+1)}{2} + \beta]$

Then, $\hat{m}_{k}(X)=\mathds{1}_{\{X\geq t\}}$  for all $x \in [0,1]$.
Thus,
\begin{align}
     \mathds{P}(Y=1, \hat{Y}=1) &= \alpha\mathds{E}\left[\mathds{1}_{\{\hat{m}_{k}(X)=1\}} X  \right] + \beta \mathds{P}(\hat{m}_{k}(X)=1) \\
     &= \alpha\mathds{E}\left[\mathds{1}_{\{X\geq \Myt\}} X  \right] + \beta \mathds{P}(X\geq \Myt) \\
     &=\frac{\alpha}{2}(1-\Myt^{2}) + \beta (1-\Myt).  \; \;   \left(\mathds{E}\left[\mathds{1}_{\{X \geq t\}} X  \right] = \int_{\Myt}^{1}x\mathrm{d}x= \frac{1}{2}(1-\Myt^{2}) \right)
\end{align}

Besides,
\begin{align}
    &\mathds{P}(Y=0, \hat{Y}=1) \\
    &=\mathds{P}(\hat{m}_{k}(X)=1) - \alpha\mathds{E}\left[\mathds{1}_{\{\hat{m}_{k}(X)=1\}} X  \right] - \beta \mathds{P}(\hat{m}_{k}(X)=1) \\
    &=\mathds{P}(X \geq t) - \alpha\mathds{E}\left[\mathds{1}_{\{X \geq \Myt\}} X  \right] - \beta \mathds{P}(X \geq \Myt) \\
    &= (1-\Myt) -\alpha\frac{1}{2}(1-\Myt^{2}) - \beta (1-\Myt)  \; \;  \left(\mathds{E}\left[\mathds{1}_{\{X \leq \Myt\}} X  \right] = \int_{\Myt}^{1}x\mathrm{d}x= \frac{1}{2}(1-\Myt^{2}) \right) \\
    &= \frac{\alpha}{2}\Myt^{2} + \beta \Myt + 1-\Myt -\beta - \frac{\alpha}{2}.
\end{align}

Therefore,
\begin{align}
    Precision &= \frac{\frac{\alpha}{2}(1-\Myt^{2}) + \beta (1-\Myt)}{\frac{\alpha}{2}(1-\Myt^{2}) + \beta (1-\Myt) + \frac{\alpha}{2}\Myt^{2} + \beta \Myt + 1-\Myt -\beta - \frac{\alpha}{2} } \\
    &= \frac{\frac{\alpha}{2}(1-t^{2}) + \beta (1-\Myt)}{ 1-\Myt } \\
    &= \frac{\alpha}{2} (1+\Myt) + \beta.
\end{align}

\paragraph{Case :} $t \in [ \frac{\alpha (\Myt+1)}{2} + \beta, 1]$
Then, $\hat{m}_{k}(X)= 0$ for all $x \in [0,1]$. Thus,
\begin{align}
    \mathds{P}(Y=1, \hat{Y}=1) = 0 \\ 
    \mathds{P}(Y=0, \hat{Y}=1) = 0.
\end{align}
Therefore,
\begin{align}
    Precision = \frac{0}{0} = ?.
\end{align}

Finally,
\begin{align}
    Precision = \left\{ 
        \begin{array}{ll}
             \alpha/2 + \beta& \text{if} \; t \in [0, \frac{\alpha \Myt}{2} + \beta] \\
            \frac{\alpha}{2} (1+\Myt) + \beta & \text{if} \; t \in [ \frac{\alpha \Myt}{2} + \beta, \frac{\alpha (\Myt+1)}{2} + \beta] \\
            Undefined  & \text{if} \; t \in [ \frac{\alpha (\Myt+1)}{2} + \beta, 1].
        \end{array}
    \right.
\end{align}

\subsubsection{Recall}
\begin{definition}
We define the Recall as below,
\begin{align}
     Recall = \frac{TP}{TP + FN}.
\end{align}
\end{definition}

We aim at computing the precision in our framework. We have,
\begin{align}
    Recall &= \frac{\mathds{P}(Y=1, \hat{Y}=1)}{\mathds{P}(Y=1, \hat{Y}=1) + \mathds{P}(Y=1, \hat{Y}=0)}.
\end{align}

We have, 
\begin{align}
    \mathds{P}(Y=1, \hat{Y}=0) &= \mathds{E}\left[\mathds{1}_{\{Y=1\}}\mathds{1}_{\{\hat{m}_{k}(X)=0\}}\right] \\
    &=\mathds{E}\left[\mathds{E}\left[\mathds{1}_{\{Y=1\}}\mathds{1}_{\{\hat{m}_{k}(X)=0\}}|X\right] \right]  && (\text{Tower property})\\
    &=\mathds{E}\left[\mathds{1}_{\{\hat{m}_{k}(X)=0\}}\mathds{E}\left[\mathds{1}_{\{Y=1\}}|X\right] \right] \\
    &=\mathds{E}\left[\mathds{1}_{\{\hat{m}_{k}(X)=0\}}\mathds{P}\left(Y=1|X\right) \right] \\
    &=\mathds{E}\left[\mathds{1}_{\{\hat{m}_{k}(X)=0\}}\left(\alpha X + \beta\right) \right] \\
    &=\alpha\mathds{E}\left[\mathds{1}_{\{\hat{m}_{k}(X)=0\}} X  \right] + \beta \mathds{P}(\hat{m}_{k}(X)=0).
\end{align}

\paragraph{Case :} $t \in [0, \frac{\alpha \Myt}{2} + \beta]$
Then, $\hat{m}_{k}(X)= 1$ for all $X \in [0,1]$. Thus,
\begin{align}
    \mathds{P}(Y=1, \hat{Y}=1)& = \alpha/2 + \beta \\ 
    \mathds{P}(Y=1, \hat{Y}=0) &= \alpha\mathds{E}\left[0* X  \right] + \beta *0 = 0.
\end{align}
Therefore,
\begin{align}
    Recall = \frac{\alpha/2 + \beta}{\alpha/2 + \beta} = 1.
\end{align}

\paragraph{Case :} $t \in [ \frac{\alpha \Myt}{2} + \beta, \frac{\alpha (\Myt+1)}{2} + \beta]$

Then, $\hat{m}_{k}(X)=\mathds{1}_{\{X\geq \Myt\}}$  for all $X \in [0,1]$.
\begin{align}
    \mathds{P}(Y=1, \hat{Y}=1) 
    &= \alpha\mathds{E}\left[\mathds{1}_{\{X\geq \Myt\}} X  \right] + \beta \mathds{P}(\hat{m}_{k}(X)=1)
    = \frac{\alpha}{2}(1-\Myt^{2}) + \beta (1-\Myt)\\ 
    \mathds{P}(Y=1, \hat{Y}=0) &= \alpha\mathds{E}\left[\mathds{1}_{\{X < \Myt\}} X  \right] + \beta \mathds{P}(\hat{m}_{k}(X)=0) =  \frac{\alpha}{2}\Myt^{2} + \beta t.
\end{align}
Therefore,
\begin{align}
    Recall &= \frac{\frac{\alpha}{2}(1-\Myt^{2}) + \beta (1-\Myt)}{ \frac{\alpha}{2}(1-\Myt^{2}) + \beta (1-\Myt) + \frac{\alpha}{2}\Myt^{2} + \beta \Myt} \\
    &= \frac{\frac{\alpha}{2}(1-\Myt^{2}) + \beta (1-\Myt)}{ \frac{\alpha}{2} + \beta}. 
\end{align}

\paragraph{Case :} $t \in [ \frac{\alpha (\Myt+1)}{2} + \beta, 1]$
Then, $\hat{m}_{k}(X)= 0$ for all $X \in [0,1]$. Thus,
\begin{align}
    \mathds{P}(Y=1, \hat{Y}=1) &= 0 \\ 
    \mathds{P}(Y=1, \hat{Y}=0) &= \alpha\mathds{E}\left[ X  \right] + \beta =  \alpha/2 + \beta.
 \end{align}
Therefore,
\begin{align}
    Recall = \frac{0}{0+\alpha/2 + \beta} = 0.
\end{align}

Finally,
\begin{align}
    Recall = \left\{ 
        \begin{array}{ll}
            1 & \text{if} \; t \in [0, \frac{\alpha \Myt}{2} + \beta] \\
            \frac{\frac{\alpha}{2}(1-\Myt^{2}) + \beta (1-\Myt)}{ \frac{\alpha}{2} + \beta} & \text{if} \; t \in [ \frac{\alpha \Myt}{2} + \beta, \frac{\alpha (\Myt+1)}{2} + \beta] \\
            0 & \text{if} \; t \in [ \frac{\alpha (\Myt+1)}{2} + \beta, 1].
        \end{array}
    \right.
\end{align}

\subsubsection{False Positive Rate (FPR)}
\begin{definition}
We define the Recall as below,
\begin{align}
     FPR = \frac{FP}{FP + TN}.
\end{align}
\end{definition}

We aim at computing the precision in our framework. We have,
\begin{align}
    FPR &= \frac{\mathds{P}(Y=0, \hat{Y}=1)}{\mathds{P}(Y=0, \hat{Y}=1) + \mathds{P}(Y=0, \hat{Y}=0)}.
\end{align}

We have,
\begin{align}
    \mathds{P}(Y=0, \hat{Y}=0) &= \mathds{E}\left[\mathds{1}_{\{Y=0\}}\mathds{1}_{\{\hat{m}_{k}(X)=0\}}\right] \\&=\mathds{E}\left[\mathds{E}\left[\mathds{1}_{\{Y=0\}}\mathds{1}_{\{\hat{m}_{k}(X)=0\}}|X\right] \right]  && (\text{Tower property})\\
    &=\mathds{E}\left[\mathds{1}_{\{\hat{m}_{k}(X)=0\}}\mathds{E}\left[\mathds{1}_{\{Y=0\}}|X\right] \right] \\
    &=\mathds{E}\left[\mathds{1}_{\{\hat{m}_{k}(X)=0\}}\mathds{P}\left(Y=0|X\right) \right] \\
    &=\mathds{E}\left[\mathds{1}_{\{\hat{m}_{k}(X)=0\}}\left(1 -\alpha X - \beta\right) \right] \\
    &=\mathds{P}(\hat{m}_{k}(X)=0) -\alpha\mathds{E}\left[\mathds{1}_{\{\hat{m}_{k}(X)=0\}} X  \right] + \beta \mathds{P}(\hat{m}_{k}(X)=0).
\end{align}

\paragraph{Case :} $t \in [0, \frac{\alpha \Myt}{2} + \beta]$
Then, $\hat{m}_{k}(X)= 1$ for all $X \in [0,1]$. Thus,
\begin{align}
    \mathds{P}(Y=0, \hat{Y}=1) = 1 - \alpha / 2 - \beta.
\end{align}

Besides,
\begin{align}
    \mathds{P}(Y=0, \hat{Y}=0) = 0.
\end{align}

Then,
\begin{align}
    FPR = \frac{1 - \alpha / 2 - \beta}{1 - \alpha / 2 - \beta +0  } = 1
\end{align}

\paragraph{Case :} $t \in [ \frac{\alpha \Myt}{2} + \beta, \frac{\alpha (\Myt+1)}{2} + \beta]$
Then, $\hat{m}_{k}(X)=\mathds{1}_{\{X\geq \Myt\}}$  for all $X \in [0,1]$.
Thus,
\begin{align}
    \mathds{P}(Y=0, \hat{Y}=1) &= \frac{\alpha}{2}t^{2} + \beta t + 1-\Myt -\beta - \frac{\alpha}{2}.
\end{align}
Furthermore,
\begin{align}
    &\mathds{P}(Y=0, \hat{Y}=0) \\
    &= \mathds{P}(\hat{m}_{k}(X)=0) -\alpha\mathds{E}\left[\mathds{1}_{\{\hat{m}_{k}(X)=0\}} X  \right] + \beta \mathds{P}(\hat{m}_{k}(X)=0) \\
    &= \mathds{P}(X< \Myt) -\alpha\mathds{E}\left[\mathds{1}_{\{X< t\}} X  \right] + \beta \mathds{P}(X< \Myt) \\
    &= \Myt -\alpha\mathds{E}\left[\mathds{1}_{\{X< \Myt\}} X  \right] + \beta t\\
    &= \Myt -\alpha\frac{1}{2} \Myt^{2} - \beta \Myt &&  \left(\mathds{E}\left[\mathds{1}_{\{X < \Myt\}} X  \right] = \int_{0}^{t}x\mathrm{d}x= \frac{1}{2}\Myt^{2}\right) \\
    &= -\frac{\alpha}{2} \Myt^{2} + \Myt (1 -\beta).
\end{align}

Then,
\begin{align}
    FPR &= \frac{-\frac{\alpha}{2} \Myt^{2} + \Myt (1 -\beta)}{-\frac{\alpha}{2} t^{2} + \Myt (1 -\beta)+ \frac{\alpha}{2}\Myt^{2} + \beta \Myt + 1-\Myt -\beta - \frac{\alpha}{2}} \\
    &= \frac{-\frac{\alpha}{2} \Myt^{2} + \Myt (1 -\beta)}{  1 -\beta - \frac{\alpha}{2}}.
\end{align}

\paragraph{Case :} $t \in [ \frac{\alpha (\Myt+1)}{2} + \beta, 1]$
Then, $\hat{m}_{k}(X)= 0$ for all $X \in [0,1]$. Thus,
\begin{align}
    \mathds{P}(Y=0, \hat{Y}=1) = 0
\end{align}
Furthermore,
\begin{align}
 \mathds{P}(Y=0, \hat{Y}=0) = 1 - \frac{\alpha}{2} + \beta
\end{align}

Then,
\begin{align}
    FPR = \frac{0}{0 + 1 - \frac{\alpha}{2} + \beta  } = 0
\end{align}

Finally,
\begin{align}
    FPR = \left\{ 
        \begin{array}{ll}
            1 & \text{if} \; t \in [0, \frac{\alpha \Myt}{2} + \beta] \\
            \frac{-\frac{\alpha}{2} \Myt^{2} + \Myt (1 -\beta)}{  1 -\beta - \frac{\alpha}{2}} & \text{if} \; t \in [ \frac{\alpha \Myt}{2} + \beta, \frac{\alpha (\Myt+1)}{2} + \beta] \\
            0 & \text{if} \; t \in [ \frac{\alpha (\Myt+1)}{2} + \beta, 1].
        \end{array}
    \right.
\end{align}

\subsubsection{F1-score}

We have,
\begin{align}
    \text{F1-score} = \frac{Precision \times Recall}{Precision + Recall}
\end{align}

\paragraph{Case :} $t \in [ \frac{\alpha \Myt}{2} + \beta, \frac{\alpha (\Myt+1)}{2} + \beta]$

\begin{align}
    \text{F1-score} &= \frac{\left(\frac{\alpha}{2} (1+\Myt) + \beta\right) \left(\frac{\frac{\alpha}{2}(1-\Myt^{2}) + \beta (1-\Myt)}{ \frac{\alpha}{2} + \beta}\right) }{\frac{\alpha}{2} (1+\Myt) + \beta + \frac{\frac{\alpha}{2}(1-\Myt^{2}) + \beta (1-\Myt)}{ \frac{\alpha}{2} + \beta}} \\
    &= \frac{\left(\frac{\alpha}{2} (1+\Myt) + \beta\right) \left(\frac{\frac{\alpha}{2}(1-\Myt^{2}) + \beta (1-\Myt)}{ \frac{\alpha}{2} + \beta}\right) }{ \frac{(\frac{\alpha}{2} + \beta)(\frac{\alpha}{2} (1+\Myt) + \beta )  +\frac{\alpha}{2}(1-\Myt^{2}) + \beta (1-\Myt)}{ \frac{\alpha}{2} + \beta}} \\
    &= \frac{\left(\frac{\alpha}{2} (1+\Myt) + \beta\right) \left(\frac{\alpha}{2}(1-\Myt^{2}) + \beta (1-\Myt)\right) }{ (\frac{\alpha}{2} + \beta)(\frac{\alpha}{2} (1+\Myt) + \beta )  +\frac{\alpha}{2}(1-\Myt^{2}) + \beta (1-\Myt)} \\
    &= \frac{\left(\frac{\alpha}{2} (1+t) + \beta\right) \left(\frac{\alpha}{2}(1+t) + \beta\right)(1-\Myt) }{ (\frac{\alpha}{2} + \beta)(\frac{\alpha}{2} (1+\Myt) + \beta ) +  \left(\frac{\alpha}{2}(1+\Myt) + \beta\right)(1-\Myt)} \\
    &= \frac{ \left(\frac{\alpha}{2}(1+\Myt) + \beta\right)(1-\Myt) }{ (\frac{\alpha}{2} + \beta)  + (1-\Myt)}.
\end{align}

Finally,
\begin{align}
    \text{F1-score} = \left\{ 
        \begin{array}{ll}
             \frac{\alpha/2 + \beta}{1+ \alpha/2 + \beta}& \text{if} \; t \in [0, \frac{\alpha \Myt}{2} + \beta] \\
            \frac{ \left(\frac{\alpha}{2}(1+\Myt) + \beta\right)(1-\Myt) }{ (\frac{\alpha}{2} + \beta)  + (1-\Myt)} & \text{if} \; t \in [ \frac{\alpha \Myt}{2} + \beta, \frac{\alpha (\Myt+1)}{2} + \beta] \\
            Undefined  & \text{if} \; t \in [ \frac{\alpha (\Myt+1)}{2} + \beta, 1].
        \end{array}
    \right.
\end{align}

\subsubsection{PR AUC}
We have

\begin{align}
    &\text{PR AUC} \\
    &= Precision_1(Recall_1-Recall_0) + Precision_2(Recall_2 - Recall_1) \\
    &= \left(\frac{\alpha}{2} (1+\Myt) + \beta\right)\left(\frac{\frac{\alpha}{2}(1-\Myt^{2}) + \beta (1-\Myt)}{ \frac{\alpha}{2} + \beta}\right) + (\alpha/2 + \beta) \left(1-\left(\frac{\frac{\alpha}{2}(1-\Myt^{2}) + \beta (1-\Myt)}{ \frac{\alpha}{2} + \beta}\right)\right) \\
    &= \frac{\left(\frac{\alpha}{2} (1+\Myt) + \beta\right) \left(\frac{\alpha}{2}(1-\Myt^{2}) + \beta (1-\Myt)\right)}{ \frac{\alpha}{2} + \beta} + \frac{(\alpha/2 + \beta)^2 - (\alpha/2 + \beta)\left(\frac{\alpha}{2}(1-\Myt^{2}) + \beta (1-t)\right) }{ \frac{\alpha}{2} + \beta} \\
    &= \frac{\left(\frac{\alpha}{2}(1-\Myt^{2}) + \beta (1-\Myt)\right) \left(\left(\frac{\alpha}{2} (1+\Myt) + \beta\right) - \left(\frac{\alpha}{2}+ \beta \right)\right) }{\frac{\alpha}{2} + \beta} + (\alpha/2 + \beta) \\
    &= \frac{\left(\frac{\alpha}{2}(1-\Myt^{2}) + \beta (1-\Myt)\right) \left(\frac{\alpha}{2}\Myt \right) }{\frac{\alpha}{2} + \beta} + (\alpha/2 + \beta) \\
    &= \frac{\left(\frac{\alpha^2}{4}t \right)- \left(\frac{\alpha^2}{4}\Myt^3 \right) + \left(\frac{\alpha \beta }{2}\Myt \right)   - \left(\frac{\alpha \beta }{2}\Myt^2 \right) }{\frac{\alpha}{2} + \beta} + \alpha/2 + \beta.
\end{align}

\subsubsection{ROC AUC}
We have,
\begin{align}
    ROC AUC &= Recall_1(FPR_1 - FPR_0) + Recall_2(FPR_2 - FPR_1)\\
    &= \frac{\frac{\alpha}{2}(1-\Myt^{2}) + \beta (1-\Myt)}{ \frac{\alpha}{2} + \beta} \left(\frac{-\frac{\alpha}{2} \Myt^{2} + \Myt (1 -\beta)}{  1 -\beta - \frac{\alpha}{2}} \right) + \left(1- \frac{-\frac{\alpha}{2} \Myt^{2} + \Myt (1 -\beta)}{  1 -\beta - \frac{\alpha}{2}}\right) \\
    &=  \left(\frac{-\frac{\alpha}{2} \Myt^{2} + t (1 -\beta)}{  1 -\beta - \frac{\alpha}{2}}\right) \left(\frac{\frac{\alpha}{2}(1-\Myt^{2}) + \beta (1-t)}{ \frac{\alpha}{2} + \beta} -1\right) + 1 \\
    &=  \left(\frac{-\frac{\alpha}{2} \Myt^{2} + t (1 -\beta)}{  1 -\beta - \frac{\alpha}{2}}\right) \left(\frac{\frac{\alpha}{2}(1-\Myt^{2}) + \beta (1-t) - \frac{\alpha}{2} - \beta}{ \frac{\alpha}{2} + \beta} \right) + 1 \\
    &=  \left(\frac{-\frac{\alpha}{2} \Myt^{2} + \Myt  -\beta \Myt}{  1 -\beta - \frac{\alpha}{2}}\right) \left(\frac{\frac{-\alpha}{2}\Myt^{2} - \beta t }{ \frac{\alpha}{2} + \beta} \right) + 1 \\
    &=  \frac{ \left(-\frac{\alpha}{2} \Myt^{2} + \Myt  -\beta \Myt \right)   \left(\frac{-\alpha}{2}\Myt^{2} - \beta \Myt  \right)}{  \left(1 -\beta - \frac{\alpha}{2}\right) \left(\frac{\alpha}{2} + \beta\right)}   + 1 \\
    &=  \frac{ \Myt\left(-\frac{\alpha}{2} \Myt + 1  -\beta \right)   \Myt\left(\frac{-\alpha}{2}\Myt - \beta  \right)}{  \left(1 -\beta - \frac{\alpha}{2}\right) \left(\frac{\alpha}{2} + \beta\right)}   + 1 \\
    &=  \frac{ \Myt^2 \left(1 -\frac{\alpha}{2} \Myt -\beta \right)   (-1)\left(\frac{\alpha}{2}\Myt + \beta  \right)}{  \left(1 -\beta - \frac{\alpha}{2}\right) \left(\frac{\alpha}{2} + \beta\right)} + 1 \\   
    &=  1 - \frac{ \Myt^2 \left(1 -\frac{\alpha}{2} \Myt -\beta \right) \left(\frac{\alpha}{2}\Myt + \beta  \right)}{  \left(1 -\beta - \frac{\alpha}{2}\right) \left(\frac{\alpha}{2} + \beta\right)}.
\end{align}

\end{proof}

\subsection{Derivations of metrics: ponderate case}
\label{subsec-proofs-metrics-ponderate-stump}

\subsubsection{Ponderate stump}
\begin{proof}[Proof of \Cref{prop:main1-metrics}]
\label{proof:ponderate-stump}
We have,
\begin{align}
    \mathds{P}(X^{\star} \leq y) &= \int_{\mathds{R}} \mathds{P} (X^{\star}=x) \mathrm{d}x \\
    &= \int_{0}^{y} \frac{\omega_{0}(1- \alpha x - \beta) + \omega_{1}(\alpha x + \beta )}{p(1-p)}\mathrm{d}x \\
    &= \frac{1}{p(1-p)} \omega_{0}[x- \frac{\alpha}{2} x^2 - \beta x ]_0^y + \omega_{1}[\frac{\alpha}{2} x^2 + \beta x]_0^y \\
    &= \frac{1}{p(1-p)} \omega_{0}(y- \frac{\alpha}{2} y^2 - \beta y ) + \omega_{1}(\frac{\alpha}{2} y^2 + \beta y).
\end{align}

Then,
\begin{align}
    f^{\star}(x|0 < X^{\star} \leq \Myt) &= \frac{\mathds{1}_{\{0 < X^{\star} \leq \Myt\}} f^{\star}(x)}{\mathds{P}(X^{\star} \leq \Myt) - \mathds{P}(X^{\star} \leq 0)} \\
    &= \frac{\mathds{1}_{\{0 < X^{\star} \leq \Myt\}} \frac{\omega_{0}(1- \alpha x - \beta) + \omega_{1}(\alpha x + \beta )}{p(1-p)}}{\frac{1}{p(1-p)} \omega_{0}(\Myt- \frac{\alpha}{2} \Myt^2 - \beta t ) + \omega_{1}(\frac{\alpha}{2} \Myt^2 + \beta \Myt) - 0} \\
    &=\mathds{1}_{\{0 < X^{\star} \leq \Myt\}}\frac{ \omega_{0}(1- \alpha x - \beta) + \omega_{1}(\alpha x + \beta )}{ \omega_{0}(\Myt- \frac{\alpha}{2} \Myt^2 - \beta \Myt ) + \omega_{1}(\frac{\alpha}{2} \Myt^2 + \beta \Myt)}
\end{align}

Similarly,
\begin{align}
    f^{\star}(x|\Myt < X^{\star} \leq 1) &= \frac{\mathds{1}_{\{\Myt < X^{\star} \leq 1\}} f^{\star}(x)}{\mathds{P}(X^{\star} \leq 1) - \mathds{P}(X^{\star} \leq \Myt)} \\
    &= \frac{\mathds{1}_{\{\Myt < X^{\star} \leq 1\}} \frac{\omega_{0}(1- \alpha x - \beta) + \omega_{1}(\alpha x + \beta )}{p(1-p)}}{1 - \frac{1}{p(1-p)} \left[ \omega_{0}(\Myt- \frac{\alpha}{2} \Myt^2 - \beta \Myt ) + \omega_{1}(\frac{\alpha}{2} \Myt^2 + \beta \Myt) \right]} \\
    &=\mathds{1}_{\{\Myt < X^{\star} \leq 1\}}\frac{ \omega_{0}(1- \alpha x - \beta) + \omega_{1}(\alpha x + \beta )}{ p(1-p) - \omega_{0}(\Myt- \frac{\alpha}{2} \Myt^2 - \beta \Myt ) - \omega_{1}(\frac{\alpha}{2} \Myt^2 + \beta \Myt)}
\end{align}

Let $\Myt$ be the split value of our reweighted stump $\hat{m}_{k}^{\star}$. Let $s$, be the threshold of for our stump to predict $Y=1$ for each cell. More precisely,
\begin{align}
    \mathds{E}[Y^{\star}|X^{\star} \in A_{R}] \mathds{1}_{\{X^{\star} \in A_{R}\}} \geq t \Longrightarrow \hat{Y} = 1 \\
    \mathds{E}[Y^{\star}|X^{\star} \in A_{L}]\mathds{1}_{\{X^{\star} \in A_{L}\}} \geq t \Longrightarrow \hat{Y} = 1 
\end{align}

We have,
\begin{align}
    &\mathds{E}[Y^{\star}|X^{\star} \in A_{L}] \\&= \mathds{E}[\mathds{E}[Y^{\star}|X^{\star}]|X^{\star} \in A_{L}] && (\text{Tower property}) \\
    & = \mathds{E}[P(Y^{\star}=1|X^{\star})| X^{\star} \in A_{L}] \\
    & = \mathds{E}\left[ \frac{\omega_{1}\mu(X^{\star})}{ \omega_{0}(1-\mu(X^{\star})) + \omega_{1}\mu(X^{\star}) }| X^{\star} \in A_{L}\right] \\
    &= \int_{\mathds{R}}  \frac{\omega_{1}\mu(x)}{ \omega_{0}(1-\mu(x)) + \omega_{1}\mu(x) } \mathds{1}_{\{0 < x \leq t\}}\frac{ \omega_{0}(1- \alpha x - \beta) + \omega_{1}(\alpha x + \beta )}{ \omega_{0}(\Myt- \frac{\alpha}{2} \Myt^2 - \beta \Myt ) + \omega_{1}(\frac{\alpha}{2} \Myt^2 + \beta t)} \mathrm{d}x \\
    &= \int_{0}^{\Myt}  \frac{\omega_{1}\mu(x)}{  \omega_{0}(\Myt- \frac{\alpha}{2} \Myt^2 - \beta \Myt ) + \omega_{1}(\frac{\alpha}{2} \Myt^2 + \beta \Myt) }  \mathrm{d}x \\
    &=   \frac{1}{  \omega_{0}(\Myt- \frac{\alpha}{2} \Myt^2 - \beta \Myt ) + \omega_{1}(\frac{\alpha}{2} \Myt^2 + \beta \Myt) }  \int_{0}^{\Myt}  \omega_{1}(\alpha x + \beta ) \mathrm{d}x \\
    &=   \frac{\omega_{1}(\frac{\alpha}{2}\Myt^{2} + \beta \Myt )}{  \omega_{0}(\Myt- \frac{\alpha}{2} \Myt^2 - \beta \Myt ) + \omega_{1}(\frac{\alpha}{2} \Myt^2 + \beta \Myt) } .
\end{align}

Similarly,
\begin{align}
    &\mathds{E}[Y^{\star}|X^{\star} \in A_{R}] \\&= \mathds{E}[\mathds{E}[Y^{\star}|X^{\star}]|X^{\star} \in A_{R}] \; \;  (\text{Tower property}) \\
    & = \mathds{E}[P(Y^{\star}=1|X^{\star})| X^{\star} \in A_{R}] \\
    & = \mathds{E}\left[ \frac{\omega_{1}\mu(X^{\star})}{ \omega_{0}(1-\mu(X^{\star})) + \omega_{1}\mu(X^{\star}) }| X^{\star} \in A_{R}\right] \\
    &= \int_{\mathds{R}}  \frac{\omega_{1}\mu(x)}{ \omega_{0}(1-\mu(x)) + \omega_{1}\mu(x) } \mathds{1}_{\{\Myt < x \leq 1\}}\frac{ \omega_{0}(1- \alpha x - \beta) + \omega_{1}(\alpha x + \beta )}{ p(1-p) - \omega_{0}(\Myt- \frac{\alpha}{2} \Myt^2 - \beta \Myt ) - \omega_{1}(\frac{\alpha}{2} \Myt^2 + \beta \Myt)} \mathrm{d}x \\
    &= \int_{\Myt}^{1}  \frac{\omega_{1}\mu(x)}{ p(1-p) - \omega_{0}(\Myt- \frac{\alpha}{2} \Myt^2 - \beta \Myt ) - \omega_{1}(\frac{\alpha}{2} \Myt^2 + \beta \Myt) }  \mathrm{d}x \\
    &=   \frac{1}{ p(1-p) - \omega_{0}(\Myt- \frac{\alpha}{2} \Myt^2 - \beta \Myt ) - \omega_{1}(\frac{\alpha}{2} \Myt^2 + \beta \Myt) }  \int_{\Myt}^{1}  \omega_{1}(\alpha x + \beta ) \mathrm{d}x \\
    &=   \frac{\omega_{1}(\frac{\alpha}{2}(1-\Myt^{2}) + \beta (1-\Myt) )}{p(1-p) -  \omega_{0}(\Myt- \frac{\alpha}{2} \Myt^2 - \beta \Myt ) - \omega_{1}(\frac{\alpha}{2} \Myt^2 + \beta \Myt) } .
\end{align}

Then,
\begin{align}
    x \leq t \Longrightarrow x \in A_{L} \Longrightarrow \hat{m}_{k}^{\star}(x)=1 \; \text{if} \; \frac{\omega_{1}(\frac{\alpha}{2}\Myt^{2} + \beta t )}{  \omega_{0}(\Myt- \frac{\alpha}{2} \Myt^2 - \beta \Myt ) + \omega_{1}(\frac{\alpha}{2} \Myt^2 + \beta \Myt) } \geq t.
\end{align}
On the other hand,
\begin{align}
    x > \Myt \Longrightarrow x \in A_{R,t} \Longrightarrow \hat{m}_{k}^{\star}(x)=1 \; \text{if} \; \frac{\omega_{1}(\frac{\alpha}{2}(1-\Myt^{2}) + \beta (1-\Myt) )}{p(1-p) -  \omega_{0}(\Myt- \frac{\alpha}{2} \Myt^2 - \beta \Myt ) - \omega_{1}(\frac{\alpha}{2} \Myt^2 + \beta \Myt) } \geq t.
\end{align}

Thus,
\begin{align}
    \hat{m}_{k}^{\star}(x) = \left\{ 
        \begin{array}{ll}
            1 & \text{if} \; x \leq \Myt \; \text{and} \; \frac{\omega_{1}(\frac{\alpha}{2}t^{2} + \beta \Myt )}{  \omega_{0}(\Myt- \frac{\alpha}{2} \Myt^2 - \beta \Myt ) + \omega_{1}(\frac{\alpha}{2} t^2 + \beta \Myt) } \geq t \\
            1 & \text{if} \; x > \Myt \; \text{and} \; \frac{\omega_{1}(\frac{\alpha}{2}(1-\Myt^{2}) + \beta (1-\Myt) )}{p(1-p) -  \omega_{0}(\Myt- \frac{\alpha}{2} \Myt^2 - \beta \Myt ) - \omega_{1}(\frac{\alpha}{2} \Myt^2 + \beta \Myt) } \geq t \; \\
            0 & \text{otherwise}.
        \end{array}
    \right.
\end{align}

Thus,
\begin{align}
    \hat{m}_{k}^{\star}(x) = \left\{ 
        \begin{array}{ll}
            1 & \text{if} \; t \in [0, \frac{\omega_{1}(\frac{\alpha}{2}\Myt^{2} + \beta \Myt )}{  \omega_{0}(\Myt- \frac{\alpha}{2} \Myt^2 - \beta \Myt ) + \omega_{1}(\frac{\alpha}{2} \Myt^2 + \beta t) }] \\
            \mathds{1}_{\{x \geq \Myt\}} & \text{if} \; t \in [ \frac{\omega_{1}(\frac{\alpha}{2}\Myt^{2} + \beta \Myt )}{  \omega_{0}(\Myt- \frac{\alpha}{2} \Myt^2 - \beta \Myt ) + \omega_{1}(\frac{\alpha}{2} \Myt^2 + \beta \Myt) }, \frac{\omega_{1}(\frac{\alpha}{2}(1-\Myt^{2}) + \beta (1-\Myt) )}{p(1-p) -  \omega_{0}(\Myt- \frac{\alpha}{2} \Myt^2 - \beta \Myt ) - \omega_{1}(\frac{\alpha}{2} \Myt^2 + \beta \Myt) }] \\
            0 & \text{if} \; t \in [\frac{\omega_{1}(\frac{\alpha}{2}(1-\Myt^{2}) + \beta (1-\Myt) )}{p(1-p) -  \omega_{0}(\Myt- \frac{\alpha}{2} \Myt^2 - \beta \Myt ) - \omega_{1}(\frac{\alpha}{2} \Myt^2 + \beta \Myt) }, 1].
        \end{array}
    \right.
\end{align}

\subsubsection{Precision}
We recall that,
\begin{align}
    Precision &= \frac{\mathds{P}(Y=1, \hat{m}_{k}^{\star}(X)=1)}{\mathds{P}(Y=1, \hat{m}_{k}^{\star}(X)=1) + \mathds{P}(Y=0, \hat{m}_{k}^{\star}(X)=1)}.
\end{align}

We have, 
\begin{align}
    \mathds{P}(Y=1, \hat{m}_{k}^{\star}(X)=1) &= \mathds{E}\left[\mathds{1}_{\{Y=1\}}\mathds{1}_{\{\hat{m}_{k}^{\star}(X^{\star})=1\}}\right] \\
    &=\mathds{E}\left[\mathds{E}\left[\mathds{1}_{\{Y=1\}}\mathds{1}_{\{\hat{m}_{k}^{\star}(X)=1\}}|X\right] \right]  && (\text{Tower property})\\
    &=\mathds{E}\left[\mathds{1}_{\{\hat{m}_{k}^{\star}(X)=1\}}\mathds{E}\left[\mathds{1}_{\{Y=1\}}|X\right] \right] \\
    &=\mathds{E}\left[\mathds{1}_{\{\hat{m}_{k}^{\star}(X)=1\}}\mathds{P}\left(Y=1|X\right) \right] \\
    &=\mathds{E}\left[\mathds{1}_{\{\hat{m}_{k}^{\star}(X)=1\}}(\alpha X + \beta ) \right] \\
    &=\alpha\mathds{E}\left[\mathds{1}_{\{\hat{m}_{k}^{\star}(X)=1\}}X \right]  + \beta \mathds{P}(\hat{m}_{k}^{\star}(X)=1).
\end{align}

Similarly, 
\begin{align}
    \mathds{P}(Y=0, \hat{m}_{k}^{\star}(X)=1) &= \mathds{E}\left[\mathds{1}_{\{Y=1\}}\mathds{1}_{\{\hat{m}_{k}^{\star}(X)=1\}}\right] \\
    &=\mathds{E}\left[\mathds{E}\left[\mathds{1}_{\{Y=0\}}\mathds{1}_{\{\hat{m}_{k}^{\star}(X)=1\}}|X\right] \right]  \; \;  (\text{Tower property})\\
    &=\mathds{E}\left[\mathds{1}_{\{\hat{m}_{k}^{\star}(X)=1\}}\mathds{E}\left[\mathds{1}_{\{Y=0\}}|X\right] \right] \\
    &=\mathds{E}\left[\mathds{1}_{\{\hat{m}_{k}^{\star}(X)=1\}}\mathds{P}\left(Y=0|X\right) \right] \\
    &=\mathds{E}\left[\mathds{1}_{\{\hat{m}_{k}^{\star}(X)=1\}} (1 -\mathds{P}\left(Y=1|X\right) ) \right] \\
    &=\mathds{E}\left[\mathds{1}_{\{\hat{m}_{k}^{\star}(X)=1\}}(1 -\alpha x - \beta ) \right] \\
    &=\mathds{P}(\hat{m}_{k}^{\star}(X)=1) - \alpha\mathds{E}\left[\mathds{1}_{\{\hat{m}_{k}^{\star}(X)=1\}}X \right]  - \beta \mathds{P}(\hat{m}_{k}^{\star}(X)=1).
\end{align}

\paragraph{Case :} $t \in [0, \frac{\omega_{1}(\frac{\alpha}{2}\Myt^{2} + \beta \Myt )}{  \omega_{0}(\Myt- \frac{\alpha}{2} \Myt^2 - \beta \Myt ) + \omega_{1}(\frac{\alpha}{2} \Myt^2 + \beta \Myt) }]$

Then, $\hat{m}_{k}(X)= 1$ for all $x \in [0,1]$. Thus,
\begin{align}
    \mathds{P}(Y=1, \hat{Y}^{\star}=1) &=\alpha\mathds{E}\left[\mathds{1}_{\{\hat{m}_{k}^{\star}(X)=1\}}X \right]  + \beta \mathds{P}(\hat{m}_{k}^{\star}(X)=1) \\
    &= \frac{\alpha }{2} + \beta.
\end{align}

Furthermore,
\begin{align}
    \mathds{P}(Y=0, \hat{Y}^{\star}=1) &=\mathds{P}(\hat{m}_{k}^{\star}(X)=1) - \alpha\mathds{E}\left[\mathds{1}_{\{\hat{m}_{k}^{\star}(X)=1\}}X \right]  - \beta \mathds{P}(\hat{m}_{k}^{\star}(X)=1) \\
    &= 1 - \frac{\alpha }{2} - \beta.
\end{align}

Therefore,
\begin{align}
    Precision = \frac{\frac{\alpha }{2} + \beta}{\frac{\alpha }{2} + \beta + 1 - \frac{\alpha }{2} - \beta} = \frac{\alpha }{2} + \beta.
\end{align}

\paragraph{Case :} $t \in [ \frac{\omega_{1}(\frac{\alpha}{2}\Myt^{2} + \beta \Myt )}{  \omega_{0}(\Myt- \frac{\alpha}{2} \Myt^2 - \beta \Myt ) + \omega_{1}(\frac{\alpha}{2} \Myt^2 + \beta \Myt) }, \frac{\omega_{1}(\frac{\alpha}{2}(1-\Myt^{2}) + \beta (1-\Myt) )}{p(1-p) -  \omega_{0}(\Myt- \frac{\alpha}{2} \Myt^2 - \beta \Myt ) - \omega_{1}(\frac{\alpha}{2} \Myt^2 + \beta \Myt) }]$

Then, $D^{\star}_{\Myt}(X)=\mathds{1}_{\{X\geq \Myt\}}$  for all $x \in [0,1]$.
Thus,
\begin{align}
     \mathds{P}(Y=1, \hat{m}_{k}^{\star}(X)=1) &=\alpha\mathds{E}\left[\mathds{1}_{\{\hat{m}_{k}^{\star}(X)=1\}} X  \right] + \beta \mathds{P}(\hat{m}_{k}^{\star}(X)=1) \\
     &= \alpha\mathds{E}\left[\mathds{1}_{\{X\geq t\}} X  \right] + \beta \mathds{P}(X\geq \Myt) \\
     &=\frac{\alpha}{2}(1-\Myt^{2}) + \beta \mathds{P}(X\geq \Myt)  \\
     &=\frac{\alpha}{2}(1-\Myt^{2}) + \beta (1-\Myt   ).
\end{align}

Besides,
\begin{align}
    &\mathds{P}(Y=0, \hat{Y}^{\star}=1) \\
    &=\mathds{P}(\hat{m}_{k}^{\star}(X^{\star})=1) - \alpha\mathds{E}\left[\mathds{1}_{\{\hat{m}_{k}^{\star}(X)=1\}} X  \right] - \beta \mathds{P}(\hat{m}_{k}^{\star}(X)=1) \\
    &=\mathds{P}(X \geq \Myt) - \alpha\mathds{E}\left[\mathds{1}_{\{X^{\star} \geq \Myt\}} X  \right] - \beta \mathds{P}(X^{\star} \geq \Myt) \\
    &= (1-\Myt) -\alpha\frac{1}{2}(1-\Myt^{2}) - \beta (1-\Myt)  \; \;  \left(\mathds{E}\left[\mathds{1}_{\{X \leq \Myt\}} X  \right] = \int_{\Myt}^{1}x\mathrm{d}x= \frac{1}{2}(1-\Myt^{2}) \right) \\
    &= \frac{\alpha}{2}\Myt^{2} + \beta \Myt + 1-\Myt -\beta - \frac{\alpha}{2}.
\end{align}

Therefore,
\begin{align}
    Precision &= \frac{\frac{\alpha}{2}(1-t^{2}) + \beta (1-\Myt)}{\frac{\alpha}{2}(1-\Myt^{2}) + \beta (1-\Myt) + \frac{\alpha}{2}\Myt^{2} + \beta \Myt + 1-\Myt -\beta - \frac{\alpha}{2} } \\
    &= \frac{\frac{\alpha}{2}(1-\Myt^{2}) + \beta (1-\Myt)}{ 1-\Myt } \\
    &= \frac{\alpha}{2} (1+\Myt) + \beta.
\end{align}

\paragraph{Case :} $t \in [ \frac{\omega_{1}(\frac{\alpha}{2}(1-\Myt^{2}) + \beta (1-\Myt) )}{p(1-p) -  \omega_{0}(\Myt- \frac{\alpha}{2} \Myt^2 - \beta \Myt ) - \omega_{1}(\frac{\alpha}{2} \Myt^2 + \beta \Myt) }, 1]$
Then, $\hat{m}_{k}^{\star}(X)= 0$ for all $x \in [0,1]$. Thus,
\begin{align}
    \mathds{P}(Y=1, \hat{Y}=1) = 0 \\ 
    \mathds{P}(Y=0, \hat{Y}=1) = 0.
\end{align}
Therefore,
\begin{align}
    Precision = \frac{0}{0} = Undefined.
\end{align}

Finally,
\begin{align}
    &Precision \\
    &= \left\{ 
        \begin{array}{ll}
             \alpha/2 + \beta& \text{if} \; t \in [0, \frac{\omega_{1}(\frac{\alpha}{2}\Myt^{2} + \beta \Myt )}{  \omega_{0}(\Myt- \frac{\alpha}{2} \Myt^2 - \beta \Myt ) + \omega_{1}(\frac{\alpha}{2} \Myt^2 + \beta \Myt) }] \\
            \frac{\alpha}{2} (1+\Myt) + \beta & \text{if} \; t \in [ \frac{\omega_{1}(\frac{\alpha}{2}\Myt^{2} + \beta \Myt )}{  \omega_{0}(\Myt- \frac{\alpha}{2} \Myt^2 - \beta \Myt ) + \omega_{1}(\frac{\alpha}{2} \Myt^2 + \beta \Myt) }, \frac{\omega_{1}(\frac{\alpha}{2}(1-\Myt^{2}) + \beta (1-\Myt) )}{p(1-p) -  \omega_{0}(\Myt- \frac{\alpha}{2} \Myt^2 - \beta \Myt ) - \omega_{1}(\frac{\alpha}{2} \Myt^2 + \beta \Myt) }] \\
            Undefined  & \text{if} \; t \in [ \frac{\omega_{1}(\frac{\alpha}{2}(1-\Myt^{2}) + \beta (1-\Myt) )}{p(1-p) -  \omega_{0}(\Myt- \frac{\alpha}{2} \Myt^2 - \beta \Myt ) - \omega_{1}(\frac{\alpha}{2} \Myt^2 + \beta \Myt) }, 1].
        \end{array}
    \right.
\end{align}

\subsubsection{Recall, $F_1$-score, PR AUC, and ROC AUC}

For Recall, $F_1$-score, PR AUC, and ROC AUC, the functional dependence of each metric on the split value $\Myt$ is identical for both weighted and unweighted trees. Consequently, any metric discrepancy between the two approaches arises solely from differences in their optimal split value $\Myt$.
\end{proof}

\subsection{Optimizations of metrics}
\begin{proof}[Proof of \Cref{lemma:opt-metrics-ts}]
\label{proof:opt-metrics-ts}

\textbf{Optimization of F1-score, on $\Myt,s$ :}

We have the maximal F1-score when $t \in [ \frac{\alpha \Myt}{2} + \beta, \frac{\alpha (\Myt+1)}{2} + \beta]$. In such case:
\begin{align}
    \text{F1-score} &= \frac{ \left(\frac{\alpha}{2}(1+\Myt) + \beta\right)(1-\Myt) }{ (\frac{\alpha}{2} + \beta)  + (1-\Myt)} \\
    &= \frac{ \frac{\alpha}{2} + \frac{\alpha}{2}\Myt + \beta - \frac{\alpha}{2}\Myt - \frac{\alpha}{2}\Myt^2 - \beta \Myt}{ -\Myt + 1 +\frac{\alpha}{2} + \beta  } \\
    &= \frac{- \frac{\alpha}{2}\Myt^2  - \beta \Myt + \frac{\alpha}{2}  + \beta  }{ -\Myt + 1 +\frac{\alpha}{2} + \beta  }.
\end{align}

Then,
\begin{align}
    &\left(\frac{- \frac{\alpha}{2}\Myt^2  - \beta \Myt + \frac{\alpha}{2}  + \beta  }{ -\Myt + 1 +\frac{\alpha}{2} + \beta  }\right)^{\prime} \\
    &=  \frac{(-\alpha \Myt - \beta)\left(-\Myt +1 + \frac{\alpha}{2} + \beta \right) - \left(-\frac{\alpha}{2}\Myt^2 - \beta \Myt + \frac{\alpha}{2} + \beta \right)(-1)}{ (-\Myt + 1 +\frac{\alpha}{2} + \beta)^2  } \\
    &= \frac{(-\alpha \Myt - \beta)\left(-\Myt +1 + \frac{\alpha}{2} + \beta \right) +\left(-\frac{\alpha}{2}\Myt^2 - \beta \Myt + \frac{\alpha}{2} + \beta \right)}{ (-\Myt + 1 +\frac{\alpha}{2} + \beta)^2  } \\
    &= \frac{\alpha \Myt^2-\alpha \Myt(1 + \frac{\alpha}{2} + \beta)+\beta \Myt -\beta\left( 1 + \frac{\alpha}{2} + \beta \right) -\frac{\alpha}{2}\Myt^2 - \beta \Myt + \frac{\alpha}{2} + \beta }{ (-\Myt + 1 +\frac{\alpha}{2} + \beta)^2  } \\
    &= \frac{\frac{\alpha}{2} \Myt^2-\alpha \Myt(1 + \frac{\alpha}{2} + \beta) -\beta\left( 1 + \frac{\alpha}{2} + \beta \right) + \frac{\alpha}{2} + \beta }{ (-\Myt + 1 +\frac{\alpha}{2} + \beta)^2  } \\
    &= \frac{\frac{\alpha}{2} \Myt^2-\alpha \Myt(1 + \frac{\alpha}{2} + \beta) -\beta\left( \frac{\alpha}{2} + \beta \right) + \frac{\alpha}{2}  }{ (-\Myt + 1 +\frac{\alpha}{2} + \beta)^2  }.
\end{align}

 We have,
 \begin{align}
     & \alpha^2(1 + \frac{\alpha}{2} + \beta)^2 - 4 \frac{\alpha}{2} \left[ \frac{\alpha}{2} - \beta\left( \frac{\alpha}{2} + \beta \right) \right] \\
     &=\alpha^2(1 + \frac{\alpha}{2} + \beta)^2 - 2 \alpha \left[ \frac{\alpha}{2} -  \frac{\alpha \beta}{2} - \beta^2  \right] \\
     &=\alpha^2(\frac{\alpha}{2} + \beta)^2 - 2 \alpha \left[ -  \frac{\alpha \beta}{2} - \beta^2  \right] \\
     &=\alpha^2(\frac{\alpha}{2} + \beta)^2 +   \alpha^2 \beta + 2 \alpha \beta^2 >0. 
 \end{align}
Then we have two solution to the equation $\frac{\alpha}{2} \Myt^2-\alpha \Myt(1 + \frac{\alpha}{2} + \beta) -\beta\left( \frac{\alpha}{2} + \beta \right) + \frac{\alpha}{2}  = 0$, which are :
\begin{align}
    &\frac{\alpha(1 + \frac{\alpha}{2} + \beta) - \sqrt{\alpha^2(1 + \frac{\alpha}{2} + \beta)^2 - 4 \frac{\alpha}{2} \left[ \frac{\alpha}{2} - \beta\left( \frac{\alpha}{2} + \beta \right) \right]}}{\alpha} \\
    &=\frac{\alpha(1 + \frac{\alpha}{2} + \beta) - \sqrt{\alpha^2(1 + \frac{\alpha}{2} + \beta)^2 - \alpha^2 + 2 \alpha \beta \left( \frac{\alpha}{2} + \beta \right) }}{\alpha} \\
    &=\frac{\alpha(1 + \frac{\alpha}{2} + \beta) - \sqrt{\alpha^2(1 + \frac{\alpha}{2} + \beta)^2 + \alpha \left[ \alpha(\beta-1) + 2 \beta^2 \right] }}{\alpha}.
\end{align}

We have, 
$$ 
\alpha(\beta-1) + 2 \beta^2 >0 \Longrightarrow \alpha < \frac{2 \beta^2}{1-\beta},
$$ thus,
$$
\frac{\alpha(1 + \frac{\alpha}{2} + \beta) - \sqrt{\alpha^2(1 + \frac{\alpha}{2} + \beta)^2 + \alpha \left[ \alpha(\beta-1) + 2 \beta^2 \right] }}{\alpha} <0
$$

Furthermore,
\begin{align}
    &\frac{\alpha(1 + \frac{\alpha}{2} + \beta) + \sqrt{\alpha^2(1 + \frac{\alpha}{2} + \beta)^2 - 4 \frac{\alpha}{2} \left[ \frac{\alpha}{2} - \beta\left( \frac{\alpha}{2} + \beta \right) \right]}}{\alpha} \\
    &= 1 + \frac{\alpha}{2} + \beta + \sqrt{\alpha(1 + \frac{\alpha}{2} + \beta)^2 -  \frac{4}{2} \left[ \frac{\alpha}{2} - \beta\left( \frac{\alpha}{2} + \beta \right) \right]}  \\
    &> 1
\end{align}

Then, if  $\alpha(\beta-1) + 2 \beta^2 >0$,the derivate of the F1-score with respect to $\Myt$ is negative on $[0,1]$,thus the F1-score is maximized for $\Myt=0$. 

Else, the derivate is positive on $\left[0,\frac{\alpha(1 + \frac{\alpha}{2} + \beta) - \sqrt{\alpha^2(1 + \frac{\alpha}{2} + \beta)^2 + \alpha \left[ \alpha(\beta-1) + 2 \beta^2 \right] }}{\alpha}\right]$ and negative in $\left[\frac{\alpha(1 + \frac{\alpha}{2} + \beta) - \sqrt{\alpha^2(1 + \frac{\alpha}{2} + \beta)^2 + \alpha \left[ \alpha(\beta-1) + 2 \beta^2 \right] }}{\alpha}, 1\right]$, thus F1-score is maximized for 
$$\Myt= \frac{\alpha(1 + \frac{\alpha}{2} + \beta) - \sqrt{\alpha^2(1 + \frac{\alpha}{2} + \beta)^2 + \alpha \left[ \alpha(\beta-1) + 2 \beta^2 \right] }}{\alpha}$$.

\paragraph{Optimization of PR AUC}
We have,
\begin{align}
    &\left(\left(\frac{\alpha^2}{4}t \right)- \left(\frac{\alpha^2}{4}t^3 \right) + \left(\frac{\alpha \beta }{2}t \right)   - \left(\frac{\alpha \beta }{2}t^2 \right)\right)^{\prime} \\
    &= -\frac{\alpha^2}{4} 3t^2 - \frac{\alpha \beta }{2}2t + \frac{\alpha^2}{4} + \frac{\alpha \beta}{2} \\
    &= -\frac{3 \alpha^2}{4} t^2 - \alpha \beta t + \frac{\alpha^2 + 2 \alpha \beta }{4} \\
    &= -\frac{3 \alpha^2}{4} t^2 - \alpha \beta t + \frac{\alpha(\alpha + 2 \beta) }{4}.
\end{align}

We have, 
\begin{align}
    &(-\alpha \beta)^2 - 4 \left(-\frac{3 \alpha^2}{4}\right) \frac{\alpha(\alpha + 2 \beta) }{4} \\
    &=(\alpha \beta)^2 + \frac{3 \alpha^3(\alpha + 2 \beta)}{4} \\
    &=(\alpha \beta)^2 + \frac{3}{4}\alpha^3(\alpha + 2 \beta) \\
    &= \alpha^2 \left( \beta^2 + \frac{3}{4}\alpha(\alpha + 2 \beta)\right) > 0.
\end{align}

The solution to the equation $-\frac{3 \alpha^2}{4} t^2 - \alpha t + \frac{\alpha(\alpha + 2 \beta) }{4}=0 $ are,
\begin{align}
    \frac{\alpha \beta + \sqrt{\alpha^2 \left( \beta^2 + \frac{3}{4}\alpha(\alpha + 2 \beta)\right)}}{-\frac{3 \alpha^2}{2}} < 0.
\end{align}

And,
\begin{align}
    & \frac{\alpha \beta - \sqrt{\alpha^2 \left( \beta^2 + \frac{3}{4}\alpha(\alpha + 2 \beta)\right)}}{-\frac{3 \alpha^2}{2}} \\
    &= \frac{ 2\sqrt{\alpha^2 \left( \beta^2 + \frac{3}{4}\alpha(\alpha + 2 \beta)\right)} - 2\alpha \beta}{3 \alpha^2} \\
    &=\frac{ 2\alpha\sqrt{  \beta^2 + \frac{3}{4}\alpha(\alpha + 2 \beta)} - 2\alpha \beta}{3 \alpha^2} \\
    &=\frac{ 2\sqrt{  \beta^2 + \frac{3}{4}\alpha(\alpha + 2 \beta)} - 2 \beta}{3 \alpha}.
\end{align}
\as{ABDOULAYE REPRENDRE DEPUIS ICI !!}

We have, if $\alpha>0$,
\begin{align}
    &\frac{ 2\sqrt{  \beta^2 + \frac{3}{4}\alpha(\alpha + 2 \beta)} - 2 \beta}{3 \alpha} < 1 \\
    & 2\sqrt{  \beta^2 + \frac{3}{4}\alpha(\alpha + 2 \beta)}  < 3\alpha + 2\beta \\
    & \sqrt{  \beta^2 + \frac{3}{4}\alpha(\alpha + 2 \beta)}  < \frac{3\alpha + 2\beta}{2} \\
    &   \beta^2 + \frac{3}{4}\alpha(\alpha + 2 \beta)  < \left(\frac{3\alpha + 2\beta}{2} \right)^2 \\
    &   \beta^2 + \frac{3}{4}\alpha(\alpha + 2 \beta)  < \frac{9}{4}\alpha^2 + \frac{12}{4}\alpha \beta + \beta^2\\
    &   \frac{3}{4}\alpha^2 + \frac{6}{4}\alpha\beta < \frac{9}{4}\alpha^2 + \frac{12}{4}\alpha \beta \\
    &   \frac{-6}{4}\alpha^2  < \frac{6}{4}\alpha \beta \\
    &   -\alpha  <  \beta. \\
\end{align}

We recall that, $\alpha, \beta \in \mathds{R}^{+}$, thus $-\alpha  <  \beta$ for all $\alpha, \beta$.

Finally the derivate of the PR AUC,with respect to $t$,  is positive on $\left[ 0, \frac{ 2\sqrt{  \beta^2 + \frac{3}{4}\alpha(\alpha + 2 \beta)} - 2 \beta}{3 \alpha}\right]$, then negative in $[\frac{ 2\sqrt{  \beta^2 + \frac{3}{4}\alpha(\alpha + 2 \beta)} - 2 \beta}{3 \alpha},1]$.
The PR AUC is then maximized for $t=\frac{ 2\sqrt{  \beta^2 + \frac{3}{4}\alpha(\alpha + 2 \beta)} - 2 \beta}{3 \alpha}$ if $\alpha >0$ and for $t\in [0,1]$ otherwise (derivate null).

\paragraph{Optimization  of ROC}
 The ROC AUC,

\begin{align}
 ROC AUC = 1 - \frac{ \Myt^2 \left(1 -\frac{\alpha}{2} \Myt -\beta \right) \left(\frac{\alpha}{2}t + \beta  \right)}{  \left(1 -\beta - \frac{\alpha}{2}\right) \left(\frac{\alpha}{2} + \beta\right)},
\end{align}
is maximized when the quantity,
\begin{align}
\label{eq-optROC-quantity-intermediaire}
     - \frac{ \Myt^2 \left(1 -\frac{\alpha}{2} \Myt -\beta \right) \left(\frac{\alpha}{2}\Myt + \beta  \right)}{  \left(1 -\beta - \frac{\alpha}{2}\right) \left(\frac{\alpha}{2} + \beta\right)} <0,
\end{align}
is minimized. 

We minimize,
\begin{align}
    g(\Myt) = -\Myt^2 \left(1 -\frac{\alpha}{2} \Myt -\beta \right) \left(\frac{\alpha}{2}\Myt + \beta  \right).
\end{align}

We have,
\begin{align}
    g^{\prime}(\Myt) &= -2\Myt\left(1-\frac{\alpha}{2}\Myt - \beta \right) \left(\frac{\alpha}{2}\Myt + \beta\right) + (-\Myt^2) \left[\frac{\alpha}{2}\left(\frac{\alpha}{2}\Myt + \beta \right)  + \frac{\alpha}{2}(1-\frac{\alpha}{2}\Myt - \beta)\right]\\
    &= -2\Myt\left(1-\frac{\alpha}{2}\Myt - \beta \right) \left(\frac{\alpha}{2}\Myt + \beta\right) - \Myt^2\frac{\alpha}{2} \left[(1-\frac{\alpha}{2}\Myt - \beta) - (\frac{\alpha}{2}\Myt + \beta)\right]\\
    &= -2\Myt\left(1-\frac{\alpha}{2}\Myt - \beta \right) \left(\frac{\alpha}{2}\Myt + \beta\right) - \Myt^2\frac{\alpha}{2} \left[(1-2\frac{\alpha}{2}\Myt - 2\beta) \right]\\
    &= -2\Myt\left(1-\frac{\alpha}{2}\Myt - \beta \right) \left(\frac{\alpha}{2}\Myt + \beta\right) - \Myt^2\frac{\alpha}{2} \left[2\left(1-\frac{\alpha}{2}\Myt - \beta\right) \right] + \Myt^2 \frac{\alpha}{2}\\
    &= -2\Myt\left(1-\frac{\alpha}{2}\Myt - \beta \right) \left(\frac{\alpha}{2}\Myt + \beta\right) - \Myt^2\alpha \left(1-\frac{\alpha}{2}\Myt - \beta\right) + \Myt^2 \frac{\alpha}{2}\\
    &= \left(1-\frac{\alpha}{2}\Myt - \beta\right) \left[ -2\Myt\left(\frac{\alpha}{2}\Myt + \beta\right) -\Myt^2 \alpha\right] + \Myt^2 \frac{\alpha}{2}\\
    &= \left(1-\frac{\alpha}{2}\Myt - \beta\right) \left[ -\alpha \Myt^2 - 2\beta \Myt -\Myt^2 \alpha\right] + \Myt^2 \frac{\alpha}{2}\\ 
    &= \left(1-\frac{\alpha}{2}\Myt - \beta\right) \left[ -2\alpha \Myt^2 - 2\beta \Myt \right] + \Myt^2 \frac{\alpha}{2}\\ 
    &= \left[-2\alpha \Myt^2 +\alpha^2 \Myt^3 +2\alpha \beta \Myt^2 - 2\beta \Myt + \alpha\beta \Myt^2 + 2\beta^2 \Myt \right] + \Myt^2 \frac{\alpha}{2} \as{Erreur de signe après}\\
    &= \Myt^3 \alpha^2 + \Myt^2 \left(-2\alpha +2 \alpha \beta + \alpha \beta + \frac{\alpha}{2}\right) + \Myt \left(-2 \beta +2 \beta^2\right) \\
    &= \Myt^3 \alpha^2 + \Myt^2 \left(-\frac{3\alpha}{2} +3 \alpha \beta \right) + \Myt \left(-2 \beta +2 \beta^2\right) \\
    &= \Myt \left[\Myt^2 \alpha^2 + \Myt \left(-\frac{3\alpha}{2} +3 \alpha \beta  \right) + \left(-2 \beta +2 \beta^2\right) \right].
\end{align}
The polynome $\Myt^2 \alpha^2 + \Myt \left(-\frac{3\alpha}{2} +3 \alpha \beta  \right) + \left(-2 \beta +2 \beta^2\right)$ admits as discriminant:
\begin{align}
    &\left(-\frac{3}{2}\alpha + 3 \alpha \beta \right)^2 - 4 \alpha^2 \left(-2 \beta +2 \beta^2\right) \\
    &\left(-\frac{3}{2}\alpha + 3 \alpha \beta \right)^2 + 4 \alpha^2 2\left( \beta - \beta^2\right) > 0 && \left(\beta \leq 1 \text{ and } x\to x-x^2  \text{ is in positive on } [0,1],  \right) \\
    & \frac{9}{4}\alpha^2 -9\alpha^2 \beta + 9\alpha^2\beta^2 + 8\alpha^2 \beta - 8\alpha^2 \beta^2 \\
    & \frac{9}{4}\alpha^2 -\alpha^2 \beta + \alpha^2\beta^2 
\end{align}

Thus, there is the equation $\Myt^2 \alpha^2 + \Myt \left(-\frac{3\alpha}{2} +3 \alpha \beta  \right) + \left(-2 \beta +2 \beta^2\right) = 0$ admits two solutions which are: 
\begin{align}
    &\frac{\frac{3}{2}\alpha - 3\alpha \beta \pm \alpha \sqrt{\frac{9}{4}- \beta + \beta^2 } }{2 \alpha^2} \\
    &\frac{\frac{3}{2} - 3 \beta \pm \sqrt{\frac{9}{4}- \beta + \beta^2 } }{2 \alpha} \\
    &\frac{3 - 6 \beta \pm \sqrt{9- 4\beta + 4\beta^2 } }{4 \alpha}.
\end{align}

We remark that,
\begin{align}
    &3 - 6 \beta + \sqrt{9- 4\beta + 4\beta^2 } \geq 3 - 6 \beta + \sqrt{9} = 6 (1-\beta) \geq 0,
\end{align}
and 
\begin{align}
    &3 - 6 \beta - \sqrt{9- 4\beta + 4\beta^2 } \leq 3 - 6 \beta - \sqrt{9} = 6 (\beta-1) \leq 0.
\end{align}

Finally, we have,
\begin{align*}
\begin{tikzpicture}
   \tkzTabInit{$\Myt$ / 1 ,$g^{\prime}$/1, $g$/2}{$-\infty$,$\frac{3 - 6 \beta - \sqrt{9- 4\beta + 4\beta^2 } }{4 \alpha}$,$0$, $\frac{3 - 6 \beta + \sqrt{9- 4\beta + 4\beta^2 } }{4 \alpha}$, $+\infty$}
   \tkzTabLine{, -,z, +,z,-,z,+,}
   \tkzTabVar{ +/ , -/ ,+/, -/,+/}
   %\tkzTabLine{, +,, +,,+,z,-,}
   %\tkzTabLine{, -,z, +,,+,,+,}
   %\tkzTabLine{, -,z, +,,+,,+,}
   %\tkzTabVar{+/ , -/$2 + \ln(\frac{1}{16}) +1 $ , +/ }
\end{tikzpicture}
\end{align*}

Besides, if we suppose $\alpha >0$,
\begin{align}
    &\frac{3 - 6 \beta + \sqrt{9- 4\beta + 4\beta^2 } }{4 \alpha} > 1 \\
    &3 - 6 \beta + \sqrt{9- 4\beta + 4\beta^2 }  > 4 \alpha  && \left(4 \alpha > 0 \right) \\
    & \sqrt{9- 4\beta + 4\beta^2 }  > 4 \alpha - 3 + 6 \beta \\
    & 9- 4\beta + 4\beta^2  > \left(4 \alpha - 3 + 6 \beta \right)^2 \\
    & 9- 4\beta + 4\beta^2  > \left(4 \alpha - 3\right)^2 +2 \left(4 \alpha - 3\right)6\beta + 36 \beta^2  \\
    & 9- 4\beta + 4\beta^2  > 16 \alpha^2 -24\alpha + 9 +48 \alpha\beta -36\beta + 36 \beta^2  \\
    & 0  > 16 \alpha^2 -24\alpha +48 \alpha\beta -32\beta + 32 \beta^2 \\
    & 0  > \alpha \left(16 \alpha -24\right) + \beta\left(48 \alpha -32 + 32 \beta \right) \\
    &\alpha \left(24 - 16 \alpha\right)  >  \beta\left(48 \alpha -32 + 32 \beta \right). \\
\end{align}
\end{proof}

\section{Probabilities Ordering}

\subsection{Precision-Recall: general expression.}

\begin{proof}[Proof of \Cref{th:general-prec-rec-correspondance}]
\label{proof_th:general-prec-rec-correspondance}
Under \Cref{assump-infinite-sample}, we suppose we have a trained decision tree (weighted or not) $\hat{m}_{k}$ operating at an output probability threshold $t$.

We have,
\begin{align}
    \mathds{P}\left(Y=1,\hat{Y}=1\right) &= \mathds{E}\left[\mathds{1}_{\{Y=1\}} \mathds{1}_{\{\hat{Y}=1\}}\right] \\
    &= \mathds{E}\left[\mathds{1}_{\{Y=1\}}\mathds{1}_{\{\hat{m}_{k}(X)=1\}}\right] \\
    &= \mathds{E}\left[\mathds{E}\left[\mathds{1}_{\{Y=1\}}\mathds{1}_{\{\hat{m}_{k}(X)=1\}}| X\right] \right] \\
    &= \mathds{E}\left[\mu\left(X\right)\mathds{1}_{\{\hat{m}_{k}(X)=1\}}\right].
\end{align}

\begin{align}
    \mathds{P}\left(Y=0,\hat{Y}=1\right) &= \mathds{E}\left[(1-\mu\left(X\right))\mathds{1}_{\{\hat{m}_{k}(X)=1\}}\right] \\
    &= \mathds{E}\left[\mathds{1}_{\{\hat{m}_{k}(X)=1\}}\right] - \mathds{E}\left[\mu\left(X\right)\mathds{1}_{\{\hat{m}_{k}(X)=1\}}\right].
\end{align}

\begin{align}
    \mathds{P}\left(Y=1,\hat{Y}=0\right) &= \mathds{E}\left[\mu\left(X\right)\mathds{1}_{\{\hat{m}_{k}(X)=0\}}\right] \\
    &= \mathds{E}\left[\mu\left(X\right)\left(1-\mathds{1}_{\{\hat{m}_{k}(X)=1\}}\right)\right] \\
    &= \mathds{E}\left[\mu\left(X\right)\right] - \mathds{E}\left[\mu\left(X\right)\mathds{1}_{\{\hat{m}_{k}(X)=1\}}\right].
\end{align}

Thus we have,
\begin{align}
    Precision_{s} &= \frac{\mathds{E}\left[\mu\left(X\right)\mathds{1}_{\{\hat{m}_{k}(X)=1\}}\right]}{\mathds{E}\left[\mu\left(X\right)\mathds{1}_{\{\hat{m}_{k}(X)=1\}}\right] + \mathds{E}\left[\mathds{1}_{\{\hat{m}_{k}(X)=1\}}\right] - \mathds{E}\left[\mu\left(X\right)\mathds{1}_{\{\hat{m}_{k}(X)=1\}}\right]} \\
    &=\frac{\mathds{E}\left[\mu\left(X\right)\mathds{1}_{\{\hat{m}_{k}(X)=1\}}\right]}{\mathds{E}\left[\mathds{1}_{\{\hat{m}_{k}(X)=1\}}\right]}.
\end{align}

\begin{align}
    &Recall_{s} \\
    &=\frac{\mathds{E}\left[\mu\left(X\right)\mathds{1}_{\{\hat{m}_{k}(X)=1\}}\right]}{\mathds{E}\left[\mu\left(X\right)\mathds{1}_{\{\hat{m}_{k}(X)=1\}}\right] + \mathds{E}\left[\mu\left(X\right)\right] - \mathds{E}\left[\mu\left(X\right)\mathds{1}_{\{\hat{m}_{k}(X)=1\}}\right]} \\
    &= \frac{\mathds{E}\left[\mu\left(X\right)\mathds{1}_{\{\hat{m}_{k}(X)=1\}}\right]}{\mathds{E}\left[\mu\left(X\right)\right]} \; \;  \left(\mathds{E}\left[\mu\left(X\right)\right] = \mathds{P}(Y=1) > 0  \right).
\end{align}

Finally we have,
\begin{align}
    Precision_{\hat{m}_{k}, t} = Recall_{\hat{m}_{k}, t} \times \frac{\mathds{E}\left[\mu\left(X\right)\right]}{\mathds{E}\left[\mathds{1}_{\{\hat{m}_{k}(X)=1\}}\right]}.
\end{align}

\begin{align}
    Precision_{\hat{m}_{k}, t} =  \frac{Recall_{\hat{m}_{k}, t} \times \mathds{P}\left(Y=1\right)}{\mathds{P}\left(\hat{m}_{k}(X)=1\right)}.
\end{align}
\end{proof}

\subsection{Univariate with linear regression function: depth=\texorpdfstring{$k$}{k}}
\label{sec-proof-univariate-linear}

\begin{proof}[Proof of \Cref{th:depthk-metrics}]
\label{prrof_th:depthk-metrics}

% \begin{figure}[h]
%     \centering
%     \includegraphics[width=0.8\linewidth]{images/dessin.pdf}
%     \caption{Caption}
%     \label{fig:2}
% \end{figure}
We assume that trees are grown to a maximum depth $k \in \mathbb{N}$, containing $2^k - 1$ internal split nodes. Let $\{\Myt_p^{h}\}_{h=1}^{2^k - 1}$ and $\{\Myt_p^{\star h}\}_{h=1}^{2^H - 1}$ denote the collections of split values for the standard and weighted trees, respectively, indexed level-by-level from left to right. We reorder these split values monotonically via permutations $\sigma$ and $\sigma^\star$ such that $\Myt_p^{\sigma(1)} \le \Myt_p^{\sigma(2)} \le \dots \le \Myt_p^{\sigma(2^k - 1)} \in [0,1]$ for the standard tree, and $\Myt_p^{ \sigma^\star(1)} \le \dots \le \Myt_p^{ \sigma^\star(2^k - 1)} \in [0,1]$ for the weighted tree. Let $t \in [0, 1]$ denote the decision threshold used by a CART tree to predict $\hat{Y} = 1$ within each leaf cell.

For simplicity, let $\Myt_1 \le \Myt_2 \le \dots \le \Myt_{2^k - 1}$ denote the ordered split values of a given tree (weighted or unweighted) and $A_1,A_2 \hdots A_{2^{k}}$ their associated ordered leaves. 
Under \Cref{assump-infinite-sample} and \Cref{assump-1}, for any tree of maximum depth $H \in \mathbb{N}$, we have:

\begin{align}
    \hat{m}_{k}(x) = \left\{ 
        \begin{array}{ll}
            1 & \text{if} \; t \in \left[0, \mathds{E}[\tilde{Y}|\tilde{X} \in A_{1}]\right] \\
            \mathds{1}_{\{x \geq \Myt_1\}} & \text{if} \; t \in \left[\mathds{E}[\tilde{Y}|\tilde{X} \in A_{1}], \mathds{E}[\tilde{Y}|\tilde{X} \in A_{2}]\right] \\
            \mathds{1}_{\{x \geq \Myt_2\}} & \text{if} \; t \in \left[\mathds{E}[\tilde{Y}|\tilde{X} \in A_{2}], \mathds{E}[\tilde{Y}|\tilde{X} \in A_{3}]\right] \\
            \vdots &\vdots\\
            \mathds{1}_{\{x \geq \Myt_{2^{k}-1}\}} & \text{if} \; t \in \left[\mathds{E}[\tilde{Y}|\tilde{X} \in A_{2^{k}-1}], \mathds{E}[\tilde{Y}|\tilde{X} \in A_{2^{k}}]\right] \\
            0 & \text{if} \; t \in \left[\mathds{E}[\tilde{Y}|\tilde{X} \in A_{2^{k}}], 1\right].
        \end{array}
    \right.
\end{align}

\paragraph{None tree} :
We have,
\begin{align}
    &\mathds{E}[Y|X \in A_{i}] \\&= \mathds{E}[\mathds{E}[Y|X]|X \in A_{i}] && (\text{Tower property}) \\
    & = \mathds{E}[P(Y=1|X)| X \in A_{i}] \\
    & = \mathds{E}[\alpha X + \beta| X \in A_{i}] \\
    & = \alpha \mathds{E}[ X| X \in A_{i}]+ \beta   \\
    &= \alpha \left(\frac{\Myt_{i-1}+\Myt_{i}}{2}\right) + \beta. && ( \sim \mathcal{U}([\Myt_{i-1},\Myt_{i}]))
\end{align}
and we know that the None tree iteratively cuts in the middle of the sub-segments. Thus,

\begin{align}
    \hat{m}_{k}(x) = \left\{ 
        \begin{array}{ll}
            1 & \text{if} \; t \in \left[0, \frac{1}{2^{k+1}}\alpha + \beta\right] \\
            \mathds{1}_{\{x \geq \frac{1}{2^{k}}\}} & \text{if} \; t \in \left[\frac{1}{2^{k+1}}\alpha + \beta, \frac{3}{2^{k+1}}\alpha + \beta\right] \\
            \mathds{1}_{\{x \geq \frac{2}{2^{k}}\}} & \text{if} \; t \in \left[\frac{3}{2^{k+1}}\alpha + \beta, \frac{5}{2^{k+1}}\alpha + \beta\right] \\
            \vdots &\vdots\\
            \mathds{1}_{\{x \geq \frac{2^{\ell:}-1}{2^{k}}\}} & \text{if} \; t \in \left[\frac{2\times(\ell-1)+1}{2^{k+1}}\alpha + \beta, \frac{2\times(\ell)+1}{2^{k+1}}\alpha + \beta\right] \\
            \vdots & \vdots\\
            \mathds{1}_{\{x \geq \frac{2^{k}-1}{2^{k}}\}} & \text{if} \; t \in \left[\frac{2\times(k-1-1)+1}{2^{k+1}}\alpha + \beta, \frac{2\times(k-1)+1}{2^{k+1}}\alpha + \beta\right] \\
            0 & \text{if} \; t \in \left[\mathds{E}[\frac{2\times(k-1)+1}{2^{k+1}}\alpha + \beta, 1\right].
        \end{array}
    \right.
\end{align}
\paragraph{Metrics}
When evaluated on $(X,Y)$ distribution, we have for all trees $\hat{m}_{k}(x)$,

\begin{align}
    \mathds{P}({Y}=1, \hat{{Y}}=1) &=\alpha\mathds{E}\left[\mathds{1}_{\{\hat{m}_{k}(X)=1\}} {X}  \right] + \beta \mathds{P}(\hat{m}_{k}(x)=1), \\
    \mathds{P}({Y}=0, \hat{{Y}}=1) &=\mathds{P}(\hat{m}_{k}({X})=1) - \alpha\mathds{E}\left[\mathds{1}_{\{\hat{m}_{k}({X})=1\}} {X}  \right] - \beta \mathds{P}(\hat{m}_{k}({X})=1) \\
    \mathds{P}({Y}=1, \hat{{Y}}=0),
    &=\alpha\mathds{E}\left[\mathds{1}_{\{\hat{m}_{k}({X})=0\}} {X}  \right] + \beta \mathds{P}(\hat{m}_{k}({X})=0).
\end{align}
Thus, the metric expressions in function of the split value $\Myt$ is the same for unweighted and weighted trees. Besides, under our main assumptions, we have 
\begin{align}
    \mathds{E}\left[\mathds{1}_{\{x\geq t\}} {X}  \right] &=  \int_{\Myt}^{1} x \mathrm{d}x = \frac{1}{2}(1-\Myt^2), \\
    \mathds{E}\left[\mathds{1}_{\{x < \Myt\}} {X}  \right] &= \int_{0}^{\Myt} x \mathrm{d}x = \frac{1}{2}\Myt^2,\\
    \mathds{P}(x\geq \Myt) &= 1-\Myt, \\
    \mathds{P}(x < \Myt) &= \Myt.
\end{align}

Thus,
    \begin{align}
    Precision = \left\{ 
        \begin{array}{ll}
            \alpha/2 + \beta & \text{if} \; t \in \left[0, \mathds{E}[\tilde{Y}|\tilde{X} \in A_{1}]\right] \\
            \frac{\alpha}{2} (1+\Myt_1) + \beta & \text{if} \; t \in \left[\mathds{E}[\tilde{Y}|\tilde{X} \in A_{1}], \mathds{E}[\tilde{Y}|\tilde{X} \in A_{2}]\right] \\
             \frac{\alpha}{2} (1+\Myt_2) + \beta & \text{if} \; t \in \left[\mathds{E}[\tilde{Y}|\tilde{X} \in A_{2}], \mathds{E}[\tilde{Y}|\tilde{X} \in A_{3}]\right] \\
            \vdots &\vdots\\
            \frac{\alpha}{2} (1+\Myt_{h}) + \beta  & \text{if} \; t \in \left[\mathds{E}[\tilde{Y}|\tilde{X} \in A_{h}], \mathds{E}[\tilde{Y}|\tilde{X} \in A_{h+1}]\right] \\
            \vdots &\vdots\\
            \frac{\alpha}{2} (1+\Myt_{2^{H}-1}) + \beta  & \text{if} \; t \in \left[\mathds{E}[\tilde{Y}|\tilde{X} \in A_{2^{H}-1}], \mathds{E}[\tilde{Y}|\tilde{X} \in A_{2^{H}}]\right] \\
            NA & \text{if} \; t \in \left[\mathds{E}[\tilde{Y}|\tilde{X} \in A_{2^{H}}], 1\right].
        \end{array}
    \right.
\end{align}

\begin{align}
    Recall = \left\{ 
        \begin{array}{ll}
            1 & \text{if} \; t \in \left[0, \mathds{E}[\tilde{Y}|\tilde{X} \in A_{1}]\right] \\
            \frac{\frac{\alpha}{2}(1-t_1^{2}) + \beta (1-\Myt_1)}{ \frac{\alpha}{2} + \beta} & \text{if} \; t \in \left[\mathds{E}[\tilde{Y}|\tilde{X} \in A_{1}], \mathds{E}[\tilde{Y}|\tilde{X} \in A_{2}]\right] \\
            \frac{\frac{\alpha}{2}(1-\Myt_2^{2}) + \beta (1-\Myt_2)}{ \frac{\alpha}{2} + \beta} & \text{if} \; t \in \left[\mathds{E}[\tilde{Y}|\tilde{X} \in A_{2}], \mathds{E}[\tilde{Y}|\tilde{X} \in A_{3}]\right] \\
            \vdots &\vdots\\
            \frac{\frac{\alpha}{2}(1-\Myt_h^{2}) + \beta (1-t_h)}{ \frac{\alpha}{2} + \beta}  & \text{if} \; t \in \left[\mathds{E}[\tilde{Y}|\tilde{X} \in A_{h}], \mathds{E}[\tilde{Y}|\tilde{X} \in A_{h+1}]\right] \\
            \vdots &\vdots\\
            \frac{\frac{\alpha}{2}(1-\Myt_{2^{H}-1}^{2}) + \beta (1-\Myt_{2^{H}-1})}{ \frac{\alpha}{2} + \beta}  & \text{if} \; t \in \left[\mathds{E}[\tilde{Y}|\tilde{X} \in A_{2^{H}-1}], \mathds{E}[\tilde{Y}|\tilde{X} \in A_{2^{H}}]\right] \\
            0 & \text{if} \; t \in \left[\mathds{E}[\tilde{Y}|\tilde{X} \in A_{2^{H}}], 1\right].
        \end{array}
    \right.
\end{align}

\end{proof}

\begin{proof}[Proof of \Cref{th:depthk-prec-rec-correspondance}]
Here we place ourselves under \Cref{assump-infinite-sample} and \Cref{assump-1}, using the previous result and notations of  \Cref{th:depthk-metrics} demonstrated in \Cref{prrof_th:depthk-metrics}. Let $t \in \mathds{R}$ be a threshold on output probabilities.

We assume to be given an ordered split position $\Myt_i$, such that
%where $g^{+}(\hat{m}_{k},t)$ yields the smallest split point $\Myt_p^{(\sigma(i))}$ such that 
$\mathds{E}[Y \mid \Myt_{i-1} \leq X \leq \Myt_{i}] \leq s \leq \mathds{E}[Y \mid \Myt_{i} \leq X \leq \Myt_{i+1}]$.
Thus, we denote $Precision_{\hat{m}_{k}, t}$ denotes the precision of decision tree $\hat{m}_{k}$ at probability threshold $t$.
We have, from \Cref{th:depthk-metrics}, that:
\label{proof_th:depthk-prec-rec-correspondance}
    \begin{align}
    &Recall_{\hat{m}_{k}, t} = \frac{\frac{\alpha}{2}(1-\Myt_i^{2}) + \beta (1-\Myt_i)}{ \frac{\alpha}{2} + \beta} \\
    &\Myt_i(\frac{\alpha}{2}\Myt_i+\beta) + (\frac{\alpha}{2}+ \beta)(Recall_{\hat{m}_{k}, t} - 1) = 0.
\end{align}
The only positive solution to this equation is : $\Myt_i  = \frac{-\beta + \sqrt{\beta^2 + 2\alpha(\frac{\alpha}{2}+ \beta)(1-Recall_{\hat{m}_{k}, t}(\Myt_i)})}{\alpha}$. 

Besides, from \Cref{th:depthk-metrics}, we also know that, 
\begin{align}
    Precision_{\hat{m}_{k}, t} =  \frac{\alpha}{2} \left( 1 +s_i\right) + \beta.
\end{align}
Thus, we obtain:
\begin{align}
    Precision_{\hat{m}_{k}, t}(R_i) = \frac{\alpha}{2}\left( 1 + \frac{-\beta + \sqrt{\beta^2 + 2\alpha(\frac{\alpha}{2}+ \beta)(1-Recall_{\hat{m}_{k}, t})}}{\alpha}\right) +\beta.
\end{align}
\end{proof}

%%%%%%%%%%%%%%%%%%%%%%
%%%%%%%%%%%%%%%%%%%%%%
%%%%%%%%%%%%%%%%%%%%%%
\subsection{Results within univariate increasing regression function setting}

\begin{proof}[Proof of \Cref{th:increasing-prec-rec-correspondance}]
\label{proof_th:increasing-prec-rec-correspondance}

Let $\mathcal{A}$ be the set of leaves/cells associated to the tree $\hat{m}_{k}$. Let $\mathcal{A}_{t}$ be the set of cells that are associated to an expectation higher than the threshold $t$.

We have,
\begin{align}
    \mathds{E}\left[ \mathds{1}_{\{\hat{m}_{k}(X)=1\}}\right] = \mathds{P}\left(\hat{m}_{k}(X)=1\right) = \sum_{A_t \in \mathcal{A}_{t}} \int_{A_t} f_{X}(x) \mathrm{d}x =\sum_{A_t \in \mathcal{A}_{t}} \mathds{P}\left(X \in A_t \right).
\end{align}

\begin{align}
    \mathds{E}\left[ \mu(X) \mathds{1}_{\{\hat{m}_{k}(X)=1\}}\right] 
    &=\sum_{A_t \in \mathcal{A}_{t}} \int_{A_t} \mu(x) f_{X}(x)dx.
\end{align}

Thus,
\begin{align}
    Precision = \frac{\sum_{A_t \in \mathcal{A}_{t}} \int_{A_t} \mu(x) f_{X}(x)dx}{\sum_{A_t \in \mathcal{A}_{t}} \mathds{P}\left(X \in A_t \right)},
\end{align}
and, 
\begin{align}
    Recall = \frac{\sum_{A_t \in \mathcal{A}_{t}} \int_{A_t} \mu(x) f_{X}(x)dx}{\sum_{A_t \in \mathcal{A}} \int_{A_t} \mu(x) f_{X}(x)dx}.
\end{align}
%\as{Copules.}

Grant \Cref{assump-infinite-sample} and  \Cref{assump:increasing-regfunc}.
We assume that trees are grown to a maximum depth $k \in \mathbb{N}$, containing $2^k - 1$ internal split nodes. Let $\{\Myt_p^{h}\}_{h=1}^{2^k - 1}$ and $\{\Myt_p^{\star h}\}_{h=1}^{2^k - 1}$ denote the collections of split values for the standard and weighted trees, respectively, indexed level-by-level from left to right. We reorder these split values monotonically via permutations $\sigma$ and $\sigma^\star$ such that $\Myt_p^{\sigma(1)} \le \Myt_p^{\sigma(2)} \le \dots \le \Myt_p^{\sigma(2^k - 1)} \in [0,1]$ for the standard tree, and $\Myt_p^{\sigma^\star(1)} \le \dots \le \Myt_p^{ \sigma^\star(2^k - 1)} \in [0,1]$ for the weighted tree. Let $t \in [0, 1]$ denote the decision threshold used by a CART tree to predict $\hat{Y} = 1$ within each leaf cell.

\Cref{assump:increasing-regfunc} leads to the fact that given a threshold $t \in \mathds{R}$ on output probabilities, the tree prediction takes the following form:
\begin{align}
    \hat{m}_{k}(x) = \mathds{1}_{\{x \geq g^{+}(\hat{m}_{k},t)\}},
\end{align}
where $g^{+}(\hat{m}_{k},t)$ yields the smallest split point $\Myt_p^{i}$ such that $\mathds{E}[Y \mid \Myt_p^{\sigma(i-1)} \leq X \leq \Myt_p^{\sigma(i)}] \leq s \leq \mathds{E}[Y \mid \Myt_p^{\sigma(i)} \leq X \leq \Myt_p^{\sigma(i+1)}]$. In the following, we denote by $(\hat{m}_{k},t)$, a tree associated to a threshold $t$ on output probabilities.

We assume that the threshold $t$ is fixed such that we are not in a pathological case, i.e. the tree does not systematically predict $Y=1$ or $Y=0$.

%For simplicity, let $\Myt_1 \le \Myt_2 \le \dots \le \Myt_{2^H - 1}$ denote the ordered split thresholds of a given tree (weighted or unweighted). 

We have,
\begin{align}
    \mathds{E}\left[ \mathds{1}_{\{D_{\mathbf{t}}(X)=1\}} \right] &= \mathds{E}\left[ \mathds{1}_{\{X \geq g^{+}(\hat{m}_{k},t)\}} \right] = \mathds{P}(X \geq g^{+}(\hat{m}_{k},t)).
\end{align}

We also have,
\begin{align}
    Recall_{\hat{m}_{k}, t} &= \frac{\mathds{E}\left[ \mathds{1}_{\{X\geq g^{+}(\hat{m}_{k},t)\}} \mu(X)\right] }{\mathds{P}(Y=1)} \\
    &= \frac{ \int_{g^{+}(\hat{m}_{k},t)}^{b} \mu(X)f_{X}(x)\mathrm{d}x}{\mathds{P}(Y=1)} \\
    &= \frac{F_{Y}(b)-F_{Y}(g^{+}(\hat{m}_{k},t))}{\mathds{P}(Y=1)}.
\end{align}
Thus,
\begin{align}
    g^{+}(\hat{m}_{k},t) = F_{Y}^{-1}\left[ F_{Y}(b)- Recall_{\hat{m}_{k}, t} \mathds{P}(Y=1) \right].
\end{align}

On the other hand,
\begin{align}
    Precision_{\hat{m}_{k}, t} &= \frac{\mathds{E}\left[ \mathds{1}_{\{X\geq g^{+}(\hat{m}_{k},t)\}} \mu(X)\right]}{\mathds{E}\left[ \mathds{1}_{\{D_{g^{+}(\hat{m}_{k},t)}(X)=1\}} \right]} \\
    &= \frac{\int_{g^{+}(\hat{m}_{k},t)}^{b} \mu(X)f_{X}(x)\mathrm{d}x}{ \mathds{P}(X\geq g^{+}(\hat{m}_{k},t))}.
\end{align}

Finally,
\begin{align}
    Precision_{\hat{m}_{k}, t} &= \frac{\int_{F_{Y}^{-1}\left[ F_{Y}(b)- Recall_{\hat{m}_{k}, t} \mathds{P(Y=1)} \right]}^{b} \mu(X)f_{X}(x)\mathrm{d}x}{ \mathds{P}(X\geq F^{-1}\left[ F_{Y}(b)- Recall_{\hat{m}_{k}, t} \mathds{P}(Y=1) \right])} \\
    &= \frac{F_{Y}(b) - F_{Y}\left( F_{Y}^{-1}\left[ F_{Y}(b)- Recall_{\hat{m}_{k}, t} \mathds{P}(Y=1) \right] \right)}{ \mathds{P}(X\geq F_{Y}^{-1}\left[ F_{Y}(b)- Recall_{\hat{m}_{k}, t} \mathds{P}(Y=1) \right])} \\
    &= \frac{Recall_{\hat{m}_{k}, t} \mathds{P}(Y=1)}{ \mathds{P}(X\geq F_{Y}^{-1}\left[ F_{Y}(b)- Recall_{\hat{m}_{k}, t} \mathds{P}(Y=1) \right])}.
\end{align}
\end{proof}

\end{document}